%% file: main.tex
\documentclass{article}

\usepackage[T1]{fontenc}

\usepackage{amssymb}
\usepackage{amsmath, amsthm}
\usepackage{enumitem}

\usepackage{microtype}
\usepackage{graphicx}
\usepackage{subfigure}
\usepackage{multirow} 
\usepackage{booktabs} 
\usepackage{makecell}
\usepackage[dvipsnames,table]{xcolor}

\usepackage[accepted]{icml2026}

\usepackage{mathtools}
\usepackage{aliascnt}

\theoremstyle{plain}

\newaliascnt{proposition}{theorem}

\aliascntresetthe{proposition}
\newaliascnt{lemma}{theorem}
\newtheorem{lemma}[lemma]{Lemma}
\aliascntresetthe{lemma}
\newaliascnt{corollary}{theorem}

\aliascntresetthe{corollary}
\newtheorem{example}{Example}

\theoremstyle{definition}
\newaliascnt{definition}{theorem}

\aliascntresetthe{definition}
\newaliascnt{assumption}{theorem}

\aliascntresetthe{assumption}

\theoremstyle{plain}
\newaliascnt{remark}{theorem}
\newtheorem{remark}[remark]{Remark}
\aliascntresetthe{remark}

\usepackage{thmtools}

\usepackage[textsize=tiny]{todonotes}

\usepackage{etoc}
\usepackage{hyperref}
\usepackage[capitalize,noabbrev]{cleveref}
\crefname{theorem}{Theorem}{Theorems}
\Crefname{theorem}{Theorem}{Theorems}
\crefname{proposition}{Proposition}{Propositions}
\Crefname{proposition}{Proposition}{Propositions}
\crefname{lemma}{Lemma}{Lemmas}
\Crefname{lemma}{Lemma}{Lemmas}
\crefname{corollary}{Corollary}{Corollaries}
\Crefname{corollary}{Corollary}{Corollaries}
\crefname{example}{Example}{Examples}
\Crefname{example}{Example}{Examples}
\crefname{definition}{Definition}{Definitions}
\Crefname{definition}{Definition}{Definitions}
\crefname{assumption}{Assumption}{Assumptions}
\Crefname{assumption}{Assumption}{Assumptions}
\crefname{remark}{Remark}{Remarks}
\Crefname{remark}{Remark}{Remarks}
\hypersetup{
  colorlinks,
  citecolor=cyan,
  linkcolor=RedOrange,
  urlcolor=magenta}

\input{defs.tex}
\icmltitlerunning{Robust Bayes--Assisted Conformal Prediction}

\begin{document}
\twocolumn[
\icmltitle{Robust Bayes--Assisted Conformal Prediction}

\icmlsetsymbol{equal}{*}

\begin{icmlauthorlist}
\icmlauthor{Kianoosh Ashouritaklimi}{equal,yyy}
\icmlauthor{Stefano Cortinovis}{equal,yyy}
\icmlauthor{François Caron}{yyy}

\end{icmlauthorlist}

\icmlaffiliation{yyy}{Department of Statistics, University of Oxford}

\icmlcorrespondingauthor{}{ashouritaklimi@stats.ox.ac.uk}
\icmlcorrespondingauthor{}{cortinovis@stats.ox.ac.uk}

\icmlkeywords{Machine Learning, ICML}
\vskip 0.3in
]

\printAffiliationsAndNotice{\icmlEqualContribution} 

\begin{abstract}
Bayes--assisted conformal prediction combines the strengths of Bayesian modelling with exact, distribution--free frequentist coverage guarantees.
Although conformal validity is preserved even when the Bayesian working model (BWM) is misspecified, the size of the resulting prediction sets can degrade substantially when the prior is poorly aligned with the observed data.
We address this limitation by introducing \textbf{RoBAS} (\textbf{Ro}bust \textbf{B}ayes-\textbf{A}ssisted \textbf{S}hrinkage): a Bayes--assisted framework for constructing robust nonconformity scores, with two instantiations: one induced by a heavy--tailed BWM, and a closed--form empirical Bayes shrinkage score.
The resulting scores adapt to the quality of the working information encoded in the prior:
when this information is reliable, they exploit it to produce efficient prediction sets;
when it is weak or inaccurate, they revert to the Distance--To--Average (DTA) score, a robust non--informative baseline.
We evaluate the proposed scores on tabular and image regression tasks where the training distribution may differ from the calibration and test distributions, while the calibration and test data themselves remain exchangeable.
We find that they are competitive with widely used scores in the absence of such shift, while substantially reducing interval widths in shifted settings.
\end{abstract}

\section{Introduction}
\label{sec:intro}

\looseness=-1
Conformal prediction (CP) provides a powerful, distribution--free framework for constructing prediction sets with valid finite--sample frequentist coverage \citep{Vovk2005}. The efficiency of these sets, commonly measured by their expected size (e.g., expected width in regression with a scalar response), depends crucially on the choice of the nonconformity score, which quantifies how unusual a candidate outcome is relative to the observed data. Efficient prediction sets are essential for practical decision making, since overly conservative sets provide limited actionable information. Consequently, a substantial body of work has focused on designing more efficient nonconformity scores \citep{romano2019conformalized, sadinle2019least, romano2020classification, sesia2021conformal, chernozhukov2021distributional, guan2023localized, seedat2023improving,xie2024boosted, kiyani2024length}, such as by incorporating estimates of model uncertainty \citep{romano2019conformalized, guan2023localized}.

\looseness=-1
Bayes--assisted conformal prediction \citep{Vovk2005, Wasserman2011, Fong2021, Hoff2023, Bersson2024, deliu2025interplay} provides a principled framework for defining nonconformity scores that combine the strengths of Bayesian modelling with the exact frequentist coverage guarantees of conformal prediction. By using the negative posterior predictive density of a \emph{Bayesian working model} (BWM) as the nonconformity score, this approach yields highly efficient prediction sets\footnote{More precisely, under suitable conditions, the prediction sets have smaller average Lebesgue measure than other prediction sets with the same coverage \citep{Hoff2023}.} when the prior information is well aligned with the data--generating process \citep{Hoff2023}.
The ability to incorporate prior knowledge makes this framework particularly useful in settings with limited data, such as small area estimation \citep{Bersson2024, bersson2025frequentist}, where the BWM can borrow strength from external or structural information that is difficult to exploit with purely data--driven methods.

\looseness=-1
The conformal validity of this approach holds \emph{even} when the BWM is misspecified. However, its efficiency depends critically on the accuracy of the prior information in the model: if the data strongly disagrees with the prior, the resulting prediction sets can grow substantially (e.g., see \cref{fig:synthetic} and \citealp[Figure~4(b)--(c)]{Bersson2024}).
In such cases, the benefits of Bayesian modelling are effectively lost.

\looseness=-1
To address this, we propose \textbf{RoBAS} (\textbf{Ro}bust \textbf{B}ayes-\textbf{A}ssisted \textbf{S}hrinkage): a Bayes--assisted framework for constructing nonconformity scores that exploit useful working information when it is reliable, while remaining robust when it is not.
Unlike previous Bayes--assisted approaches that specify a BWM for the full data--generating process \citep{Fong2021, Hoff2023, Bersson2024, bersson2025frequentist, cbma}, we place a BWM \emph{only} on the residuals of an underlying predictive model, which itself may or may not be Bayesian.
This avoids the requirement of specifying priors over high--dimensional parameter spaces, which can make the inclusion of accurate prior information more challenging \citep{wenzel2020good, fortuin2021bayesian, fortuin2022priors}.

\looseness=-1
The key idea is to design the residual--level BWM so that the resulting score adapts to the quality of the underlying predictor.
When the predictor is accurate and its residuals are approximately centred at zero, the working model is well aligned with the data and yields efficient prediction sets.
However, when the predictor is inaccurate and its residuals are centred far from zero, the score automatically reverts to the robust Distance--To--Average (DTA) nonconformity score, which we show is a natural choice in this setting.
This behaviour ensures that the resulting prediction sets maintain a stable size even when the prior is inaccurate, resolving a key limitation of other Bayes--assisted methods whose efficiency can deteriorate under prior--data conflict.

\looseness=-1
We develop this idea through two complementary instantiations.
First, we introduce a hierarchical residual BWM with heavy--tailed priors, which enjoys the desired robustness property and yields a Bayes--assisted score that provably approaches DTA under strong prior--data conflict.
Second, we derive a computationally tractable empirical Bayes version that retains the same qualitative shrinkage behaviour while admitting a simple closed--form nonconformity score.
This closed--form score is substantially cheaper to evaluate than Bayes--assisted approaches that require averaging over many MCMC samples in the parameter space \citep{Fong2021, cbma} and yields provably interval--valued prediction sets.
Finally, we provide a grid--free procedure for computing prediction intervals, which avoids the tuning challenges inherent in grid--based approaches.

\looseness=-1
Empirically, we evaluate the proposed scores on tabular and image regression tasks where the training distribution may differ from the calibration and test distributions, while the calibration and test data themselves remain exchangeable.
We find that RoBAS is competitive with widely used nonconformity scores in the absence of such shift, while substantially reducing interval widths in shifted settings.

To summarise, our contributions are as follows:
\begin{itemize}
    \setlength\itemsep{0em}
    \item We introduce \textbf{RoBAS}, a Bayes--assisted framework for constructing robust residual-based nonconformity scores that exploit accurate working information while protecting against prior--data conflict.
    \item We derive two instantiations of RoBAS: a heavy--tailed residual BWM that induces the desired robustness mechanism, and a closed--form empirical Bayes shrinkage score that preserves this behaviour while yielding provably interval--valued prediction sets.
    \item We evaluate the proposed scores on synthetic, tabular, and image regression tasks, showing that they outperform existing methods under training--to--calibration/test distribution shift, while remaining competitive in standard settings.

\end{itemize}

\section{Background}
\label{sec:background}

In this section, we provide a brief overview of conformal prediction (\S\ref{sec:conformal_back}--\ref{sec:residual_based_scores}) and Bayes--assisted conformal prediction (\S\ref{sec:bayes_conformal_back}), with additional details given in \cref{app:cp_variants}.
Throughout, we denote random variables and their observed values using capital and lowercase letters, respectively.


\subsection{Conformal Prediction}
\label{sec:conformal_back}

\looseness=-1
Let $(\bZ_i)_{i=1}^{n+1}$ be an exchangeable sequence of random variables from some unknown distribution, where $\bZ_i=(\bX_i,Y_i)\in\calZ=\calX\times\calY$ is a covariate/response pair, with $\calX \subseteq \bbR^d$ and $\calY \subseteq \bbR$, for $d \geq 1$. Let $\alpha\in [0,1)$ be a user--defined error rate. Suppose we observe a dataset $\bz_{1:n}=\{\bz_i\}_{i=1}^n$ of the first $n$ covariate/response pairs, as well as the covariate $\bx_{n+1}$ of a new observation.  Conformal prediction \citep{Vovk2005} allows us to obtain a prediction set
$\CI(\bx_{n+1};\bz_{1:n})\subseteq\calY$ for the unknown response $Y_{n+1}$ that satisfies the $(1-\alpha)$--frequentist marginal coverage guarantee
\begin{equation}
\label{eqn:freq_guarantee}
\mathbb{P}(Y_{n+1} \in \CI(\bX_{n+1};\bZ_{1:n})) \geq 1-\alpha,
\end{equation}
where the probability is taken over the exchangeable random sequence $(\bZ_i)_{i=1}^{n+1}$.

The prediction set $\CI$ is constructed via a \emph{nonconformity score function} $\ncsfun:\calZ\times\calZ^n \to \bbR$, which measures how unusual an observation $\bz_{n+1} = (\bx_{n+1}, y_{n+1})$ is relative to the observed data $\bz_{1:n}$, with higher values indicating a more ``unusual'' observation.
CP determines whether a candidate $y \in \calY$ belongs to $\CI(\bx_{n+1};\bz_{1:n})$ by first computing the augmented nonconformity scores $\{\ncs_i(y)\}_{i=1}^{n+1}$:
\begin{align}
\ncs_i(y)&=\ncsfun(\bz_i, \bz_{1:n,-i}\cup\{(\bx_{n+1},y)\}), \ \ i=1,\ldots,n, \notag \\
\ncs_{n+1}(y)&=\ncsfun((\bx_{n+1}, y), \bz_{1:n}),\label{eq:conformityscoresfull}
\end{align}
\looseness=-1
where $\bz_{1:n,-i}=\bz_{1:n}\backslash \{\bz_i\}$.

The candidate $y$ is then tested by comparing the rank of $\ncs_{n+1}(y)$ among the augmented scores.
The corresponding \emph{conformal $p$-value} is
\begin{equation}
\label{eqn:conformal_p_value}
\rho(y) = \frac{1}{n+1} \sum_{i=1}^{n+1} \mathbb{I}\{\ncs_i(y) \ge \ncs_{n+1}(y)\},
\end{equation}
and $y$ is accepted if $\rho(y) > \alpha$.
The full prediction set is
\begin{equation}
\label{eqn:cp_interval}
\CI(\bx_{n+1};\bz_{1:n}) = \{ y \in \calY \mid \rho(y) > \alpha \}.
\end{equation}
When evaluated at the true response $Y_{n+1}$, the augmented nonconformity scores $(S_i(Y_{n+1}))_{i=1}^{n+1}$ are exchangeable, providing the set in \eqref{eqn:cp_interval} with the desired guarantee in \eqref{eqn:freq_guarantee}.


\subsection{Residual--Based Nonconformity Score Functions}
\label{sec:residual_based_scores}
\looseness=-1
We consider here the case where we have access to a fixed predictive model $f:\calX\to\calY$, typically trained on data independent of the calibration/test set.
In this setting, the nonconformity score function is often based on the residuals $r_i=y_i-f(\bx_i)$, $i=1,\ldots,n+1$, through a residual--based nonconformity score function $\rncsfun:\bbR\times\bbR^n\to \bbR$, such that
\begin{equation}
\label{eqn:nonconformity_score_based_on_residuals}
\ncsfun(\bz_{n+1},\bz_{1:n})=\rncsfun(r_{n+1},\br_{1:n}).
\end{equation}
The nonconformity scores \eqref{eq:conformityscoresfull} then reduce to
\begin{align}
\label{eq:conformityscoresres}
\ncs_i(y)&=\rncsfun(r_i, \br_{1:n,-i}\cup\{y-f(\bx_{n+1})\}), \ \ i=1,\ldots,n, \notag \\
\ncs_{n+1}(y)&=\rncsfun(y-f(\bx_{n+1}), \br_{1:n}),
\end{align}
where $\br_{1:n,-i}=\br_{1:n}\backslash \{r_i\}$.

\looseness=-1
Two common residual--based nonconformity scores are the \emph{Distance--To--Origin} (DTO) and \emph{Distance--To--Average} (DTA) scores:
\begin{align}
\rncsfun^{\DTO}(r_{n+1},\br_{1:n})&=|r_{n+1}|, \label{eq:DTOscore} \\
\rncsfun^{\DTA}(r_{n+1},\br_{1:n})&=|r_{n+1}-\bar r_n|,\label{eq:DTAscore}
\end{align}
where $\bar r_n=\frac{1}{n}\sum_{i=1}^n r_i$.
The DTO score treats zero--centred residuals as the reference, effectively \emph{trusting} that $f$ is (approximately) unbiased on the calibration/test distribution.
When this assumption is correct, especially with small calibration sets, DTO can be highly efficient because it avoids the extra variability introduced by estimating a centering term.
However, DTO is sensitive to systematic bias: if the residuals are shifted away from zero, $|r_{n+1}|$ is uniformly inflated and the resulting prediction sets can become unnecessarily large.
By contrast, DTA recentres the residuals by $\bar r_n$, making it translation--invariant and therefore more robust to mean shifts.
This robustness comes at the cost of estimating $\bar r_n$, which can be noisy for small $n$ and can widen prediction sets even when the true residual mean is close to zero.
This trade--off suggests that neither DTO nor DTA is uniformly preferable, motivating an adaptive score that interpolates between them by shrinking towards DTO when residuals are plausibly centred near zero and reverting toward DTA when the data indicate a substantial mean shift.

\subsection{Bayes--Assisted Conformal Prediction}
\label{sec:bayes_conformal_back}
\looseness=-1
While the validity of conformal prediction holds for any nonconformity score function, its efficiency -- the expected size of the resulting prediction sets -- depends critically on this choice.
\emph{Bayes--assisted conformal prediction} \citep{Vovk2005, Wasserman2011, Fong2021, Hoff2023, Bersson2024, deliu2025interplay} defines the score using a Bayesian working model (BWM).
In the standard formulation, a BWM is specified for the conditional data--generating process and the nonconformity score is taken to be the negative posterior predictive density:
\begin{align}
\label{eqn:bayes_assisted_score}
\ncsfun\left(\bz_{n+1}, \bz_{1:n}\right) &= -p\left( y_{n+1} \mid \bx_{n+1}, \bz_{1:n}\right) \\
&= -\int p(y_{n+1} \mid \bx_{n+1}, \theta) p\left(\theta \mid \bz_{1:n} \right) \ d\theta, \notag
\end{align}
%
%
where $\theta$ denotes the parameters of the BWM. Previous approaches \citep{Fong2021, cbma} typically require MCMC sampling to compute the posterior predictive, although certain BWMs admit closed--form solutions \citep{Bersson2024, bersson2025frequentist}.
To reduce the cost of repeated leave--one--out posterior predictive evaluations for each candidate $y$, the \emph{add--one--in} (AOI) importance sampling trick from \citet{Fong2021} is normally used.

The benefits of the Bayes--assisted approach are twofold.
First, the use of a BWM allows prior or side information to be incorporated into the nonconformity score, which is especially useful in small--data regimes \citep{Bersson2024}.
Second, it can be shown \citep{Hoff2023} that, when the BWM is well aligned with the data--generating process, the posterior-predictive score yields an efficient, \emph{Bayes--optimal} conformal procedure:
\begin{restatable}[{\citealp[Thm.~4.1]{Hoff2023}, Bayes--optimality; informal}]{theorem}{bayesOptimal}
    \label{thm:bayes_optimality}
    Let $P=\{P_\theta \mid \theta \in \Theta\}$ be a family of conditional probability distributions on $Y \mid \bX$, and let $\pi$ be a prior on $\Theta$.
    Given a conformal procedure $\mathcal{C}_\alpha$ for a specified error rate $\alpha$, let $\mathcal{R}_\pi(\CI)$ be the associated Bayes risk, defined as
    \begin{equation*}
        \mathcal{R}_\pi(\CI) = \int_{\Theta} \mathcal{R}_\theta(\mathcal{C}_\alpha)\,\pi(d\theta) = \int_{\Theta} \mathbb{E}_{P_\theta}\!\left[\lambda\{\mathcal C_\alpha(\bX)\}\right]\pi(d\theta),
    \end{equation*}
    where $\lambda$ is a volume measure on $\calY$.
    Then, under mild regularity conditions, the conformal procedure defined by the score \eqref{eqn:bayes_assisted_score} minimises Bayes risk among conformal procedures with equal or greater coverage.
\end{restatable}
\Cref{thm:bayes_optimality} implies that, when the BWM prior is accurate and correctly assigns probability mass to parameters describing the observed data, the CP procedure induced by the score \eqref{eqn:bayes_assisted_score} yields lower average set size than other scores.
However, as we will see later, efficiency can deteriorate substantially when the prior is inaccurate, motivating the robust Bayes--assisted construction in the next section.


\begin{figure*}[t]
    \centering
    \includegraphics[width=0.5\textwidth]{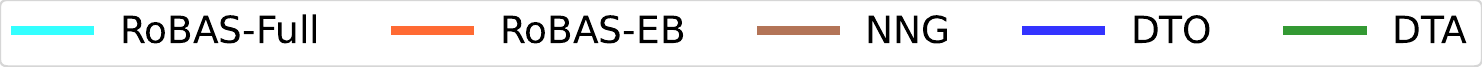}
    \vskip -5pt

    \setlength{\tabcolsep}{2pt} 
    \begin{tabular}{cccc}
        \subfigure[$n_\text{cal}=5$]{\includegraphics[width=0.24\textwidth]{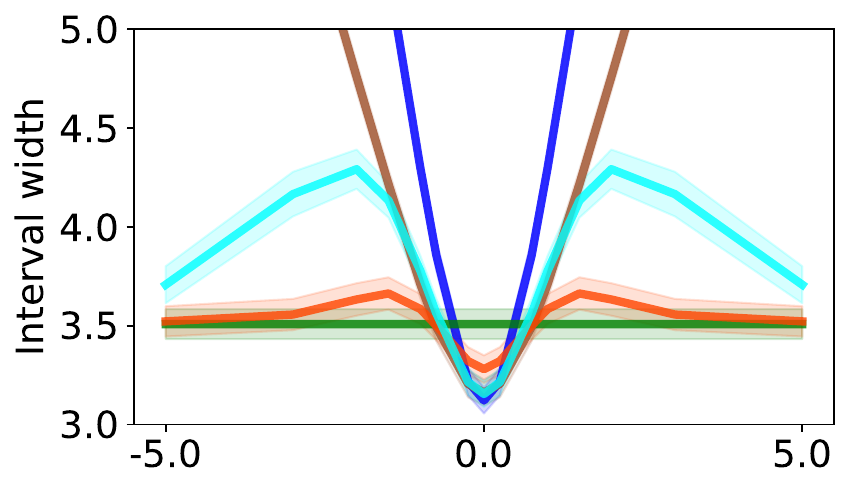}} &
        \subfigure[$n_\text{cal}=10$]{\includegraphics[width=0.24\textwidth]{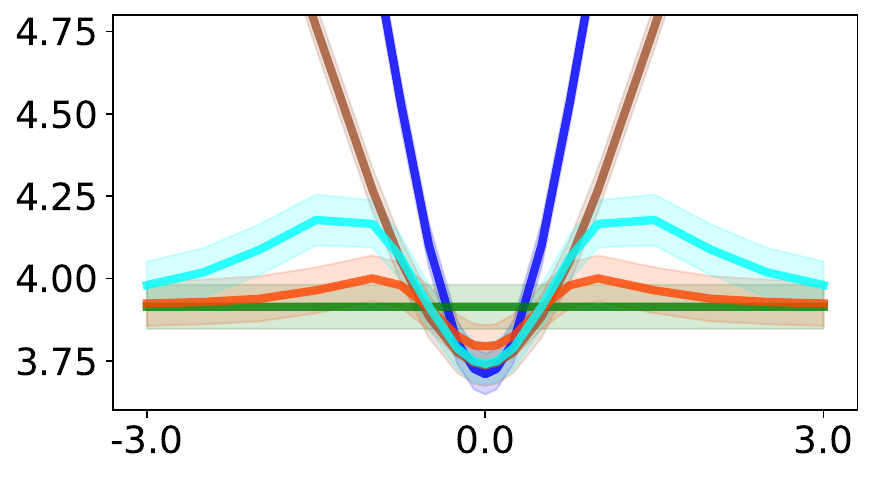}} &
        \subfigure[$n_\text{cal}=25$]{\includegraphics[width=0.24\textwidth]{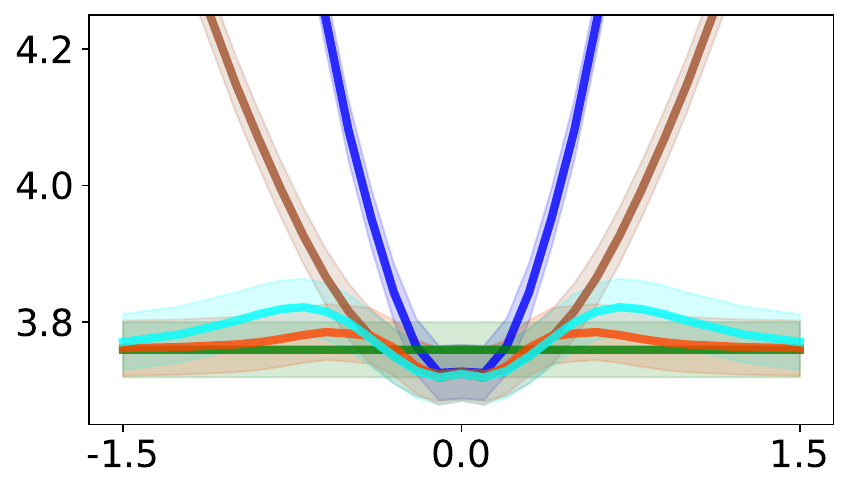}} &
        \subfigure[$n_\text{cal}=50$]{\includegraphics[width=0.24\textwidth]{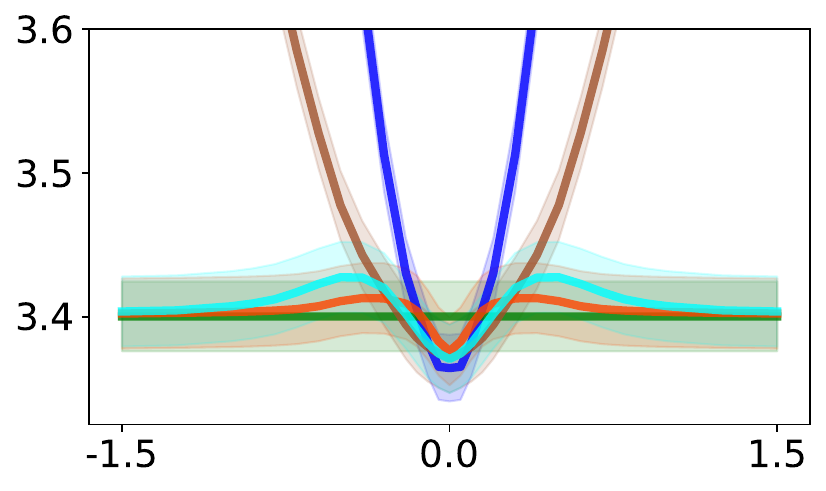}} \\
    \end{tabular}

    \vspace{0.3cm}

    \setlength{\tabcolsep}{2pt}
    \begin{tabular}{cccc}
        \subfigure[$\sigma^2=0.1$]{\includegraphics[width=0.24\textwidth]{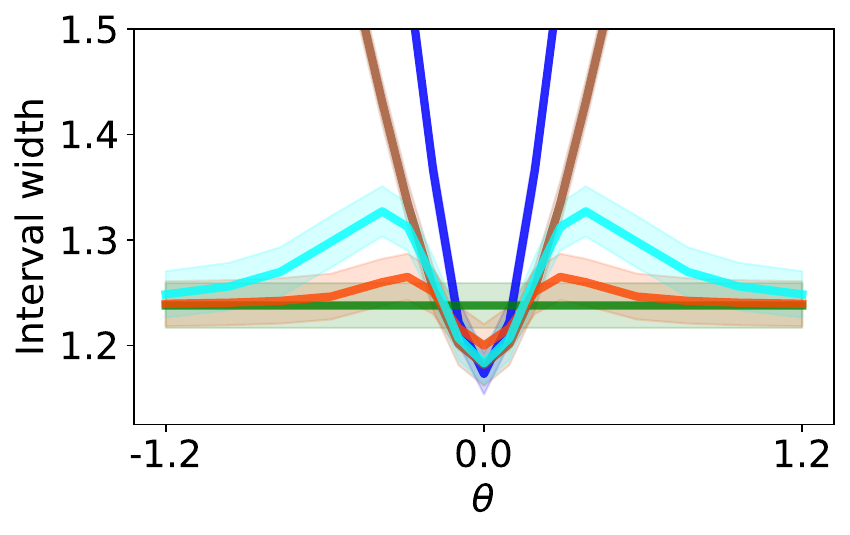}} &
        \subfigure[$\sigma^2=0.5$]{\includegraphics[width=0.24\textwidth]{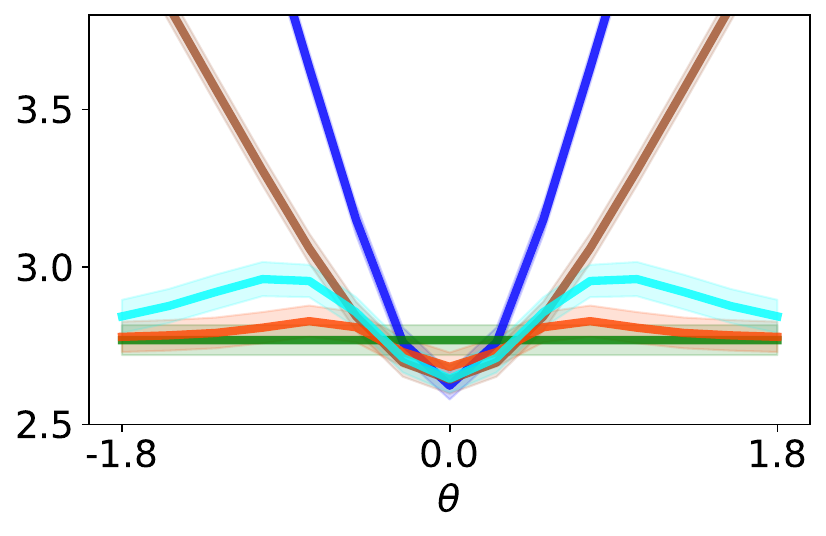}} &
        \subfigure[$\sigma^2=2$]{\includegraphics[width=0.24\textwidth]{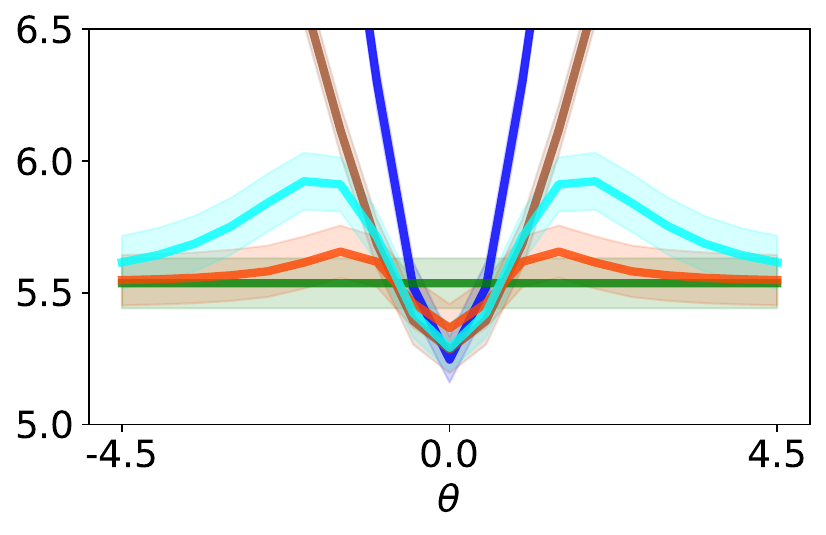}} &
        \subfigure[$\sigma^2=5$]{\includegraphics[width=0.24\textwidth]{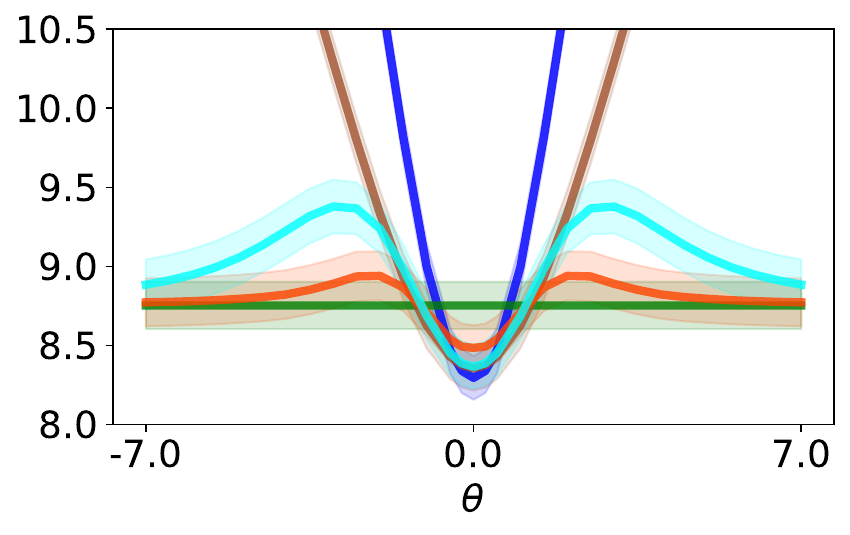}} \\
    \end{tabular}

    \vspace{-0.2cm}

    \caption{
        Interval width results for different nonconformity scores for data distributed as $\mathcal{N}(\theta, \sigma^2)$. \textbf{Top Row:} Results across different calibration sizes, $n_\text{cal}$, with $\sigma^2=1$. \textbf{Bottom Row:} Results across different noise levels, $\sigma^2$, with a fixed calibration size $n_\text{cal}=10$. To make the differences between the different methods clear, the y--axis has been cut. Results show the $\text{mean} \pm \text{standard err.}$ over 300 trials.}
    \label{fig:synthetic}
\end{figure*}

\section{Robust Bayes--Assisted Conformal Prediction}
\label{sec:approach}

In this section, we develop a conformal prediction approach that (i) retains the efficiency benefits of Bayes--assisted procedures, but (ii) remains robust when the prior information is inaccurate while (iii) maintaining computation tractability.
Although we primarily instantiate our approach for residual--based conformal prediction (\S\ref{sec:residual_based_scores}), our robust score construction applies more generally (\S\ref{sec:bayes_conformal_back}).
Our core message is that, under suitable residual--level BWMs, Bayes--assisted scores induce prediction sets that interpolate between DTO when residuals are centred near zero, and DTA when the residual mean is far from zero.
We refer to this overall framework as \textbf{RoBAS} (\textbf{Ro}bust \textbf{B}ayes--\textbf{A}ssisted \textbf{S}hrinkage).

\looseness=-1
We begin by formally motivating and describing our approach (\S\ref{sec:res_based_bayes_assisted_cp}).
We then introduce a robust hierarchical working model based on heavy--tailed priors (\textbf{RoBAS--Full}, \S\ref{sub:approach_nonconf_score}), which forms the basis for deriving a more computationally efficient empirical Bayes score (\textbf{RoBAS--EB}, \S\ref{sub:computing_score}).
Finally, we describe a grid--free method for computing prediction intervals via bracketed search and root--finding (\S\ref{sub:approach_comp_interval}).
We defer proofs and additional details to \cref{app:proofs,app:add_detauls}, respectively.

\subsection{Residual--Based, Bayes--Assisted Conformal Prediction}
\label{sec:res_based_bayes_assisted_cp}

Let $f: \calX \to \calY$ be a fixed predictive model, typically trained on a dataset separate from the calibration/test set, and define the residuals $r_i$ as in \S \ref{sec:residual_based_scores}.
We consider a BWM on the \emph{residuals} of $f$, as opposed to the conditional data--generating process used in previous works \citep{pmlr-v35-burnaev14, Fong2021, Bersson2024, bersson2025frequentist, cbma}.
Formally, we define our residual--based nonconformity score function through the corresponding posterior predictive:
\begin{align}
\label{eqn:bayes_assisted_score_for_residuals}
\rncsfun\left(r_{n+1}, \br_{1:n}\right) &= -p\left( r_{n+1} \mid \br_{1:n}\right) \notag \\
&= -\int p(r_{n+1} \mid \phi) p\left(\phi \mid \br_{1:n} \right) \ d\phi,
\end{align}
where $\phi$ denotes the parameters of the BWM.
The associated nonconformity score function and nonconformity scores are given by \eqref{eqn:nonconformity_score_based_on_residuals}--\eqref{eq:conformityscoresres}.
As $f$ is fixed, exchangeability of $(\bZ_i)_{i=1}^{n+1}$ implies exchangeability of the residuals $(R_i)_{i=1}^{n+1}$, and the usual finite--sample marginal coverage guarantee \eqref{eqn:freq_guarantee} holds.

Residual--level BWMs are attractive for two main reasons.
First, they induce a substantially more scalable conformal procedure: as the model is defined for scalar residuals that are independent of $\bX$, $\phi$ is low--dimensional (e.g., scalar mean and variance), making posterior evaluation cheap and sometimes available in closed form.
Second, residual BWMs make it considerably easier to incorporate \emph{domain knowledge}.
Informative priors for a full conditional model $Y\mid\bX,\theta$ typically require beliefs over high--dimensional objects (e.g., neural network weights), which are rarely available \citep{wenzel2020good, fortuin2021bayesian, fortuin2022priors}.
By contrast, a residual--level BWM only requires prior beliefs about low--dimensional scalar quantities.

Throughout this work, we use zero as the working prior centre for the residual mean, encoding the belief that $f$ is approximately unbiased on the calibration/test distribution.
The construction naturally extends to any fixed nonzero centre $\mu_0$ by applying the same method to the shifted residuals $r_i-\mu_0$, or equivalently by using the shifted predictor $f'(\bx_i)=f(\bx_i)+\mu_0$.

A natural conjugate residual BWM is the zero--centred Normal--Normal--Gamma model
\begin{align}
\label{eqn:bersson_hoff_model_main_body}
    R \mid \theta & \sim \mathcal{N}\left(\theta, \sigma^2\right),
    \nonumber \\
    \theta & \sim \mathcal{N}\left(0, \tau^2 \sigma^2\right), \nonumber \\
    1 / \sigma^2 & \sim \operatorname{Gamma}(a / 2, b / 2),
\end{align}
where $\tau^2, a, b$ are fixed hyperparameters.
BWM \eqref{eqn:bersson_hoff_model_main_body} represents a zero--centred residual version of the model used by \citet{Bersson2024} and admits both closed--form nonconformity scores and prediction intervals  (see \cref{app:additional_details_on_nng} for further details), making it a computationally appealing choice.
When the zero--centre prior belief is correct, the resulting Bayes--assisted prediction sets can be highly efficient.
However, when $f$ is highly biased, such a BWM leads to prediction sets that, while satisfying the coverage guarantee, can be arbitrarily large (see Figure \ref{fig:synthetic}).
This motivates a BWM that remains informative near zero but enjoys greater robustness to prior--data conflict.

\subsection{Robust Bayesian Working Model}
\label{sub:approach_nonconf_score}
We propose the following zero--centred hierarchical BWM for the residuals \citep{10.1214/06-BA117A, 10.1093/biomet/asq017}:
\begin{align}
\label{eq:robust_working_model}
R \mid \theta &\sim \mathcal{N}(\theta, \sigma^2),
\notag \\
\theta \mid \tau^2 &\sim \mathcal{N}(0, \gamma\tau^2), \notag  \\
\tau^2 &\sim g_{\tau^2}(\tau^2),
\end{align}
where $\gamma,\sigma>0$ are fixed hyperparameters and $g_{\tau^2}$ is a heavy--tailed density.
Specifically, we assume $g_{\tau^2}$ is regularly varying at infinity: $g_{\tau^2}(\tau^2) \sim C(\tau^2)^{-\delta}$ as $\tau^2 \to \infty$, for some $C > 0$ and $\delta >1$.

When $f$ is accurate and residuals are centred near zero (so $\bar r_n\approx 0$), the zero--centred prior is well aligned with the data and we expect efficient prediction sets similar to DTO.
On the other hand, when $f$ is inaccurate -- for instance under distribution shift between training and calibration/test data -- the sample mean $\bar r_n$ can deviate substantially from zero.
In this misspecified regime, the behaviour of the nonconformity score induced by the heavy--tailed BWM \eqref{eq:robust_working_model} differs greatly from other seemingly natural BWM choices, such as \eqref{eqn:bersson_hoff_model_main_body}.
Intuitively, the heavy--tailed prior on $\tau^2$ enables the posterior to place non--negligible mass on large prior variances for $\theta$, effectively down--weighting the influence of the zero--centred prior when the data indicate a large mean shift.
As a result, the induced Bayes--assisted nonconformity score becomes asymptotically equivalent to the DTA score up to a monotone transformation.

\begin{restatable}[Asymptotic Robustness of Heavy--Tailed BWM]{theorem}{asymprobrestated}
\label{thm:asymptotic_robustness}
Fix $n \geq 1$.
Let $(\br_{1:n}^{(m)})_{m \geq 1}$, with $\br_{1:n}^{(m)} \in \bbR^n$, be a sequence of residuals such that $|\bar r_{n}^{(m)}|\to \infty$ as $m \to \infty$.
Under BWM \eqref{eq:robust_working_model}, the score function \eqref{eqn:bayes_assisted_score_for_residuals} satisfies, for every fixed $\Delta\in\bbR$,
$$
\rncsfun(\bar r^{(m)}_n + \Delta,\br^{(m)}_{1:n})=-p(\bar r^{(m)}_n + \Delta \mid \br^{(m)}_{1:n})\to
h_n(|\Delta|)
$$
as $m \to \infty$, where $h_n:[0,\infty)\to\bbR$ is the strictly increasing function
$$
    h_n(u) = - \frac{1}{\sqrt{2 \pi \sigma^2(1 + 1 / n)}} \exp\left(-\frac{u^2}{2\sigma^2(1 + 1/n)}\right).
$$
\end{restatable}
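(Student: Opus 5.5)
Since $\sigma$ is fixed, $\bar r_n$ is sufficient for $\theta$, so the plan is to reduce everything to a one-dimensional computation in $\theta$. Write the posterior predictive as
$$p(r_{n+1}\mid \br_{1:n}) = \int \mathcal{N}(r_{n+1};\theta,\sigma^2)\,p(\theta\mid \bar r_n)\,d\theta .$$
Here the posterior of $\theta$ depends on the data only through the likelihood $\mathcal{N}(\bar r_n;\theta,\sigma^2/n)$, multiplied by the marginal prior
$$\pi(\theta)=\int_0^\infty \mathcal{N}(\theta;0,\gamma\tau^2)\,g_{\tau^2}(\tau^2)\,d\tau^2 .$$
Substituting $r_{n+1}=\bar r_n+\Delta$ and changing variables $\theta=\bar r_n+s$ gives
$$p(\bar r_n+\Delta\mid \br_{1:n})=\frac{\int \mathcal{N}(\Delta;s,\sigma^2)\,\mathcal{N}(s;0,\sigma^2/n)\,\pi(\bar r_n+s)\,ds}{\int \mathcal{N}(s;0,\sigma^2/n)\,\pi(\bar r_n+s)\,ds}.$$
If we can show that $\pi(\bar r_n+s)/\pi(\bar r_n)\to1$ in a form that allows passing the limit through both integrals, then the ratio converges to
$$\int \mathcal{N}(\Delta;s,\sigma^2)\,\mathcal{N}(s;0,\sigma^2/n)\,ds=\mathcal{N}\bigl(\Delta;0,\sigma^2(1+1/n)\bigr)=-h_n(|\Delta|).$$
That $h_n$ is strictly increasing on $[0,\infty)$ is immediate from its form.

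The key step is therefore a tail analysis of the marginal prior $\pi$. Using the regular variation $g_{\tau^2}(t)\sim Ct^{-\delta}$, substitute $t=\theta^2 u$ in the mixture integral. This shows that $\pi(\theta)\sim C'|\theta|^{1-2\delta}$ as $|\theta|\to\infty$, with
$$C'=C\int_0^\infty (2\pi\gamma u)^{-1/2}e^{-1/(2\gamma u)}\,u^{-\delta}\,du<\infty,$$
which is finite because $\delta>1$ (indeed $\delta>1/2$ suffices). The contribution from small $t$ is handled by noting that it is $O(e^{-c\theta^2})$ on any fixed range $t\le T$. Consequently $\pi$ is regularly varying with index $1-2\delta$, and in particular $\pi(\theta+s)/\pi(\theta)\to1$ for each fixed $s$. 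I would also record that $\pi$ is bounded, even, and positive.

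The main obstacle is justifying dominated convergence, because the ratio $\pi(\bar r_n+s)/\pi(\bar r_n)$ is not uniformly bounded in $s$. Near $s=-\bar r_n$ the prior value $\pi(0)$ may greatly exceed $\pi(\bar r_n)$. I would split the $s$-integral into two regions.

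\emph{Region $|s|\le |\bar r_n|/2$.} Here $|\bar r_n+s|\ge|\bar r_n|/2$. Applying Potter-type bounds (or simply the asymptotic equivalence at both points) gives $\pi(\bar r_n+s)/\pi(\bar r_n)\le K$ uniformly for large $m$. Since the Gaussian factor $\mathcal{N}(s;0,\sigma^2/n)$ is integrable, dominated convergence applies.

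\emph{Region $|s|>|\bar r_n|/2$.} Here bound $\pi\le\pi(0)<\infty$. The Gaussian factor then contributes at most $e^{-n\bar r_n^2/(8\sigma^2)}$ times a constant. Divided by $\pi(\bar r_n)\asymp|\bar r_n|^{1-2\delta}$, this tends to zero because the decay is exponential against polynomial.

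The same split applies to the numerator, where the additional factor $\mathcal{N}(\Delta;s,\sigma^2)\le(2\pi\sigma^2)^{-1/2}$ is bounded. Combining the limits of numerator and denominator, both normalised by $\pi(\bar r_n)$, yields the claim.
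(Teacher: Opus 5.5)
\textbf{The error: $\pi$ need not be bounded.} You assert that $\pi$ is bounded and use $\pi\le\pi(0)<\infty$ on the region $|s|>|\bar r_n|/2$. This step fails as written. The theorem only constrains $g_{\tau^2}$ at infinity, and $\pi(0)=\int_0^\infty(2\pi\gamma t)^{-1/2}g_{\tau^2}(t)\,dt$ diverges whenever $g_{\tau^2}$ has enough mass near zero.

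In particular it diverges for the horseshoe instantiation the paper actually uses. There $g_{\tau^2}(t)\propto t^{-1/2}(1+t)^{-1}$, so the integrand is $\propto t^{-1}(1+t)^{-1}$; this is the well-known infinite spike of the horseshoe density at the origin. Your bound on the far region is therefore vacuous in exactly the case of interest.

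The repair is to swap the roles of the two factors. Bound the Gaussian by its supremum over the region and use only that $\pi$ integrates to one:
\[
\int_{|s|>|\bar r_n|/2}\mathcal N(s;0,\sigma^2/n)\,\pi(\bar r_n+s)\,ds
\le\sqrt{\frac{n}{2\pi\sigma^2}}\,e^{-n\bar r_n^2/(8\sigma^2)}\int_{\mathbb R}\pi(\theta)\,d\theta
=\sqrt{\frac{n}{2\pi\sigma^2}}\,e^{-n\bar r_n^2/(8\sigma^2)} .
\]
This is still negligible relative to $\pi(\bar r_n)\asymp|\bar r_n|^{1-2\delta}$. The numerator is handled identically using $\mathcal N(\Delta;s,\sigma^2)\le(2\pi\sigma^2)^{-1/2}$. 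With this change the rest of your argument is correct: the tail $\pi(\theta)\sim C'|\theta|^{1-2\delta}$, the uniform bound on $\pi(\bar r_n+s)/\pi(\bar r_n)$ for $|s|\le|\bar r_n|/2$, and dominated convergence.

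\textbf{Comparison with the paper's route.} The paper writes the likelihood of $\bar r_n$ in natural-parameter form, $\eta=n\theta/\sigma^2$. Tweedie's formula then gives the exact posterior cumulant generating function of the centred parameter,
$\log\{f(\bar r_n+\xi)/f(\bar r_n)\}+n\xi^2/(2\sigma^2)$, where $f$ is the marginal density of $\bar r_n$, i.e.\ the normal scale mixture with variance $\sigma^2/n+\gamma\tau^2$. The proof then proceeds in three steps:
regular variation of $f$ gives $f(\bar r_n+\xi)/f(\bar r_n)\to1$ for each fixed $\xi$;
the mgf continuity theorem gives $\theta-\bar r_n\mid\mathbf r_{1:n}\Rightarrow\mathcal N(0,\sigma^2/n)$;
the predictive converges because $z\mapsto\varphi(\Delta-z;0,\sigma^2)$ is bounded and continuous.

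Because $f$ is the prior already convolved with the Gaussian likelihood, it is bounded and smooth. So the paper never meets the singularity of $\pi$ at zero and needs no domination over $s$ at all. Your approach is more elementary and self-contained, since you derive the tail of the prior marginal rather than citing a regular-variation result for $f$. It also yields $L^1$ convergence of the posterior density of $\theta-\bar r_n$, which is stronger than weak convergence. The price is the two-region bookkeeping, and that is exactly where the boundedness slip entered.
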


\Cref{thm:asymptotic_robustness} states that, when $|\bar r_n|$ is large, $\rncsfun(r_{n+1},\br_{1:n})\simeq h_n(|r_{n+1}-\bar r_n|)$ for $r_{n+1}=\bar r_n + \Delta$.
Since $h_n$ is strictly monotone increasing, this implies that the Bayes--assisted score function behaves similarly to the DTA score function.
In practice, this behaviour is highly desirable: when the underlying predictor is poor and the residual mean is far from the working prior mean of zero, it is preferable to rely more heavily on the calibration data rather than on unrealistic prior information.

\looseness=-1
\Cref{thm:asymptotic_robustness} also implicitly unveils an insightful connection between the DTA score and Bayes--assisted conformal prediction.
In particular, as the next result shows, the DTA score is equivalent to the Bayes--assisted nonconformity score corresponding to a non--informative prior on the mean.
\begin{restatable}{proposition}{dtabayesassisted}
\label{prop:dta_bayes_assisted}
The DTA nonconformity score is equivalent to the Bayes--assisted nonconformity score for the following BWM:
\vspace{-0.3cm}
\begin{align*}
R  &  \mid\theta\sim\mathcal{N}(\theta,\sigma^{2}),
\\
\theta &  \sim\pi(\theta)\propto1
\end{align*}
\vspace{-.05cm}
where $\sigma>0$ can take any arbitrary, fixed value.
\end{restatable}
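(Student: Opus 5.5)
We compute the posterior predictive under the flat prior in closed form and then show that it is a strictly decreasing function of $|r_{n+1}-\bar r_n|$. Conformal procedures are invariant under strictly increasing transformations of the score, so this yields equivalence with DTA. The computation is a standard conjugate calculation. Given $\br_{1:n}$, the likelihood $\prod_{i=1}^n \mathcal{N}(r_i;\theta,\sigma^2)$ is, as a function of $\theta$, proportional to $\exp\{-n(\theta-\bar r_n)^2/(2\sigma^2)\}$. With the improper flat prior $\pi(\theta)\propto 1$ the posterior is therefore proper for every $n\ge 1$:
\begin{equation*}
\theta\mid\br_{1:n}\sim\mathcal{N}(\bar r_n,\sigma^2/n).
\end{equation*}
Integrating $\mathcal{N}(r_{n+1};\theta,\sigma^2)$ against this posterior gives the Gaussian predictive
\begin{equation*}
p(r_{n+1}\mid\br_{1:n})=\frac{1}{\sqrt{2\pi\sigma^2(1+1/n)}}\exp\Bigl(-\frac{(r_{n+1}-\bar r_n)^2}{2\sigma^2(1+1/n)}\Bigr).
\end{equation*}
Hence the Bayes--assisted score \eqref{eqn:bayes_assisted_score_for_residuals} equals $h_n(|r_{n+1}-\bar r_n|)$, with $h_n$ exactly the strictly increasing function appearing in \cref{thm:asymptotic_robustness}. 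This is consistent with the interpretation that the limit in that theorem corresponds to a non--informative prior.

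Next I would make precise what ``equivalent'' means: the two score functions induce identical conformal $p$-values \eqref{eqn:conformal_p_value}, and hence identical prediction sets \eqref{eqn:cp_interval}, for every data set and every candidate $y$.

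The key step is that the relation $\rncsfun^{\mathrm{Bayes}}(r,\br)=h_n(\rncsfun^{\DTA}(r,\br))$ holds with the \emph{same} $h_n$ for all arguments. In particular it holds for each augmented score $\ncs_i(y)$ in \eqref{eq:conformityscoresres}, where the reference set is $\br_{1:n,-i}\cup\{y-f(\bx_{n+1})\}$. That set still has $n$ elements, so the same $n$, and hence the same $h_n$, applies. Because $h_n$ is strictly increasing on $[0,\infty)$, we have $h_n(a)\ge h_n(b)$ if and only if $a\ge b$. Therefore every indicator $\mathbb{I}\{\ncs_i(y)\ge\ncs_{n+1}(y)\}$ is unchanged, and $\rho(y)$ is unchanged.

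I would also remark that the resulting sets are independent of $\sigma$. Although $h_n$ depends on $\sigma$, it is strictly increasing for every fixed $\sigma>0$, and the DTA score itself does not depend on $\sigma$. This justifies the phrase that $\sigma$ ``can take any arbitrary, fixed value.''

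There is no real obstacle; the only points needing care are the following. First, one must check that the improper prior yields a proper posterior, which holds since $n\ge1$. Second, one must verify that the monotone transformation is common to all $n+1$ augmented scores, which is ensured by the constant reference-set size $n$.
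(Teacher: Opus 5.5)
Your proposal is correct and follows essentially the same route as the paper: you derive the flat-prior posterior $\mathcal{N}(\bar r_n,\sigma^2/n)$ and the Gaussian predictive $\mathcal{N}(\bar r_n,\sigma^2(1+1/n))$, then note that the negative density is a strictly increasing function of $|r_{n+1}-\bar r_n|$. Your additional remarks, on posterior propriety and on the same $h_n$ applying to all $n+1$ augmented scores (each reference set has size $n$), make explicit what the paper leaves implicit in the word ``equivalent''.
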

Together, Theorem \ref{thm:asymptotic_robustness} and Proposition \ref{prop:dta_bayes_assisted} formalise our robustness goal: our heavy--tailed BWM \eqref{eq:robust_working_model} induces efficient Bayes--assisted prediction sets when residuals are near zero, but automatically reverts to stable, data--driven DTA prediction sets when the residual mean becomes large.

In \S\ref{sec:experiments}, we instantiate BWM \eqref{eq:robust_working_model} by choosing $g_{\tau^2}$ so that $\tau\sim C^+(0,1)$, which induces a horseshoe prior on $\theta$ \citep{10.1093/biomet/asq017}, and setting $\gamma=\sigma^2/n$, in the spirit of \citet{Piironen2017}.
The resulting model induces the Bayes--assisted nonconformity score
\begin{equation}
    \label{eq:score_horseshoe}
    \rncsfun(r_{n+1}, \br_{1:n}) = \exp\!\left(-\frac{n s_{n}^2}{2\sigma^2}\right)\,{}_1F_1\!\left(1; \frac{3}{2};\ -\frac{n \bar r_{n}^2}{2\sigma^2}\right)
\end{equation}
via \cref{lem:marginal_likelihood_equivalence}, where $s_n^2 = \tfrac{1}{n}\sum_{i=1}^n (r_i - \bar r_n)^2$ and ${}_1F_1$ is the confluent hypergeometric function of the first kind;
see \cref{app:additional_details_on_original_bwm} for a detailed derivation.
In all experiments, we use score \eqref{eq:score_horseshoe}, which we refer to as \textbf{RoBAS--Full}, with $\sigma^2$ estimated from the augmented residual vector $\br_{1:n+1}$; an extension of \cref{thm:asymptotic_robustness} to this plug--in setting is provided in \cref{app:additional_results_theory}.

\subsection{Empirical Bayes Nonconformity Score}
\label{sub:computing_score}
While \eqref{eq:robust_working_model} yields the desired robustness behaviour, computing the corresponding Bayes--assisted score \eqref{eqn:bayes_assisted_score_for_residuals} may require integrating over $\tau^2$ or evaluating special functions, which can be computationally expensive when scores must be evaluated repeatedly to construct prediction sets through \eqref{eqn:cp_interval};
see \cref{app:additional_details_on_original_bwm}.
To avoid this difficulty while retaining the same qualitative shrinkage behaviour, we also consider an alternative empirical Bayes \citep[EB,][]{fde3eac9-cb95-3c5a-a425-1969c1c486f7} construction.

The starting point is a conjugate Normal--Normal approximation to BWM \eqref{eq:robust_working_model}, obtained by replacing the variance--mixture prior on $\theta$ with a single prior variance parameter.
Specifically, we consider the BWM
\begin{align}
    \label{eq:eb_robust_working_model}
    R \mid \theta &\sim \mathcal{N}(\theta, \sigma^2), \notag\\
    \theta \mid \upsilon^2 &\sim \mathcal{N}(0, \upsilon^2).
\end{align}
We first record the fixed--hyperparameter Bayes--assisted score induced by BWM \eqref{eq:eb_robust_working_model}.

\begin{restatable}{proposition}{FixedNNScore}
    \label{prop:fixed_nn_score}
    Consider BWM \eqref{eq:eb_robust_working_model} with fixed $\sigma^2, \upsilon^2 > 0$.
    Then, the induced Bayes--assisted score is a strictly monotone transformation of the score
    \begin{equation}
        \rncsfun(r_{n+1}, \br_{1:n}) = \left| r_{n+1} - a(\sigma^2, \upsilon^2)\,\bar{r}_{n}\right|,
    \end{equation}
    where $\bar{r}_{n} =\frac{1}{n}\sum_{i=1}^{n}r_{i}$ and
    \begin{equation}
        a(\sigma^2, \upsilon^2) = \frac{\upsilon^{2}}{\upsilon^{2}+\sigma^{2}/n}. \label{eq:fixed_nn_score_a}
    \end{equation}
\end{restatable}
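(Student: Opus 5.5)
The plan is to compute the posterior predictive of BWM \eqref{eq:eb_robust_working_model} in closed form via Normal--Normal conjugacy, and then show it is a strictly decreasing function of $|r_{n+1}-a\bar r_n|$ alone.

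\textbf{Posterior of $\theta$.} Given $\br_{1:n}$, the likelihood in $\theta$ depends on the data only through $\bar r_n$, since $\prod_i \mathcal{N}(r_i;\theta,\sigma^2)\propto \exp\{-n(\theta-\bar r_n)^2/(2\sigma^2)\}$. Combining this with the $\mathcal{N}(0,\upsilon^2)$ prior and completing the square gives
\[
\theta\mid \br_{1:n}\sim\mathcal{N}\bigl(m_n, v_n\bigr),
\qquad
v_n=\Bigl(\tfrac{1}{\upsilon^2}+\tfrac{n}{\sigma^2}\Bigr)^{-1},
\qquad
m_n=v_n\,\tfrac{n\bar r_n}{\sigma^2}.
\]
Rearranging, $m_n=\frac{\upsilon^2}{\upsilon^2+\sigma^2/n}\bar r_n=a(\sigma^2,\upsilon^2)\,\bar r_n$, which is exactly \eqref{eq:fixed_nn_score_a}.

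\textbf{Posterior predictive.} Integrating $\mathcal{N}(r_{n+1};\theta,\sigma^2)$ against this posterior gives the Gaussian convolution
\[
p(r_{n+1}\mid\br_{1:n})=\mathcal{N}\bigl(r_{n+1};\,a\bar r_n,\,\sigma^2+v_n\bigr).
\]
By \eqref{eqn:bayes_assisted_score_for_residuals}, the Bayes--assisted score is therefore
\[
-\frac{1}{\sqrt{2\pi(\sigma^2+v_n)}}\exp\Bigl(-\frac{u^2}{2(\sigma^2+v_n)}\Bigr),
\qquad u=|r_{n+1}-a\bar r_n|.
\]

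\textbf{Monotonicity and equivalence.} The variance $\sigma^2+v_n$ depends only on $\sigma^2$, $\upsilon^2$ and $n$, not on the data. Hence the map $u\mapsto -c_1\exp(-u^2/c_2)$, with $c_1,c_2>0$ fixed, is strictly increasing on $[0,\infty)$.

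This is the key point for conformal equivalence. In the augmented scores \eqref{eq:conformityscoresres}, every $\ncs_i(y)$ is computed with the same $n$ and the same fixed hyperparameters. So one common strictly increasing map $g$ applies to all of them, and rank comparisons in \eqref{eqn:conformal_p_value} are preserved. The conformal $p$-values, and hence the prediction sets, coincide with those of $|r_{n+1}-a\bar r_n|$.

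\textbf{Main obstacle.} There is no real obstacle here. The only step needing care is confirming that the predictive variance is data-independent, so that a single transformation $g$ works uniformly across all leave-one-out/add-one-in configurations.
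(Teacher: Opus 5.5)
Your proposal is correct and follows essentially the same route as the paper: the conjugate posterior $\mathcal{N}(a(\sigma^2,\upsilon^2)\bar r_n, V_n)$, the Gaussian predictive with data-independent variance $\sigma^2+V_n$, and strict monotonicity of the negative density in $|r_{n+1}-a(\sigma^2,\upsilon^2)\bar r_n|$. Your observation that this single variance makes one transformation valid for all augmented scores matches the paper's own remark after the proof, which explains why the equivalence fails once EB estimates are plugged into the predictive variance.
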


That is, for fixed variance parameters, BWM \eqref{eq:eb_robust_working_model} induces an absolute--deviation score centred at the posterior mean $a(\sigma^2, \upsilon^2)\,\bar{r}_{n}$.
In practice, rather than fixing $\sigma^2$ and $\upsilon^2$ a priori, we choose them adaptively via EB and plug the resulting estimates into the fixed--hyperparameter score.
Specifically, we define the nonconformity score as
\begin{equation}
    \label{eqn:eb_robust_nonconformity_score}
    \rncsfun(r_{n+1},\br_{1:n}) = \left|r_{n+1}-\widehat a(\br_{1:n})\,\bar r_n\right|,
\end{equation}
where $\widehat a(\br_{1:n}) = a(\widehat{\sigma}^2(\br_{1:n}), \widehat{\upsilon}^2(\br_{1:n}))$ for
\begin{align}
    \widehat{\sigma}^{2}(\br_{1:n})  &  =\frac{1}{n}\sum_{i=1}^{n}(r_{i}-\bar{r}_{n})^{2}, \label{eq:eb_sigma_hat} \\
    \widehat{\upsilon}^{2}(\br_{1:n})  &  =\max\left(\bar{r}_{n}^{2}-\frac{\widehat{\sigma}^{2}(\br_{1:n})}{n}, \ 0\right), \label{eq:eb_upsilon_hat}
\end{align}
and with the convention that $\widehat a(\br_{1:n})=0$ when its denominator is zero. We refer to this score as \textbf{RoBAS--EB} and provide a derivation of the EB estimates \eqref{eq:eb_sigma_hat}--\eqref{eq:eb_upsilon_hat} in \cref{app:proofs_eb}.

The $\widehat a(\br_{1:n})\bar{r}_{n}$ term in \eqref{eqn:eb_robust_nonconformity_score} is a shrinkage estimator of the mean \citep{Efron1975,Morris1983,shrinkage}.
Similarly to the score induced by BWM \eqref{eq:robust_working_model}, score \eqref{eqn:eb_robust_nonconformity_score} reverts to DTA when $|\bar{r}_n|$ is large, provided that $\widehat{\sigma}^{2}(\br_{1:n})$ does not grow as fast as $|\bar{r}_n|$.
This is summarised in the following result, giving an EB analogue of \cref{thm:asymptotic_robustness}. \looseness=-1
\begin{restatable}[Asymptotic Robustness of EB Score]{proposition}{EBasymprobrestated}
    \label{prop:asymptotic_robustness_eb}
    Fix $n \geq 1$.
    Let $(\br_{1:n}^{(m)})_{m \geq 1}$, with $\br_{1:n}^{(m)} \in \bbR^n$, be a sequence of residuals such that $|\bar r_{n}^{(m)}|\to \infty$ and $\widehat{\sigma}^2(\br_{1:n}^{(m)})/|\bar r_n^{(m)}| \to 0$ as $m \to \infty$.
    Then, the score \eqref{eqn:eb_robust_nonconformity_score} satisfies, for every fixed $\Delta\in\bbR$,
    $$
        \rncsfun(\bar r^{(m)}_n + \Delta,\br^{(m)}_{1:n}) \to |\Delta| \quad\text{as }m\to\infty.
    $$
\end{restatable}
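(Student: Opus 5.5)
The plan is to reduce everything to the behaviour of the scalar shrinkage factor $\widehat a^{(m)} := \widehat a(\br^{(m)}_{1:n})$. Write $\bar r^{(m)} = \bar r^{(m)}_n$ and $\widehat\sigma^{2,(m)} = \widehat\sigma^2(\br^{(m)}_{1:n})$. Plugging $r_{n+1} = \bar r^{(m)} + \Delta$ into \eqref{eqn:eb_robust_nonconformity_score} gives
\begin{equation*}
\rncsfun(\bar r^{(m)} + \Delta, \br^{(m)}_{1:n}) = \bigl|\Delta + (1-\widehat a^{(m)})\,\bar r^{(m)}\bigr| .
\end{equation*}
By the reverse triangle inequality, this differs from $|\Delta|$ by at most $(1-\widehat a^{(m)})|\bar r^{(m)}|$, using $0\le \widehat a^{(m)}\le 1$. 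So it suffices to show $(1-\widehat a^{(m)})|\bar r^{(m)}| \to 0$.

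Next I compute $1-\widehat a^{(m)}$ explicitly. Since $|\bar r^{(m)}|\to\infty$ and $\widehat\sigma^{2,(m)} = o(|\bar r^{(m)}|)$, we have $\widehat\sigma^{2,(m)}/n = o(\bar r^{(m)2})$. Hence for all large $m$,
\begin{equation*}
\bar r^{(m)2} > \widehat\sigma^{2,(m)}/n ,
\end{equation*}
so the maximum in \eqref{eq:eb_upsilon_hat} is attained by its first argument. Then $\widehat\upsilon^{2,(m)} + \widehat\sigma^{2,(m)}/n = \bar r^{(m)2} > 0$, so the zero-denominator convention never applies. From \eqref{eq:fixed_nn_score_a},
\begin{equation*}
1-\widehat a^{(m)} = \frac{\widehat\sigma^{2,(m)}/n}{\bar r^{(m)2}} .
\end{equation*}
Multiplying by $|\bar r^{(m)}|$ gives
\begin{equation*}
(1-\widehat a^{(m)})\,|\bar r^{(m)}| = \frac{1}{n}\cdot\frac{\widehat\sigma^{2,(m)}}{|\bar r^{(m)}|} \longrightarrow 0
\end{equation*}
by the hypothesis, which concludes the argument.

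The only step needing care is confirming that, for large $m$, the positive branch of the max holds and the denominator is nonzero. This follows from the two stated limits, so I expect no real obstacle.
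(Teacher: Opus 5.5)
Your proposal is correct and follows the paper's proof essentially step for step: it shows the positive branch of $\widehat\upsilon^2$ is active for large $m$, so that $\widehat a = 1-\widehat\sigma^2/(n\bar r^2)$, and then uses $(1-\widehat a)\bar r = \widehat\sigma^2/(n\bar r)\to 0$ to conclude. Your explicit reverse-triangle-inequality bound and your check that the zero-denominator convention never applies are slightly more careful than the paper's write-up.
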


On the other hand, as $\bar{r}_n \to 0$, the mean estimate shrinks to zero and we recover DTO. Thus, the EB approximation preserves the desirable properties of BWM \eqref{eq:robust_working_model} while providing an interpretable, closed--form expression for the nonconformity score.
Moreover, this form implies that the ``bias--corrected'' prediction $y=f(\bx_{n+1})+\widehat a(\br_{1:n})\bar r_n$ gives $s_{n+1}(y)=0$ and, as a result, it is always contained in the resulting prediction set.

\subsection{Computation of Prediction Intervals}
\label{sub:approach_comp_interval}

Recall that the exact conformal prediction set \eqref{eqn:cp_interval} is given by \looseness=-1
\begin{equation*}
    \CI(\mathbf{x}_{n+1};\bz_{1:n}) = \{ y \in \mathcal{Y} \mid \rho(y) > \alpha \},
\end{equation*}
where $\rho(y)$ is the conformal $p$-value \eqref{eqn:conformal_p_value} for candidate label $y$.
Except for specific nonconformity scores that admit closed--form expressions for their prediction sets, standard approaches \citep{Fong2021, cbma} approximate this set with a \emph{prediction interval} by evaluating the $p$-values over a fine grid of candidate labels and returning the boundaries of the grid--based acceptance set.
This, however, requires careful selection of the grid range and resolution to avoid under/over coverage, which induces a trade--off between computational cost and discretisation error.

Motivated by this issue, we instead compute a prediction interval by using a grid--free search for the endpoints of the acceptance set, rather than evaluating the conformal $p$-values over a fixed grid.
We first find an accepted centre $y^\star$ by approximately maximising $\rho(y)$ using bracketed search \citep{bracketSearch1, bracketSearch2}.
Then, starting from $y^\star$, we search outwards for the transition between accepted and rejected candidates using a bracketed root--search routine applied to the acceptance criterion $\rho(y) > \alpha$.
The full procedure is summarised in \cref{alg:interval};
\begin{algorithm}
\caption{Grid--free computation of prediction intervals}
\label{alg:interval}
\begin{algorithmic}[1]
\REQUIRE Calibration data $\bz_{1:n} = \{\bz_i\}_{i=1}^n$, new input $\bx_{n+1}$, score function $\rncsfun(\cdot, \cdot)$, predictor $f$, error level $\alpha \in [0,1)$.
\ENSURE Interval approximation $[l,u]$ to $\CI(\bx_{n+1};\bz_{1:n})$
\STATE Define conformal $p$-value $\rho(y)$ from augmented scores $\{\ncs_i(y)\}_{i=1}^n \cup \{\ncs_{n+1}(y)\}$ as in \eqref{eqn:conformal_p_value} and \eqref{eq:conformityscoresres}
\STATE Find $y^\star \gets \arg\max_y \rho(y)$ via bracketed search
\STATE Approximate $u \gets \inf\{y \ge y^\star \mid \rho(y) \le \alpha\}$ via bracketed root--search
\STATE Approximate $l \gets \sup\{y \le y^\star \mid \rho(y) \le \alpha\}$ via bracketed root--search
\STATE \textbf{return} $[l,u]$
\end{algorithmic}
\end{algorithm}
see \cref{app:comp_complexity_comp_interval} for additional details on its computational complexity.

For a generic nonconformity score, the exact conformal set \eqref{eqn:cp_interval} need not be an interval.
In that case, \cref{alg:interval} should be interpreted as returning an interval approximation to the exact conformal set.
When the exact conformal set is an interval, however, the procedure recovers its endpoints up to numerical tolerance.
The following result shows that this favourable case holds for the RoBAS--EB score \eqref{eqn:eb_robust_nonconformity_score}.
\begin{restatable}[Interval Property of RoBAS--EB]{theorem}{robasinterval}
    \label{thm:robas_interval}
    Assume $n\ge 4$ and consider the nonconformity score \eqref{eqn:eb_robust_nonconformity_score}.
    Then, for any $\alpha\in[0,1)$, the corresponding conformal prediction set $\CI$ in \eqref{eqn:cp_interval} is an interval, up to intersection with $\mathcal Y$.
\end{restatable}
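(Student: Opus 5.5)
The plan is to show that, as a function of the candidate residual $t=y-f(\bx_{n+1})$, each calibration score $\ncs_i$ changes strictly more slowly than the test score $\ncs_{n+1}$. Then each indicator $\mathbb{I}\{\ncs_i\ge\ncs_{n+1}\}$ is the indicator of an interval, all these intervals contain a common point, and the acceptance set is an interval.

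\textbf{Step 1: write the scores as functions of $t$.} By \eqref{eq:conformityscoresres} and \eqref{eqn:eb_robust_nonconformity_score}, the test score is $\ncs_{n+1}(t)=|t-c|$ with $c=\widehat a(\br_{1:n})\bar r_n$, a constant not depending on $t$. For $i\le n$, write $\bx^{(i)}(t)=\br_{1:n,-i}\cup\{t\}$. Let $m_i(t)$ be its mean, $Q_i(t)=\sum_j(x^{(i)}_j-m_i)^2$, and $b_i(t)=\widehat a(\bx^{(i)}(t))\,m_i(t)$. Then $\ncs_i(t)=|r_i-b_i(t)|$.

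From \eqref{eq:eb_sigma_hat}--\eqref{eq:eb_upsilon_hat}, we have $\widehat\sigma^2=Q_i/n$. When $m_i^2>Q_i/n^2$, this gives $\widehat a=1-Q_i/(n^2m_i^2)$, so
\[
b_i(t)=m_i(t)-\frac{Q_i(t)}{n^2\,m_i(t)}.
\]
Otherwise $b_i(t)=0$. The two branches agree on the boundary $m_i^2=Q_i/n^2$, so $b_i$ is continuous and piecewise $C^1$.

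\textbf{Step 2 (main obstacle): a Lipschitz bound $|b_i'(t)|<1$.} On the nonzero branch we use $m_i'=1/n$ and $Q_i'=2(t-m_i)$. Differentiating gives
\[
b_i'=\frac1n-\frac{2(t-m_i)}{n^2m_i}+\frac{Q_i}{n^3m_i^2}.
\]
In this branch $|m_i|>\sqrt{Q_i}/n$. We also have $|t-m_i|\le\sqrt{Q_i}$, since $t$ is one of the points. Hence the middle term is strictly less than $2/n$ in absolute value, and the last term lies in $[0,1/n)$. Therefore $|b_i'|<4/n\le1$ for $n\ge4$, which is exactly where the hypothesis $n\ge 4$ enters.

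I expect the care to be needed here: the strictness must survive near the branch boundary, and it may be possible to sharpen the bound (for instance using $|t-m_i|\le\sqrt{(n-1)/n}\,\sqrt{Q_i}$). On the zero branch $b_i'=0$. Since $b_i$ is continuous and piecewise $C^1$ with derivative bounded by some $L<1$ on every bounded set, it is locally $L$-Lipschitz. Consequently $\ncs_i(t)=|r_i-b_i(t)|$ is Lipschitz in $t$ with constant $<1$ on bounded sets.

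\textbf{Step 3: each $A_i$ is an interval containing $c$.} Define $A_i=\{t:\ncs_i(t)\ge|t-c|\}$ and $g_i(t)=\ncs_i(t)-|t-c|$. For $t>c$, $g_i$ is strictly decreasing, because its increments are at most $(L-1)\Delta t<0$; symmetrically, $g_i$ is strictly increasing for $t<c$. Moreover $g_i(c)=\ncs_i(c)\ge0$. Hence $A_i=[\ell_i,u_i]\ni c$, with possibly infinite endpoints. The set is closed by continuity, which is consistent with the non-strict inequality in \eqref{eqn:conformal_p_value}.

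\textbf{Step 4: assemble.} Acceptance $\rho(t)>\alpha$ is equivalent to $N(t):=\#\{i\le n: t\in A_i\}\ge k$, where $k=\lceil(n+1)\alpha\rceil-1$ (when $(n+1)\alpha$ is an integer, use $(n+1)\alpha$ instead of the ceiling). If $k\le0$, the set is all of $\bbR$. Otherwise, for $t\ge c$ we have $N(t)\ge k$ iff $t\le u_{(k)}$, the $k$-th largest upper endpoint. Similarly, for $t\le c$, $N(t)\ge k$ iff $t\ge\ell$, where $\ell$ is the $k$-th smallest lower endpoint. So the accepted set in $t$ is $[\ell,u_{(k)}]$. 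Mapping back via $y=t+f(\bx_{n+1})$ and intersecting with $\calY$ gives the claim.
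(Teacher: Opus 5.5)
Your proposal is correct and follows essentially the paper's proof. Both arguments bound the derivative of the leave--one--out centre on the active branch to show it is a contraction, deduce that each pairwise region is an interval containing $c=r_0$, and then assemble; you carry out the assembly by hand, whereas the paper cites the Bersson--Hoff lemma. Note that your sharpened bound $|t-m_i|\le\sqrt{(n-1)/n}\,\sqrt{Q_i}$ gives exactly the paper's global constant $L_n=2(n+\sqrt{n(n-1)})/n^2<1$, which justifies the ``some $L<1$ on bounded sets'' claim at $n=4$; your crude $4/n$ bound only gives $|b_i'|<1$ pointwise there, though the mean-value theorem applied on the finitely many pieces already yields strict increments.
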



\section{Experiments}
\label{sec:experiments}
\looseness=-1
In this section, we demonstrate the benefits of our approach in a synthetic setting and on several real--world datasets. Given a fixed $f$ trained on a proper training set, we apply full conformal prediction to the calibration residuals. The calibration and test sets are assumed exchangeable, which guarantees coverage validity, but we consider the scenario where there is a shift between the training and calibration/test data that impacts the performance of $f$. We focus on the small--calibration size setting like in \citet{Hoff2023} and ablate with standard calibration sizes in Appendix \ref{app:diff_cal_sizes}. We summarise the key details of our setup below and provide full details in Appendix \ref{app:exp_setup}. All experiments were run for 300 trials. Code for reproducing the experiments is available at \url{https://github.com/kiaashour/RoBAS}.


\begin{figure*}[!t]
    \centering
    \includegraphics[width=0.7\textwidth]{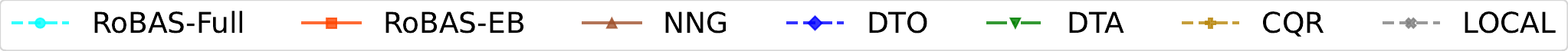}
    \vskip 3pt

    \setlength{\tabcolsep}{2pt} 
    \begin{tabular}{cccc}
        & \multicolumn{2}{c}{\footnotesize\texttt{Airfoil}} & \\
        \includegraphics[width=0.24\textwidth]{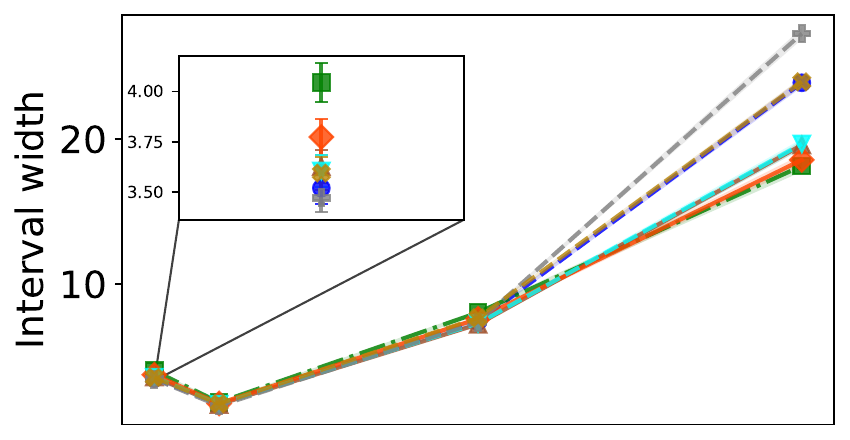} &
        \includegraphics[width=0.24\textwidth]{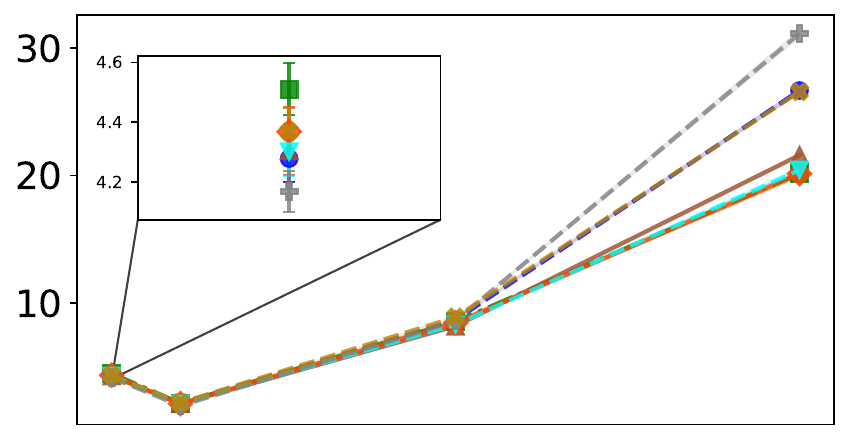} &
        \includegraphics[width=0.24\textwidth]{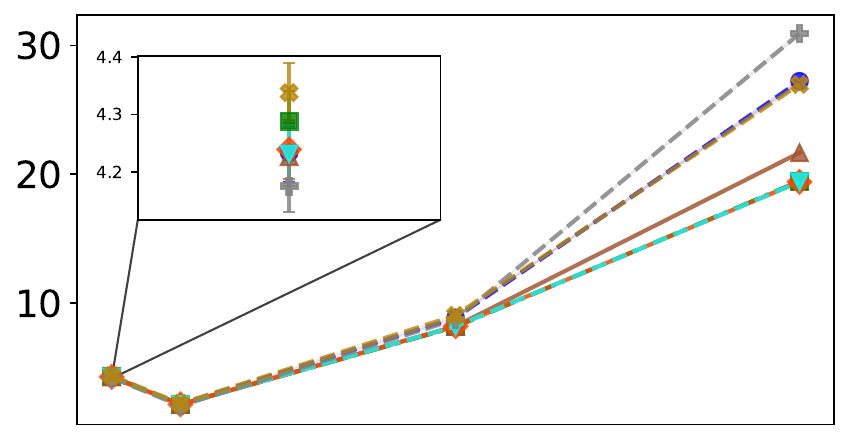} &
        \includegraphics[width=0.24\textwidth]{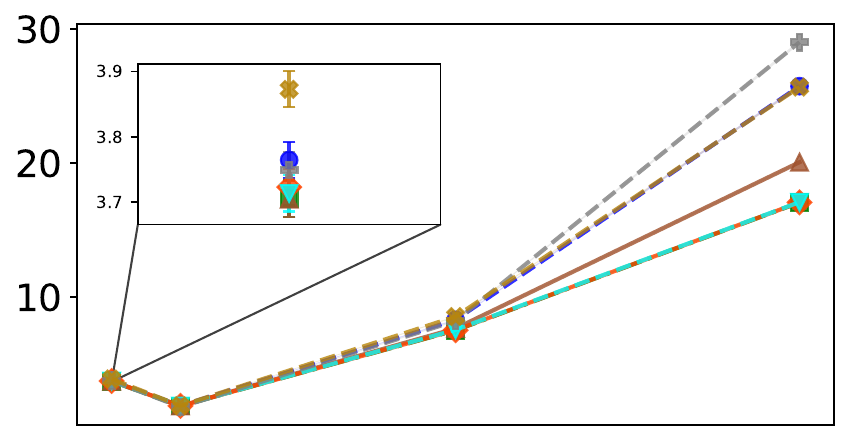} \\
    \end{tabular}

     \setlength{\tabcolsep}{2pt}
    \begin{tabular}{cccc}
        & \multicolumn{2}{c}{\footnotesize\texttt{Concrete}} & \\
        \includegraphics[width=0.24\textwidth]{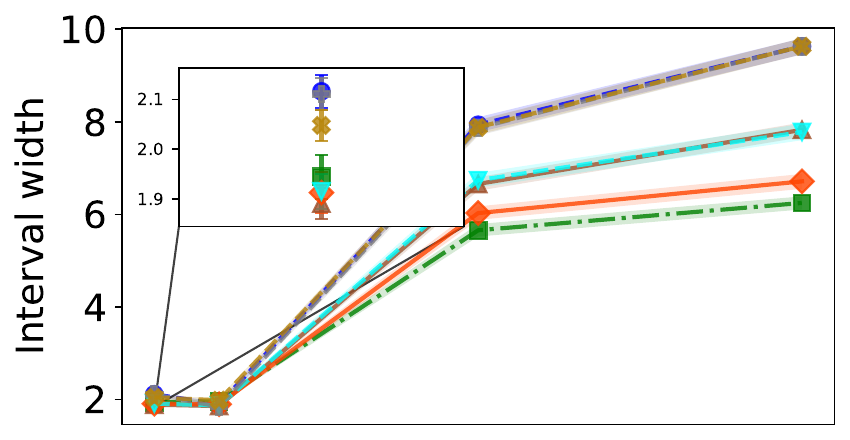} &
        \includegraphics[width=0.24\textwidth]{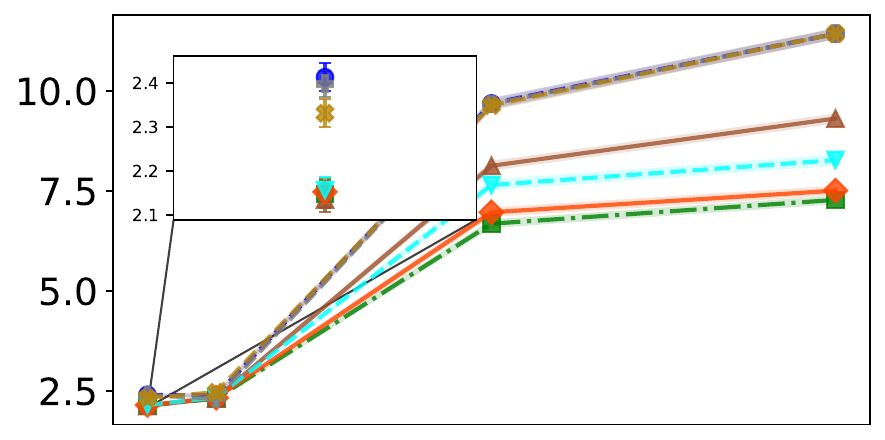} &
        \includegraphics[width=0.24\textwidth]{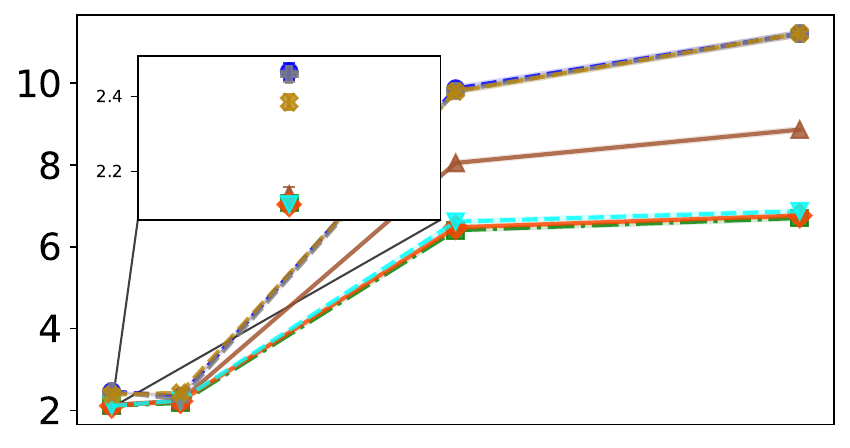} &
        \includegraphics[width=0.24\textwidth]{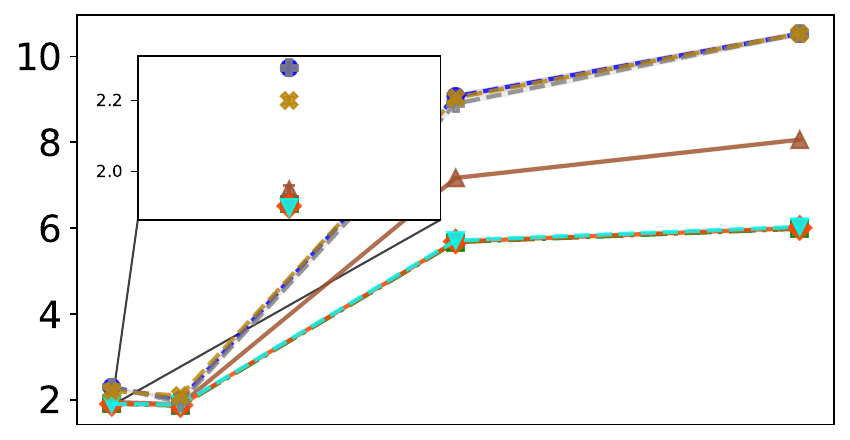} \\
    \end{tabular}

     \setlength{\tabcolsep}{2pt}
    \begin{tabular}{cccc}
        & \multicolumn{2}{c}{\footnotesize\texttt{Facebook\_1}} & \\
        \subfigure[$n_\text{cal}=5$]{\includegraphics[width=0.24\textwidth]{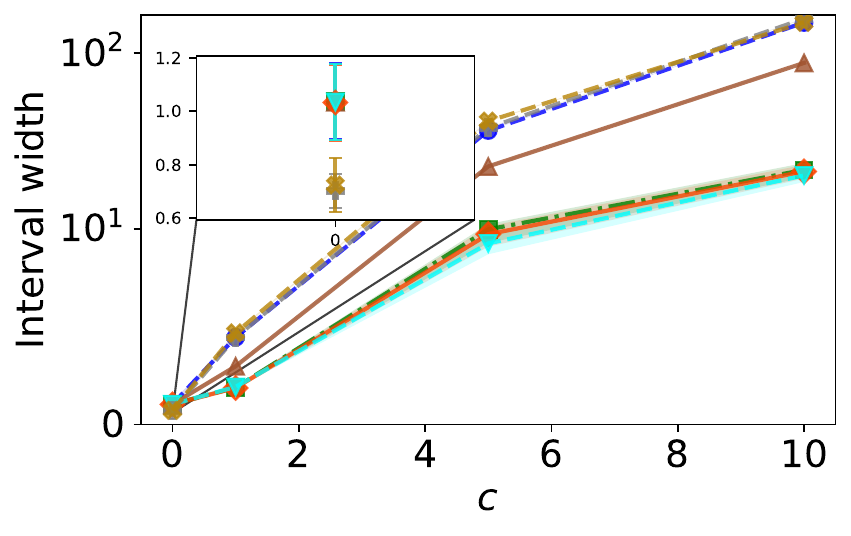}} &
        \subfigure[$n_\text{cal}=10$]{\includegraphics[width=0.24\textwidth]{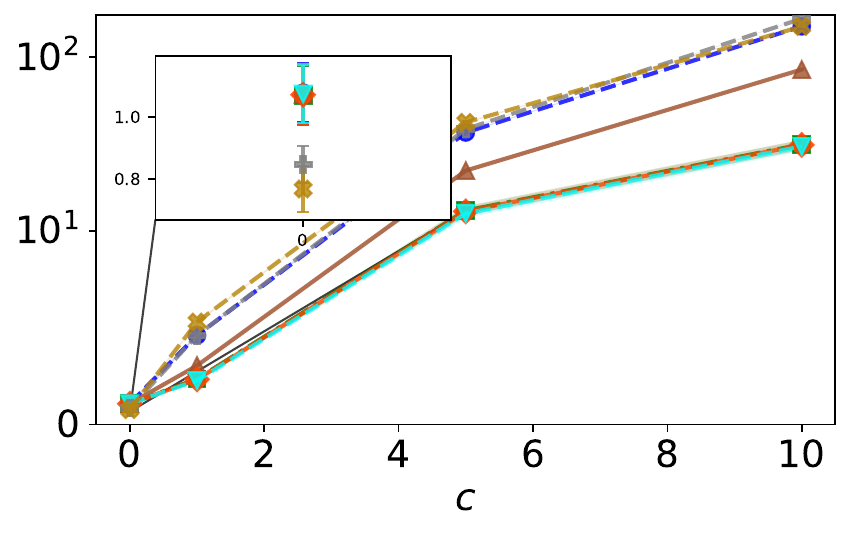}} &
        \subfigure[$n_\text{cal}=25$]{\includegraphics[width=0.24\textwidth]{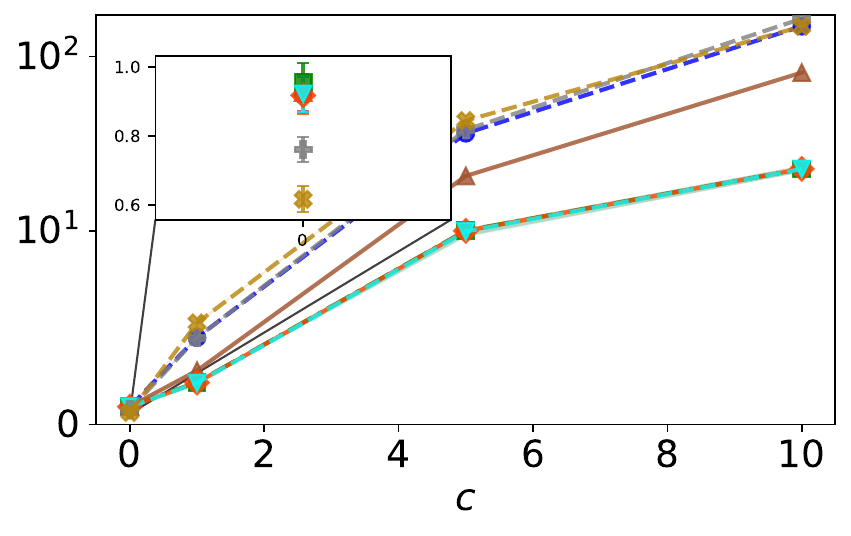}} &
        \subfigure[$n_\text{cal}=50$]{\includegraphics[width=0.24\textwidth]{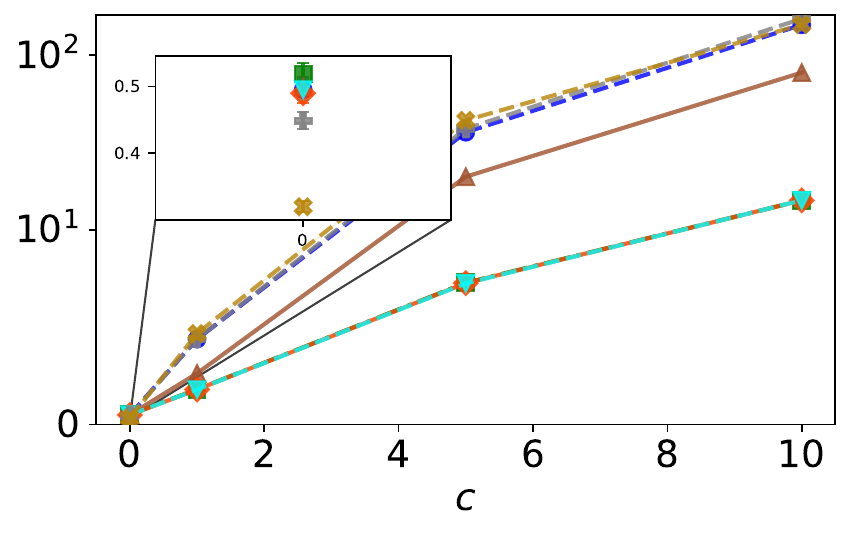}} \\
    \end{tabular}

    \vspace{-0.2cm}
    \caption{
        Interval width results for different nonconformity scores for different datasets with different levels of covariate shift. Results for different calibration sizes, $n_\text{cal}$ are shown. Each of the plots also zooms in on the results for $c= 0$ to better illustrate the differences under lower levels of covariate shift. The \texttt{Facebook\_1} results are displayed on a symlog scale to more clearly highlight the differences between the methods. Results show $\text{mean} \pm \text{standard err.}$ over 300 trials.}
    \label{fig:cov_shift}
\end{figure*}

\subsection{Synthetic Experiments}
\label{sub:synthetic}

\looseness=-1
We first demonstrate the benefits and potential limitations of our approach in a synthetic setting.

\textbf{Data:}
\looseness=-1
We generate calibration data  $\epsilon_i \sim \mathcal{N}(\theta, \sigma^2)$ with a test set of size $n_{\text{test}}=1000$. We compare different nonconformity scores for different calibration sizes $n_\text{cal} \in \{5, 10, 25, 50\}$, and different values of $\theta$ and $\sigma^2$. We use a nominal error rate of $\alpha=0.1$; when $n_\text{cal}$ is too small to satisfy the nominal error rate, we set $\alpha$ to the smallest possible error rate of $1/(n_\text{cal} + 1)$.

\looseness=-1
This simpler setting allows us to simulate the effect of varying model quality on the width of the prediction intervals. Indeed, our data $\epsilon_i$ can be considered as the residuals of some model $f$ where the mean of these residuals diverges from zero as the model's performance worsens.

\looseness=-1
\textbf{Nonconformity scores:}
We compare the two variants of \textbf{RoBAS} (\textbf{--Full} and \textbf{--EB}) with \textbf{NNG}, \textbf{DTA} and \textbf{DTO}, where $\textbf{NNG}$ refers to the nonconformity score corresponding to the \textbf{N}ormal--\textbf{N}ormal--\textbf{G}amma BWM in \eqref{eqn:bersson_hoff_model_main_body}.
For \textbf{NNG}, we set $\tau^2=1/{n_\text{cal}}$, which equally weights the influence of the prior and data and avoids the need for extra held--out data (see Appendix \ref{app:synth_dataset_setup} for additional details).


\textbf{Results:}
\looseness=-1
From Figure \ref{fig:synthetic}, we see that when $\theta \approx 0$ (i.e., $f$ is accurate), \textbf{DTO}, \textbf{NNG} and \textbf{RoBAS} perform similarly, providing the tightest prediction intervals. \textbf{DTO} benefits from its inherent bias in assuming the residuals are centred at 0, while \textbf{RoBAS} and \textbf{NNG}'s BWM leverage strong prior information concentrated near zero. In contrast, \textbf{DTA} performs the worst, a consequence of the mean estimate's high variance at small sample sizes. Moreover, \textbf{RoBAS--Full}
slightly outperforms \textbf{RoBAS--EB}, which is consistent with the
full hierarchical model's horseshoe prior inducing stronger shrinkage towards the prior mean of zero when the calibration residuals are themselves close to zero.

\looseness=-1
When the residuals are far from zero (i.e., the model is inaccurate), \textbf{DTO} and \textbf{NNG} show significantly increased widths, whereas \textbf{RoBAS} remains robust, performing similarly to \textbf{DTA} by adaptively reverting to its nonconformity score. We observe that
\textbf{RoBAS--Full} transitions more gradually to the \textbf{DTA}--like
regime. This is expected as \textbf{RoBAS--EB} point--estimates the prior variance of $\theta$ as $\hat\upsilon^2=\max(\bar r_n^2-\hat\sigma^2/n,\,0)$, which grows quadratically in $|\bar r_n|$ and drives the shrinkage factor $\hat a$ to one as soon as $|\bar r_n|$ exceeds the noise scale, whereas \textbf{RoBAS--Full} instead marginalises this variance under the horseshoe prior, whose spike at zero sustains \textbf{DTO}--like shrinkage until $|\bar r_n|$ is large enough for the heavy tail to drive reversion to \textbf{DTA} (Theorem ~\ref{thm:asymptotic_robustness}).

Moreover, we also observe that the gains at $\theta=0$ of both \textbf{RoBAS} variants diminish as $n_\text{cal}$ increases and $\sigma^2$ decreases, suggesting that our approach is most beneficial in high--noise settings or where limited calibration data is available.


\subsection{Real Datasets}
\label{sub:real_data}

\begin{table*}[t]
    \centering
    \caption{Interval widths for different nonconformity scores at different calibration sizes, $n_\text{cal}$, for the \texttt{UTKFaces} dataset. Results show the $\text{mean} \pm \text{standard err.}$ over 300 trials on the in--distribution and out--of--distribution sets of the dataset.
    }
    \label{tab:widths_utk}
    \setlength{\tabcolsep}{4pt}
    \resizebox{\textwidth}{!}{%
    \begin{tabular}{cc|c|c|c|c|c|c|c|c}
        \toprule
        \multicolumn{2}{c|}{\multirow{2}{*}{\textbf{Scores}}} & \multicolumn{4}{c|}{\textbf{IN}} & \multicolumn{4}{c}{\textbf{OUT}} \\

        \multicolumn{2}{c|}{} &
        $n_\text{cal}=5$ & $n_\text{cal}=10$ & $n_\text{cal}=25$ & $n_\text{cal}=50$ &
        $n_\text{cal}=5$ & $n_\text{cal}=10$ & $n_\text{cal}=25$ & $n_\text{cal}=50$ \\
        \midrule

        \multirow{8}{*}{\rotatebox{90}{}}
          & \textbf{DTO} & 2.834 {\scriptsize $\pm$ 0.052} & 3.388 {\scriptsize $\pm$ 0.045} & 3.387 {\scriptsize $\pm$ 0.035} & 3.141 {\scriptsize $\pm$ 0.023} & 7.554 {\scriptsize $\pm$ 0.028} & 7.861 {\scriptsize $\pm$ 0.023} & 7.870 {\scriptsize $\pm$ 0.015} & 7.717 {\scriptsize $\pm$ 0.009} \\

          & \textbf{DTA} & 3.282 {\scriptsize $\pm$ 0.070} & 3.629 {\scriptsize $\pm$ 0.054} & 3.464 {\scriptsize $\pm$ 0.036} & 3.147 {\scriptsize $\pm$ 0.022} & 2.674 {\scriptsize $\pm$ 0.054} & 2.919 {\scriptsize $\pm$ 0.040} & 2.726 {\scriptsize $\pm$ 0.024} & 2.463 {\scriptsize $\pm$ 0.015} \\

          & \textbf{NNG} & 2.919 {\scriptsize $\pm$ 0.058} & 3.450 {\scriptsize $\pm$ 0.049} & 3.402 {\scriptsize $\pm$ 0.035} & 3.137 {\scriptsize $\pm$ 0.023} & 4.998 {\scriptsize $\pm$ 0.028} & 5.022 {\scriptsize $\pm$ 0.023} & 4.885 {\scriptsize $\pm$ 0.015} & 4.677 {\scriptsize $\pm$ 0.009} \\

          & \textbf{LOCAL} & 2.805 {\scriptsize $\pm$ 0.051} & 3.361 {\scriptsize $\pm$ 0.045} & 3.339 {\scriptsize $\pm$ 0.034} & 3.091 {\scriptsize $\pm$ 0.022} & 7.558 {\scriptsize $\pm$ 0.026} & 7.839 {\scriptsize $\pm$ 0.019} & 7.847 {\scriptsize $\pm$ 0.012} & 7.728 {\scriptsize $\pm$ 0.007} \\

          & \textbf{CQR} & 2.822 {\scriptsize $\pm$ 0.061} & 3.470 {\scriptsize $\pm$ 0.052} & 3.440 {\scriptsize $\pm$ 0.040} & 3.139 {\scriptsize $\pm$ 0.026} & 7.169 {\scriptsize $\pm$ 0.031} & 7.454 {\scriptsize $\pm$ 0.024} & 7.420 {\scriptsize $\pm$ 0.018} & 7.241 {\scriptsize $\pm$ 0.009} \\
    
          \hline
          \hline

          \rowcolor{lightgray} \cellcolor{white} & \cellcolor{white} \textbf{RoBAS--Full} & 2.907 {\scriptsize $\pm$ 0.056} & 3.436 {\scriptsize $\pm$ 0.049} & 3.403 {\scriptsize $\pm$ 0.035} & 3.139 {\scriptsize $\pm$ 0.023} & 2.782 {\scriptsize $\pm$ 0.063} & 2.864 {\scriptsize $\pm$ 0.038} & 2.702 {\scriptsize $\pm$ 0.023} & 2.452 {\scriptsize $\pm$ 0.015} \\

          \rowcolor{lightgray} \cellcolor{white} & \cellcolor{white} \textbf{RoBAS--EB} & 3.069 {\scriptsize $\pm$ 0.063} & 3.498 {\scriptsize $\pm$ 0.051} & 3.421 {\scriptsize $\pm$ 0.035} & 3.140 {\scriptsize $\pm$ 0.023} & 2.670 {\scriptsize $\pm$ 0.053} & 2.901 {\scriptsize $\pm$ 0.040} & 2.716 {\scriptsize $\pm$ 0.023} & 2.457 {\scriptsize $\pm$ 0.015} \\
        \bottomrule
    \end{tabular}
    }
\end{table*}

\looseness=-1
We now demonstrate the benefits of our approach on real--world tabular and image regression datasets. We consider the setting where there is a distribution shift between the data used to train $f$ and the calibration/test set. This provides a natural testbed for our approach, as we expect our predictor to be accurate in the absence of distribution shift, but for performance to degrade when shifts occur.

\looseness=-1
\textbf{Tabular datasets:}
For tabular datasets we consider the setting where there exists covariate shift between the data used to train our predictor $f$ and our calibration/test data. This can occur in many settings when, for example, we have a black--box predictor that has been pretrained on a broad, general dataset, but we wish to calibrate and deploy it for a particular sub--population.

\looseness=-1
We consider standard UCI datasets \cite{UCIDatasets} used in previous works \cite{romano2019conformalized, tibshirani2019conformal, sesia2021conformal, zaffran2023conformal, plassier2024probabilistic}: Facebook comment volume (\texttt{facebook\_1}), airfoil self--noise (\texttt{airfoil}), concrete compressive strength (\texttt{concrete}).  We focus on small--calibration sets of sizes $n_\text{cal} \in \{5, 10, 25, 50\}$ and a nominal error rate of $0.1$; when our calibration size is too small to satisfy this we use the smallest possible error rate of $1/(n_\text{cal} + 1)$.

\looseness=-1
To simulate covariate shift for these datasets, we follow \citet{tibshirani2019conformal} and sample training points with replacement, using probabilities proportional to:
\begin{equation*}
w(\mathbf{x}) = \exp (\mathbf{x}^T\boldsymbol{\beta}), \ \  \boldsymbol{\beta} = \underbrace{\left(-c, 0, \ldots, 0, c\right)}_{d},
\end{equation*}
where $c \in \mathbb{R}^+$, $~\mathbf{x}, \boldsymbol{\beta} \in \mathbb{R}^d$, $d$ is the number of features in the dataset.
We consider varying levels of covariate shift by varying $c$, where we use $c=0$ to denote no covariate shift.

\looseness=-1
Following \citet{romano2019conformalized},
we use $20\%$ of the data for testing, and $80\%$ for training. The training set is split equally into a calibration set and a proper training set, where the proper training set is sampled as above.

\textbf{Image datasets:}
\looseness=-1
For our image regression datasets, we consider the setting where our calibration/test data are in/out--of--distribution relative to the data used to train $f$. We use the \texttt{UTKFaces} \citep{utkfaces_dataset} and \texttt{VentricularVolume} datasets from \citet{gustafsson2023reliable}. For the \texttt{UTKFaces} dataset, we subset the data so that the in--distribution data includes ages between 18--50, and the out--of--distribution data includes ages larger than 50; for the \texttt{VentricularVolume} dataset we use the in--distribution and out--of--distribution subsets provided. We report only the results for \texttt{UTKFaces} here and defer the results of \texttt{VentricularVolume} to Appendix \ref{app:additional_datasets}.

\looseness=-1
Again, we focus on small--calibration sets of sizes $n_\text{cal} \in \{5, 10, 25, 50\}$. We set our error rate in the same way as before and use the test and training sets provided. We partition the provided test set into a calibration set and a final test set using an 80/20 split; this ensures that the calibration and test data are identically distributed.

\textbf{Nonconformity scores:}
\looseness=-1
We compare both variants of \textbf{RoBAS} with \textbf{NNG}, \textbf{DTA}, \textbf{DTO}, \textbf{CQR} \citep{romano2019conformalized}, \textbf{LOCAL} \cite{guan2023localized}, \textbf{CB} \cite{Fong2021} and \textbf{CBMA} \cite{cbma}. We emphasise that while \textbf{CB} and \textbf{CBMA} are also Bayes--assisted approaches, they are not directly comparable to ours, as they conformalise the posterior predictive of a fully Bayesian model for $Y|\mathbf{X}$; in contrast, our method treats $f$ as fixed, places a BWM \emph{only} on its residuals, and also accommodates a much broader class of predictors. We thus report the results for all other baselines here and defer the comparisons with \textbf{CB} and \textbf{CBMA} to Appendix \ref{app:additional_results}. For \textbf{NNG}, we set $\tau^2=1/{n_\text{cal}}$ as before. We use the default hyperparameters from the original papers for all other nonconformity scores.

\textbf{Models:}
\looseness=-1
For the predictor $f$, we use a random forest with the default hyperparameters from \texttt{Scikit-learn} \citep{scikit-learn}.
For the image datasets, we first map the images to a lower--dimensional latent space using a pretrained encoder. For \texttt{UTKFaces}, we use a ViT \citep{vit} pretrained for facial recognition, and for \texttt{VentricularVolume} we use an ImageNet--pretrained ResNet34 \citep{he2016deep}.

\textbf{Results:}
\looseness=-1
From Figure \ref{fig:cov_shift}, we make two key observations. First, we observe that the widths of all standard and Bayes--assisted approaches expand with increasing covariate shift. The notable exception is \textbf{RoBAS}, which remains robust and performs similarly to \textbf{DTA}, achieving the smallest widths.
On the other hand, at lower levels of covariate shift, where $f$ is a better fit for the calibration set, \textbf{RoBAS} performs competitively or matches the methods with the smallest widths.

We make a similar observation for the image regression datasets from Table \ref{tab:widths_utk}. On the in--distribution subset, our method is competitive with those with the smallest widths, while on the out--of--distribution subset, it remains robust, attaining the smallest widths alongside \textbf{DTA}.

Like in the synthetic case, we note that \textbf{RoBAS--Full} often achieves slightly smaller widths than \textbf{RoBAS--EB} while transitioning more gradually towards the \textbf{DTA}--like regime. \looseness=-1

\section{Related Work}
\label{sec:main_related_work}

\paragraph{Bayes–assisted conformal prediction.} Bayes--assisted conformal prediction \citep{Vovk2005, Wasserman2011, Hoff2023, deliu2025interplay} uses BWMs to design nonconformity scores, most commonly via the negative posterior predictive density. \citet{Fong2021} extend this idea beyond conjugate settings through add--one--in importance sampling, while \citet{Hoff2023} show that, under mild conditions, the posterior predictive density score is Bayes--optimal among conformal procedures achieving the same (or higher) coverage. Building on this, \citet{Bersson2024} develop a Normal--Normal--Gamma BWM with closed--form prediction intervals for small--area estimation, and \citet{cbma} aggregate posterior predictive scores across multiple BWMs via Bayesian model averaging. \looseness=-1

\paragraph{Robust conformal prediction.}
Robustness in conformal prediction has largely focused on maintaining coverage under weakened assumptions, such as distribution shift or corrupted calibration data.
Under covariate shift, \citet{tibshirani2019conformal} propose weighted conformal methods that restore validity when the calibration/test density ratio can be estimated, while \citet{gibbs2021adaptive} develop adaptive online procedures that maintain coverage as the test distribution evolves.
For corrupted labels, \citet{feldman2023conformal} give conditions under which CP sets remain approximately valid under dispersive label noise.
The robustness addressed in our work is different in kind: we retain the usual finite--sample marginal guarantee under exchangeability, but target robustness of \emph{efficiency} to the quality of working prior information.
This addresses a failure mode specific to Bayes--assisted methods, where validity persists but efficiency can deteriorate sharply when prior information is inaccurate.

See \cref{sec:related-work} for an extended related work discussion.

\section{Discussion}
\label{sec:discussion}

In this paper, we introduced \textbf{RoBAS}, a Bayes--assisted framework for constructing robust nonconformity scores based on residual BWMs.
By design, the resulting scores adapt to the quality of the prior information: when the residuals are concentrated near zero, they yield the efficient, DTO--like prediction sets characteristic of Bayes--assisted approaches with accurate priors, while they provably revert toward the DTA score as the residual mean drifts away from zero.
Across both synthetic experiments and real-world tabular and image regression tasks with distribution shift between the training data of $f$ and the calibration/test distribution, the proposed scores remain competitive with widely used nonconformity scores in--distribution and produce substantially tighter intervals under shift.

\section*{Acknowledgments}
KA and SC are supported by the EPSRC Centre for
Doctoral Training in Modern Statistics and Statistical Machine Learning (EP/S023151/1). The authors are grateful to Guneet Singh Dhillon for helpful discussions.

\section*{Impact Statement}
Our work advances the reliability of conformal--based uncertainty quantification methods by making them robust to the bias of the underlying predictive model. This can improve the trustworthiness of uncertainty estimates in high--stakes applications such as healthcare, autonomous systems, and scientific decision--making, where reliable measures of predictive confidence are important for downstream decisions. At the same time, improved uncertainty quantification does not remove risks arising from biased data or incorrect modelling assumptions, and overly confident use of such methods could create a false sense of safety in real--world applications. As with other reliability methods, the societal impact depends on careful use, transparent reporting of assumptions, and evaluation in the specific domain where the method is applied.

\bibliography{references}
\bibliographystyle{icml2026}



\clearpage
\appendix
\onecolumn
\cleardoublepage
\phantomsection

\crefalias{section}{appendix}
\crefalias{subsection}{appendix}
\crefalias{subsubsection}{appendix}

\addcontentsline{toc}{chapter}{Appendix Table of Contents}
\renewcommand{\contentsname}{\textcolor{Bittersweet}{Appendix Table of Contents}}

\setcounter{tocdepth}{3}
\etocsetnexttocdepth{3}
\localtableofcontents
\cleardoublepage
\etocsettocstyle{\chapter*{Appendix Table of Contents}}{}

\input{appendix/related_work}
\newpage
\input{appendix/discussions}
\newpage
\input{appendix/experimental_setup}
\newpage
\input{appendix/proofs/helpful_results}

\input{appendix/proofs/proof_of_results}

\newpage
\input{appendix/additional_details}

\newpage
\input{appendix/additional_results}


\end{document}

%% file: defs.tex
\newcommand{\calX}{\mathcal{X}}
\newcommand{\calY}{\mathcal{Y}}
\newcommand{\calD}{\mathcal{D}}
\newcommand{\calZ}{\mathcal{Z}}
\newcommand{\calC}{\mathcal{C}}

\newcommand{\bbR}{\mathbb{R}}

\newcommand{\bX}{\mathbf{X}}
\newcommand{\bx}{\mathbf{x}}
\newcommand{\bZ}{\mathbf{Z}}
\newcommand{\bz}{\mathbf{z}}
\newcommand{\br}{\mathbf{r}}

\newcommand{\ncsfun}{\tilde s} 
\newcommand{\rncsfun}{s} 
\newcommand{\ncs}{s} 
\newcommand{\CI}{\calC_\alpha}

\DeclareMathOperator{\DTO}{DTO}
\DeclareMathOperator{\DTA}{DTA}

\DeclareMathOperator{\RoBAS}{RoBAS}

%% file: appendix/related_work.tex
\section{Extended Related Work}
\label{sec:related-work}

\subsection{Conformal Prediction and Efficient Prediction Sets}
Conformal prediction (CP) is a general framework for producing prediction intervals with \emph{finite--sample, distribution--free} marginal coverage under exchangeability \citep{Vovk2005}. A central practical consideration is \emph{efficiency}: among methods with the same target coverage, tighter prediction intervals are typically more useful for decision making, while overly conservative intervals can be uninformative. In CP, efficiency is driven largely by the choice of the nonconformity score, which determines how candidate labels are ranked against calibration examples; consequently, a large body of the literature has focused on designing scores that adapt to heteroscedasticity, local difficulty, or uncertainty estimates in order to reduce average widths while maintaining validity. 

A number of methods seek tighter prediction intervals by incorporating input--dependent structure into the score. For regression, \citet{romano2019conformalized} propose Conformalised Quantile Regression, which fits lower and upper conditional quantiles and conformalises the resulting residual--like errors to achieve finite-sample coverage; when the quantile model captures heteroscedasticity, intervals can be significantly shorter than those based on absolute residuals. \citet{guan2023localized} develops a localised CP framework that alters the calibration comparison (e.g., by localisation/weighting), producing tighter intervals in regions of the covariate space where the predictor is more accurate. Similarly, for the case of classification \citet{romano2020classification} develop adaptive prediction sets that use estimated class probabilities to shrink sets on ``easy'' inputs while maintaining coverage guarantees. 

Recent work also explores improving efficiency by strengthening representations or combining multiple conformal procedures. \citet{seedat2023improving} use self--supervised learning to improve adaptive CP, leveraging improved representations to sharpen uncertainty estimates and reduce widths. \citet{xie2024boosted} propose boosted conformal prediction intervals, combining/boosting procedures to reduce widths while preserving validity. \citet{kiyani2024length} study width optimisation in CP, directly targeting expected widths subject to coverage constraints. Finally, complementary theoretical work by \citet{dhillon2024expected} quantifies the expected set size in the split conformal setting by decomposing it into the nonconformity score distribution and a volume--translating multiplicative factor.

\subsection{Bayes--Assisted Conformal Prediction}
Bayes--assisted conformal prediction \citep{Vovk2005, Wasserman2011, Hoff2023, deliu2025interplay} uses Bayesian models to design nonconformity scores -- most commonly, the negative posterior predictive density under a Bayesian working model (BWM). This combination is appealing for two reasons. First, Bayesian modelling provides a principled way to incorporate prior or side information (including hierarchical structure), which is especially valuable in small--data regimes such as small area estimation \citep{Bersson2024, bersson2025frequentist}; second, when the BWM is a good description of the data, posterior predictive scores can yield highly efficient prediction intervals on average. 

A foundational perspective on reconciling Bayesian procedures with frequentist guarantees appears in \citet{Wasserman2011} (``Frasian'' inference), emphasising procedures that preserve desirable Bayesian behaviour while controlling frequentist error. \citet{pmlr-v35-burnaev14} provide a theoretical analysis of conformalised ridge regression, proving that under standard Gaussian assumptions, the method is asymptotically efficient -- producing prediction intervals that converge to the optimal Bayesian intervals -- while retaining validity guarantees under the weaker i.i.d. assumption. In the conformal context, \citet{Fong2021} develop Conformal Bayesian Computation, using posterior predictive quantities to construct conformal scores (with add--one--in sampling to amortise leave--one--out computations), allowing for Bayes--assisted CP beyond conjugate settings.

A key theoretical result for Bayes--assisted CP is due to \citet{Hoff2023}: among conformal procedures achieving the same (or higher) coverage, the posterior predictive density score is \emph{Bayes-risk optimal} (i.e., it minimises expected interval width under the assumed prior and model, under mild conditions). This shows that Bayes--assisted scores can produce substantially shorter prediction intervals when the working prior/model is accurate. 
However, this same dependence on the prior/model highlights an important practical limitation: while conformal validity is robust to misspecification, \emph{efficiency need not be}. When the working prior is poorly aligned with the observed data, Bayes--assisted intervals can become significantly wider \citep{Bersson2024}, undermining the very motivation for using Bayes--assisted scores.

Recent work by \citet{Bersson2024} develops Bayes--assisted CP by introducing a Normal--Normal--Gamma BWM that admits a closed--form expression for its prediction intervals. Applying their method to small--area estimation, they demonstrate that it effectively utilises side information to achieve lower widths on average than the Distance--To--Average score. In a similar vein, \citet{bersson2025frequentist} construct prediction sets for species abundance by encoding indirect information from neighbouring areas into a BWM. On a different note, \citet{cbma} present a Bayes--assisted score that aggregates posterior predictive density scores across multiple Bayesian models via Bayesian model averaging. They show that the aggregated score is itself a valid conformity score, and prove that if the true model is contained in the candidate model class, the resulting prediction interval converges to the optimal conformal Bayes interval (and hence achieves asymptotically optimal expected width). 

Our contribution is complementary to these works: instead of specifying a BWM for the typically high--dimensional conditional data--generating process, we apply a BWM only to the \emph{scalar residuals} of a fixed predictive model and design a 
BWM 
that adapts to the mean of those residuals.

\subsection{Robust Conformal Prediction}
\label{app:related_robust_cp}
Robustness in CP has largely focused on maintaining coverage guarantees under weakened assumptions, such as departures from exchangeability, distribution shift, or corrupted calibration data due to label noise or missingness. Under distribution shift, an important setting is covariate shift between the data used for calibration and testing. \citet{tibshirani2019conformal} study CP under covariate shift and propose weighted conformal methods (based on importance weights) that restore validity when the density ratio between test and calibration distributions can be estimated. Complementarily, \citet{gibbs2021adaptive} develop adaptive conformal inference under distribution shift, proposing online procedures that aim to maintain coverage as the test distribution evolves over time or varies across environments. 

Beyond distribution shift, robustness has also been studied for corrupted calibration labels. \citet{feldman2023conformal} analyse the effect of dispersive label noise and provide conditions under which CP sets remain approximately valid despite mislabelled data. \citet{penso2024conformal} propose an alternative conformal score designed to reduce sensitivity to label noise, improving empirical robustness of prediction sets when calibration labels are unreliable. Finally, \citet{zaffran2023conformal} study CP with missing values and introduce procedures that handle incomplete covariates while retaining coverage guarantees under assumptions on the missingness mechanism.

The robustness addressed in our work is different in kind. We do \emph{not} modify the conformal validity guarantee itself: our coverage is the usual finite--sample marginal guarantee under exchangeability of calibration/test pairs. Instead, we focus on robustness of \emph{efficiency} to the \emph{quality of working information in the prior} used to define our Bayes--assisted score. In our setup, prior quality is linked to the quality of the underlying predictor through the mean of its residuals.  This robustness to the working information in the prior is complementary to (rather than a substitute for) robustness to non--exchangeability or robustness to label noise: it targets a failure mode specific to Bayes--assisted conformal methods, where validity persists but efficiency can deteriorate sharply when prior information is wrong. 

%% file: appendix/discussions.tex
\section{Further Discussion}
\label{app:discussion}

Here, we describe limitations of our work and interesting directions for future work.

\paragraph{Robustness is currently targeted to mean misalignment.}
A limitation of the current robustness guarantee is that it is specifically tied to the \emph{mean of the residuals} (equivalently, systematic bias in the predictor’s mean prediction). This is formulated in terms of the residual average $\bar r_{n}$: as $|\bar r_{n}|\to\infty$, the Bayes--assisted score becomes equivalent to the Distance--To--Average score (Theorem \ref{thm:asymptotic_robustness} and Proposition \ref{prop:asymptotic_robustness_eb}).  This means RoBAS is most useful in scenarios where model degradation manifests as a shift in the residual mean -- for example, when a pretrained black--box predictor becomes biased on a target subpopulation, increasing the average residual away from zero. 

An interesting direction for future work is to broaden robustness beyond mean effects. In many applications, model degradation may instead (or additionally) appear through changes in the \emph{variance} or tail behaviour of the residuals, even when the residual mean is close to zero. Since {RoBAS}’s adaptive behaviour is designed around down--weighting an informative prior on the \emph{mean} when $\bar r_{n}$ conflicts with that prior, it may not protect against width inflation when the dominant misspecification is in the residual \emph{scale}.  A natural extension would be to introduce analogous hierarchical/heavy--tailed adaptivity for scale parameters (e.g., priors or empirical Bayes shrinkage for $\sigma^2$), so that the score can revert to an appropriate robust baseline not only under mean shift but also under variance shift.

\paragraph{Computational efficiency: toward fully closed--form endpoints.}
While both {RoBAS--Full} and {RoBAS--EB} admit simple expressions for their scores (eliminating the MCMC averaging required by other Bayes--assisted approaches), the prediction interval endpoints are still computed via an iterative procedure, relying on evaluations of the conformal $p$--value. In contrast, for the Normal--Normal--Gamma  model there exists a \emph{closed--form expression} for the prediction interval \citep{Bersson2024}. Ideally, we would also like to calculate {RoBAS} intervals using direct formulas, avoiding the need for root--finding.

Currently, the scores for both {RoBAS--Full} and {RoBAS--EB} do not appear to admit the same simplifying structure exploited in the Normal--Normal--Gamma model to obtain fully closed--form prediction intervals. It would be interesting to explore whether {RoBAS--Full} or {RoBAS--EB} can be approximated to yield closed--form endpoints without losing their key qualitative properties.
One direction is to search for transformations or surrogate scores that are (approximately) order--equivalent, since conformal sets are invariant to strictly monotone transformations of the score.

\paragraph{Additional practical considerations.}
The paper’s validity results (as standard in split conformal) require exchangeability between calibration and test pairs and treat the predictor $f$ as fixed; the “distribution shift” studied is between the training data used to fit $f$ and the calibration/test distribution, rather than between calibration and test, so coverage is not itself challenged by non--exchangeability. Extending the robustness concept to settings where calibration and test are not exchangeable would be an interesting direction for future work.

%% file: appendix/experimental_setup.tex
\section{Experimental Setup}
\label{app:exp_setup}

We provide the full details of our experimental setup in this section. We describe the setup for each of the datasets in turn.

\subsection{Synthetic Dataset}
\label{app:synth_dataset_setup}

\paragraph{Dataset.}
We generate data  $\epsilon_i \sim \mathcal{N}(\theta, \sigma^2)$ with a test set of size $n_{\text{test}}=1000$. We compare different nonconformity scores for different calibration sizes $n_\text{cal} \in \{5, 10, 25, 50\}$, and different values of $\theta$ and $\sigma^2$. 

\looseness=-1
Note that we do \emph{not} have a training set here. Indeed, our data $\epsilon_i$ can be considered as the residuals of some model $f$ where the mean of these residuals diverges from zero as the model's performance decreases. This setting allows us to simulate the effect of varying model quality on the width of the prediction intervals. 

\paragraph{Nonconformity scores.}
We compare \textbf{RoBAS--Full} and \textbf{RoBAS--EB} with \textbf{NNG}, \textbf{DTA} and \textbf{DTO}. We chose these nonconformity scores to highlight the strengths and limitations of our approach across different regimes. Specifically, in this setting \textbf{DTO} serves as an oracle score when $\theta=0$ (i.e. when we have an unbiased predictor), while \textbf{DTA} acts as a robust baseline when $|\theta| \gg 0$ (i.e. when we have a biased predictor).

For \textbf{NNG}, we compute the prediction intervals using the closed--form expression in Theorem~2 of \citet{Bersson2024}, which requires specifying the hyperparameter $\tau^2$. This can be done using a separate validation set or available prior information. Given the limited access to both of these in the standard conformal prediction setting, we instead set $\tau^2=1/{n_\text{cal}}$, which equally weights the influence of the prior and data. This is achieved by considering the posterior of $\theta$ for BWM \eqref{eqn:bersson_hoff_model_main_body}. Indeed, the posterior of $\theta$ for observed data $r_{1:n}=\{r_i\}_{i=1}^n$ is a Student's t--distribution with location given by:
\begin{equation*}
    \mu_\text{post} = \frac{W_\text{data}\bar{r}_{1:n}}{W_\text{prior}+W_\text{data}},
\end{equation*}
where $W_\text{prior} = 1/\tau^2$ and $W_\text{data} = n$. Balancing these weights naturally gives $\tau^2=1/{n_\text{cal}}$.

\paragraph{Prediction intervals.}
We use a nominal error rate of $\alpha=0.1$; when $n_\text{cal}$ is too small to satisfy the nominal error rate, we set $\alpha$ to the smallest possible error rate of $1/(n_\text{cal} + 1)$. We 
compute our prediction intervals using Algorithm \ref{alg:interval}. For \textbf{NNG}, we use the closed--form expression for the prediction interval in \citet{Bersson2024}. All experiments are repeated for 300 trials.

\paragraph{Models.} As discussed earlier, we do not require a model for this setting.

\paragraph{Computational resources.} All experiments were run on a machine with an Intel Xeon Gold 6132 (28 logical CPU cores) and an NVIDIA GeForce RTX 2080 Ti GPU.

\subsection{Tabular Datasets}
\label{app:tab_dataset_setup}

\paragraph{Datasets.}
For tabular datasets we consider the setting where there exists covariate shift between the data used to train our predictor $f$ and our calibration/test data. This occurs in many settings when, for example, we have a black--box predictor that has been pretrained on a broad, general dataset, but we wish to calibrate and deploy it for a particular sub--population.

We consider standard UCI datasets \cite{UCIDatasets} used in previous works \cite{romano2019conformalized, tibshirani2019conformal, sesia2021conformal, zaffran2023conformal, plassier2024probabilistic}: Facebook comment volume (\texttt{facebook\_1}), airfoil self--noise (\texttt{airfoil}), concrete compressive strength (\texttt{concrete}).  We provide a  brief description of these datasets below.
\begin{itemize}
    \item  \texttt{Facebook\_1}: This dataset involves predicting the volume of comments a post will receive within a specific timeframe. The features ($d=54$) are derived from metadata and page popularity metrics, such as the number of likes and the length of time the post has been published.
    
    \item  \texttt{Airfoil}: Obtained from NASA, this dataset tasks the predictor with estimating the scaled sound pressure level (in decibels) of an airfoil. The $d=5$ features describe aerodynamic properties, including the frequency, angle of attack, chord length, free--stream velocity, and suction side displacement thickness.
    
    \item \texttt{Concrete}: This dataset relates the composition of concrete mixtures to their structural integrity. The goal is to regress the compressive strength (in MPa) based on $d=8$ input variables representing the age of the concrete and the quantities of ingredients such as cement, blast furnace slag, fly ash, water, and superplasticizer.
\end{itemize}

\looseness=-1
To simulate covariate shift for these datasets, we follow \citet{tibshirani2019conformal} and sample training points with replacement, using probabilities proportional to:
\begin{equation*}
w(\mathbf{x}) = \exp (\mathbf{x}^T\boldsymbol{\beta}), \ \  \boldsymbol{\beta} = \underbrace{\left(-c, 0, \ldots, 0, c\right)}_{d},
\end{equation*}
where $c \in \mathbb{R}^+$, $~\mathbf{x}, \boldsymbol{\beta} \in \mathbb{R}^d$, and $d$ is the number of features in the dataset. This corresponds to sampling from an \textit{exponentially tilted} covariate distribution, where data points with smaller values for the first continuous feature and larger values for the last are more likely to be sampled as $c$ increases. We consider different levels of covariate shift by varying $c$, where we use $c=0$ to denote no covariate shift.

\looseness=-1
Following \citet{romano2019conformalized}, 
we use $20\%$ of the data for testing, and $80\%$ for training. The training set is split equally into a proper training set used to train our predictor and a calibration set, where the proper training set is sampled as above. This ensures that our calibration/test sets follow the same distribution. To make our experiments computationally tractable on our machine, we cap the number of training points to 5000. Moreover, we vary the number of calibration points by sampling $n_\text{cal} \in \{5, 10, 25, 50\}$ from the calibration set. We also present the results using the full calibration set in Appendix \ref{app:additional_results}.

\paragraph{Nonconformity scores.}
\looseness=-1
We compare \textbf{RoBAS--EB} and \textbf{RoBAS--Full} with \textbf{NNG}, \textbf{DTA}, \textbf{DTO}, \textbf{CQR} \citep{romano2019conformalized}, \textbf{LOCAL} \cite{guan2023localized}, \textbf{CB} \cite{Fong2021} and \textbf{CBMA} \cite{cbma}. {We emphasise that while \textbf{CB} and \textbf{CBMA} are also Bayes--assisted approaches, they are not directly comparable to ours as they conformalise the posterior predictive of a full Bayesian model for $Y|\mathbf{X}$, whereas our method treats $f$ as fixed and places a BWM \emph{only} on its residuals. In particular, their approaches strictly requires the use of Bayesian models. Nonetheless, we report their results in Appendix \ref{app:additional_results} for completeness.}

We set the $\tau^2$ parameter of $\textbf{NNG}$ in the same way as Appendix \ref{app:synth_dataset_setup}. For all remaining methods, we adopt the hyperparameter settings from their respective citations.

\paragraph{Prediction intervals.} For all the nonconformity scores, we use the training set, $\calD_\text{train}$, to fit a predictor $f: \calX \to \calY$. To make the comparisons with \textbf{CB} and \textbf{CBMA} fair, we fit its posterior on the training set. Below we provide further details:
\begin{itemize}
    \item \textbf{CQR}, \textbf{LOCAL}: For these nonconformity scores, we use our training set to learn a predictor $f: \calX \to \calY$. We then use this learned predictor to compute our nonconformity scores on the calibration set. The prediction intervals are computed following the same procedure as in the original papers.

    \item \textbf{RoBAS--Full}, \textbf{RoBAS--EB} \textbf{NNG}, \textbf{DTO}, \textbf{DTA}: For these nonconformity scores, we use our training set to learn a predictor $f: \calX \to \calY$. We then use this learned predictor to compute our nonconformity scores on the calibration set. We compute our prediction intervals using Algorithm \ref{alg:interval}. For \textbf{NNG}, we use the closed--form expression for the prediction interval in \citet{Bersson2024}.

    \item \textbf{CB, CBMA}: To make comparisons fair and make use of the available training data for all other approaches, we modify the nonconformity score to be the training--conditional density, $-p(y|\bx, \calD_\text{train})$, with the calibration set being used exclusively for computing the nonconformity scores.
    We compute the prediction intervals using a fine grid of candidate $y$ values in a similar way to \textbf{CB}.\footnote{We found that the original grid of 100 equally spaced candidates in $\left[\min(\{y^\text{cal}_i\}_{i=1}^{n_\text{cal}}) - 2, \max(\{y^\text{cal}_i\}_{i=1}^{n_\text{cal}}) + 2\right]$ where $y_i^{\text{cal}}$ is the $i$th calibration response value, failed to provide us with the desired coverage guarantee. We therefore implemented an adaptive gridding method which expanded the grid outward until the extreme candidate values were rejected.}
    
\end{itemize}

\paragraph{Models.} 

Firstly, note that \textbf{CB} and \textbf{CBMA} require strictly Bayesian models, while all other nonconformity scores can use any arbitrary predictor $f: \calX \to \calY$. We detail the model choices for the different nonconformity scores below:
\begin{itemize}
    \item \textbf{CQR}, \textbf{LOCAL}, \textbf{DTO}, \textbf{DTA}, \textbf{NNG}, \textbf{RoBAS--Full}, \textbf{RoBAS--EB}: We follow \textbf{CQR}, and use a random forest model. We choose this over their kernel ridge regression and neural network model as it is a simple out--of--the--box choice and because it has been implemented for both the \textbf{CQR} and \textbf{LOCAL} nonconformity scores in their codebase. We choose the default hyperparameters from \texttt{scikit-learn} \citep{scikit-learn}.

    \item \textbf{CB, CBMA}: For \textbf{CB}, we use a Bayesian linear regression model like in \citet{Fong2021}. Like \citet{cbma}, we fit four different Bayesian linear regression models for \textbf{CBMA}, where model $i \in \{1, 2, 3, 4\}$ uses the first $d \times i/4$ features.  For \textbf{CB}, we run 4 MCMC chains to sample from the posterior, where we take 100 samples from each chain. We do the same for \textbf{CBMA}, except that we now have to run 4 MCMC chains for each model.
\end{itemize}

\paragraph{Computational resources.} All experiments were run on a machine with an Intel Xeon Gold 6132 (28 logical CPU cores) and an NVIDIA GeForce RTX 2080 Ti GPU.

\subsection{Image Datasets}
\label{app:image_dataset_setup}

\paragraph{Datasets.}
For our image regression datasets, we consider the setting where we have in/out--of--distribution training data. We use the \texttt{UTKFaces} \citep{utkfaces_dataset} and \texttt{VentricularVolume} datasets from \citet{gustafsson2023reliable}. We provide a brief description of these datasets below:
\begin{itemize}
    \item \texttt{UTKFaces}: This dataset consists of over 20,000 aligned face images covering a large diversity of ages, genders, and ethnicities. The regression task is to predict the \textit{age} of an individual given their face image.

    \item  \texttt{VentricularVolume}: This is a medical imaging dataset consisting of 5,088 cardiac MRI scans derived from the UK Biobank. The regression task involves predicting the size of the left ventricle based on the MRI scan.
\end{itemize}

For the \texttt{UTKFaces} dataset, we first subset the data so that the in--distribution data includes ages between 18--50, and the out--of--distribution data includes ages larger than 50. Our training data always comes from the in--distribution subset, while the calibration/test data are either the in--distribution or out--of--distribution subset. For the \texttt{VentricularVolume} dataset we use the in--distribution and out--of--distribution subsets provided. 

For both datasets, we use the training and test sets provided. We split the provided test set equally into a calibration set and a final test set; this ensures that the calibration and test data are identically distributed.
We vary the number of calibration points by sampling $n_\text{cal} \in \{5, 10, 25, 50\}$ from the calibration set. We also present the results using the full calibration set in Appendix \ref{app:additional_results}. As with the tabular datasets, we cap the training set at 5,000 points to keep the experiments computationally tractable on our machine.

\paragraph{Nonconformity scores.} We use the same nonconformity scores with the same setup as in Appendix \ref{app:tab_dataset_setup}.

\paragraph{Prediction intervals.}  We compute our prediction intervals in the same way as in Appendix \ref{app:tab_dataset_setup}.

\paragraph{Models.} We use the same models as in Appendix \ref{app:tab_dataset_setup}, where we now first map our high--dimensional inputs into a lower--dimensional latent space using a pretrained encoder. Specifically, for \texttt{UTKFaces} we use a ViT \citep{vit} pretrained for facial recognition, and for \texttt{VentricularVolume} we use an ImageNet--pretrained ResNet34 \citep{he2016deep}\footnote{More specifically, for \texttt{UTKFaces} we use the ViT/Ti-8 from \url{https://github.com/gau-nernst/timm-face}, while for \texttt{VentricularVolume} we use the ImageNet--pretrained ResNet34 model from \texttt{torchvision} \citep{torchvision2016}.}.

\paragraph{Computational resources.} All experiments were run on a machine with an Intel Xeon Gold 6132 (28 logical CPU cores) and an NVIDIA GeForce RTX 2080 Ti GPU.

%% file: appendix/proofs/helpful_results.tex
\section{Proofs}
\label{app:proofs}

\Cref{sub:helpful_results} presents three useful results: \cref{lem:marginal_likelihood_equivalence} shows the equivalence of the Bayes--assisted nonconformity score \eqref{eqn:bayes_assisted_score} and the leave--one--out marginal likelihood score, \cref{lem:prediction_region_interval} provides a sufficient condition for the conformal prediction set to be an interval, and \cref{lem:compact_uniform_tail_equivalence} establishes a compact--uniform version of the tail--transfer property for normal scale mixtures with regularly varying mixing density.
The latter two results are used in the proof of \cref{thm:robas_interval} and \cref{thm:asymptotic_robustness_estimated_variance}, respectively.
The following sections provide proofs for results mentioned in the main body of the paper.
Throughout, we follow the same setup as in \S\ref{sec:conformal_back}.

\subsection{Auxiliary Results}
\label{sub:helpful_results}


\begin{restatable}{lemma}{marginallikelihoodequivalence}
\label{lem:marginal_likelihood_equivalence}
For any $i,j \in \{1, \ldots, n+1\}$, the ordering of posterior predictive scores is equivalent to the ordering of marginal likelihoods of the leave--one--out data.
That is,
$$
    p(\bz_i|\bz_{1:n+1,-i}) \ge p(\bz_j|\bz_{1:n+1,-j}) \quad\text{if and only if}\quad p(\bz_{1:n+1,-j}) \ge p(\bz_{1:n+1,-i}).
$$
\end{restatable}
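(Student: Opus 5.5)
The lemma follows directly from the definition of conditional density, so I plan a short identity-based argument. Fix the augmented dataset $\bz_{1:n+1}$, with $\bz_{n+1}$ evaluated at the candidate $y$. For each $i \in \{1,\ldots,n+1\}$, the definition of conditional density gives
\begin{equation*}
p(\bz_i \mid \bz_{1:n+1,-i}) = \frac{p(\bz_{1:n+1})}{p(\bz_{1:n+1,-i})}.
\end{equation*}
This uses exchangeability of the working model: under the BWM the observations are conditionally i.i.d.\ given the parameters, so the joint marginal density $p(\cdot)$ is a symmetric function of its arguments. Hence the numerator $p(\bz_{1:n+1})$ does not depend on which index is singled out, and the denominator is the marginal likelihood of the remaining $n$ points.

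With this identity in hand, the comparison reduces to a ratio comparison. For indices $i,j$,
\begin{equation*}
p(\bz_i\mid \bz_{1:n+1,-i}) \ge p(\bz_j\mid \bz_{1:n+1,-j})
\iff \frac{p(\bz_{1:n+1})}{p(\bz_{1:n+1,-i})} \ge \frac{p(\bz_{1:n+1})}{p(\bz_{1:n+1,-j})}.
\end{equation*}
Dividing by the common positive factor $p(\bz_{1:n+1})$ and taking reciprocals, which reverses the inequality for positive quantities, gives the claimed equivalence $p(\bz_{1:n+1,-j}) \ge p(\bz_{1:n+1,-i})$.

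The main step needing care is justifying the positivity and finiteness of all marginal densities involved. These are what make the division and reciprocation legitimate. For the Gaussian-based BWMs considered here, with proper or improper-but-integrable posteriors, every marginal is a strictly positive finite integral. I would state this as a standing assumption of the lemma.

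I would also note the interpretation for the residual setting. The same argument applies verbatim with $\br$ in place of $\bz$ when the BWM is on the residuals, because $p(\br_{1:n+1})$ is common to all scores. Ranking the scores $\ncs_i(y)$ is therefore equivalent to ranking the leave-one-out marginal likelihoods, with the order reversed. This is what is needed later to derive the closed form \eqref{eq:score_horseshoe} via the leave-one-out marginal likelihood.
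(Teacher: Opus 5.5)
Your proposal is correct and follows the paper's proof essentially verbatim. You write each posterior predictive as $p(\bz_{1:n+1})/p(\bz_{1:n+1,-i})$, cancel the common joint marginal, and take reciprocals; your remarks on the symmetry of the joint under the exchangeable working model and on strict positivity and finiteness of the marginals make explicit what the paper leaves implicit, without changing the argument.
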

\begin{proof}
Let $\bz_{1:n+1} = \{\bz_i\}_{i=1}^{n+1}$, and, for any index $i\in\{1,\dots,n+1\}$, write $\bz_{1:n+1,-i} = \bz_{1:n+1} \backslash \{\bz_i\}$.

Fix any $i,j\in\{1,\dots,n+1\}$. Then,
\begin{align*}
    p(\bz_i\mid \bz_{1:n+1,-i}) \ge p(\bz_j\mid \bz_{1:n+1,-j}) &\iff \frac{p(\bz_{1:n+1})}{p(\bz_{1:n+1,-i})} \ge \frac{p(\bz_{1:n+1})}{p(\bz_{1:n+1,-j})} \\
    &\iff \frac{1}{p(\bz_{1:n+1,-i})} \ge \frac{1}{p(\bz_{1:n+1,-j})} \\
    &\iff p(\bz_{1:n+1,-j}) \ge p(\bz_{1:n+1,-i}).
\end{align*}

This proves the equivalence of the two orderings.
In particular, this lemma shows that the Bayes--assisted nonconformity score, $-\,p(\bz_i\mid \bz_{1:n+1,-i})$, is equivalent (up to a strictly monotone transformation) to the leave--one--out marginal likelihood score, $p(\bz_{1:n+1,-i})$.
\end{proof}

\begin{restatable}[{\citealp[Lemmas 1--2]{Bersson2024}}]{lemma}{predictionregioninterval}
    \label{lem:prediction_region_interval}
    Fix $\bx_{n+1}$ and $\bz_{1:n}$, and let $\ncs_1(y),\ldots,\ncs_{n+1}(y)$ denote the nonconformity scores associated with a candidate response $y\in\bbR$.
    Define
    \[
        I_i=\{y\in\bbR \mid \ncs_i(y) \ge \ncs_{n+1}(y)\}, \qquad i=1,\ldots,n+1.
    \]
    If each $I_i$ is an interval and there exists $y_0\in\bbR$ such that $y_0\in I_i$ for all $i=1,\ldots,n+1$, then, for every $\alpha\in[0,1)$,
    \[
        \{y\in\bbR:\rho(y)>\alpha\}
    \]
    is an interval.
    Consequently, the conformal prediction set
    \[
        C_\alpha(\bx_{n+1};\bz_{1:n}) = \{y\in\mathcal Y:\rho(y)>\alpha\}
    \]
    is an interval, up to intersection with $\mathcal Y$.
\end{restatable}

\begin{lemma}[Compact--uniform local tail equivalence]
\label{lem:compact_uniform_tail_equivalence}
Assume that the mixing density satisfies
\[
    g_{\tau^2}(w)\sim Cw^{-\delta},
    \qquad w\to\infty,
\]
for some constants $C>0$ and $\delta>1$. Let $\mu\in\bbR$ and $\gamma>0$ be
fixed. For $a>0$, define
\[
    f_a(z)
    =
    \int_0^\infty
    \varphi\!\left(z;\mu,a+\gamma w\right)
    g_{\tau^2}(w)\,dw,
\]
where $\varphi(\cdot;\mu,\nu^2)$ denotes the density of
$\mathcal N(\mu,\nu^2)$. Then, for every compact $K\subset(0,\infty)$ and every
bounded $B\subset\bbR$,
\[
    \sup_{a\in K}\sup_{\xi\in B}
    \left|
    \frac{f_a(x+\xi)}{f_a(x)}-1
    \right|
    \to 0
    \qquad\text{as } |x|\to\infty .
\]
\end{lemma}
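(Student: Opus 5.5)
Write $v=a+\gamma w$ and note that $\varphi(x+\xi;\mu,v)/\varphi(x;\mu,v)=\exp\{-(2(x-\mu)\xi+\xi^2)/(2v)\}$. So the plan is to show that, as $|x|\to\infty$, the mixture $f_a(x)$ is dominated by mixing values with $w$ of order $x^2$. On that range this Gaussian ratio is uniformly close to one. We then control the rest of the range separately.

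Concretely, fix $\varepsilon\in(0,1)$ and split the integral defining $f_a(x)$ into three regions: $w\le \varepsilon x^2$, $\varepsilon x^2<w<x^2/\varepsilon$, and $w\ge x^2/\varepsilon$.

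\textbf{Lower bound on $f_a(x)$.} Keep only the middle region. There, by regular variation, $g_{\tau^2}(w)\ge (C/2)w^{-\delta}$ for large $|x|$. Also $\varphi(x;\mu,v)\gtrsim v^{-1/2}\exp(-(x-\mu)^2/(2\gamma\varepsilon x^2))$, uniformly in $a\in K$, since $a$ is bounded. The substitution $w=x^2t$ then gives
\[
f_a(x)\ge c_\varepsilon |x|^{-2\delta+1},
\]
with $c_\varepsilon>0$ independent of $a\in K$. In fact the same substitution shows that $|x|^{2\delta-1}f_a(x)$ converges to a positive constant, uniformly on $K$, though only the lower bound is needed.

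\textbf{Outer regions are negligible.} For $w\le\varepsilon x^2$:
\begin{itemize}
\item When $w\le \sqrt{|x|}$, say, the Gaussian factor is at most $a_{\min}^{-1/2}\exp(-cx^2/|x|^{1/2})$. This is super-polynomially small, uniformly for $\xi\in B$ and $a\in K$, because $a\ge\min K>0$; this is where compactness of $K$ is used.
\item When $\sqrt{|x|}<w\le\varepsilon x^2$, we have $g_{\tau^2}(w)\le 2Cw^{-\delta}$. After the substitution, the contribution is $|x|^{-2\delta+1}\int_0^\varepsilon t^{-\delta-1/2}e^{-1/(2\gamma t)}(\cdots)\,dt$, which is $o_\varepsilon(1)\cdot|x|^{-2\delta+1}$ as $\varepsilon\to0$.
\end{itemize}
For $w\ge x^2/\varepsilon$, the bound $\varphi\le (2\pi\gamma w)^{-1/2}$ together with $g_{\tau^2}(w)\le 2Cw^{-\delta}$ gives a contribution $O(\varepsilon^{\delta-1/2}|x|^{-2\delta+1})$. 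This uses $\delta>1$. All of these bounds hold for both $f_a(x)$ and $f_a(x+\xi)$, because $|x+\xi|/|x|\to1$ uniformly over the bounded set $B$.

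\textbf{Middle region.} On $\varepsilon x^2<w<x^2/\varepsilon$ we have $|2(x-\mu)\xi+\xi^2|/(2v)\le c_B/(\varepsilon|x|)\to0$, uniformly in $\xi\in B$ and $a\in K$. Hence there the integrands of $f_a(x+\xi)$ and $f_a(x)$ agree up to a factor $1+o(1)$. Putting the pieces together,
\[
\left|\frac{f_a(x+\xi)}{f_a(x)}-1\right|\le o(1)+\eta(\varepsilon)/c_\varepsilon .
\]

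\textbf{Main obstacle.} The last bound only closes if $\eta(\varepsilon)/c_\varepsilon\to0$ as $\varepsilon\to0$, so the tail contributions must be measured against the \emph{full} asymptotic constant rather than against the middle-region lower bound. To do this, I would take the lower bound from the region $[\varepsilon_0x^2,x^2/\varepsilon_0]$ with $\varepsilon_0$ fixed, and send only the splitting parameter $\varepsilon\to0$. The essential point is that all constants are uniform over the compact $K$ and the bounded $B$.
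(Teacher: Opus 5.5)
Your argument is correct, but it is organised differently from the paper's. The paper first proves a stronger fact: $f_a(z)\sim A|z-\mu|^{1-2\delta}$ uniformly over $a\in K$, for an explicit constant $A$. It cuts the $w$-integral at a fixed $T$ beyond which $|g_{\tau^2}(u)/(Cu^{-\delta})-1|\le\varepsilon$, discards $w<T$ as exponentially small, and evaluates the remainder exactly via the rescaling $w=y^2r/\gamma$ and dominated convergence. The ratio statement then follows in one line from $(|x+\xi-\mu|/|x-\mu|)^{1-2\delta}\to1$.

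You never identify the limit. Instead you compare the integrands of $f_a(x+\xi)$ and $f_a(x)$ directly on the window $\varepsilon x^2<w<x^2/\varepsilon$. There $\varphi(x+\xi;\mu,v)/\varphi(x;\mu,v)=\exp\{O(1/(\varepsilon|x|))\}$ uniformly in $a\in K$ and $\xi\in B$. You bound the complement by $\eta(\varepsilon)|x|^{1-2\delta}$ plus a super-polynomially small term, and divide by a lower bound $c_{\varepsilon_0}|x|^{1-2\delta}$ taken from a window with $\varepsilon_0$ fixed. Taking $\limsup_{|x|\to\infty}$ before $\varepsilon\to0$ closes the argument, exactly as in your last paragraph. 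Accordingly, the $\varepsilon$ in your displayed lower bound should read $\varepsilon_0$, and the ``main obstacle'' disappears once the lower bound is decoupled from the splitting parameter.

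The paper's route buys a stronger intermediate result and a trivial final step, but it genuinely uses $g_{\tau^2}(w)w^{\delta}\to C$ inside the dominated-convergence limit. Yours uses only two-sided bounds $c_1w^{-\delta}\le g_{\tau^2}(w)\le c_2w^{-\delta}$ for large $w$, so it proves the local ratio property under a weaker hypothesis. It also makes the mechanism transparent: all but a vanishing fraction of the mass of $f_a(x)$ comes from mixing variances of order $x^2$, on which a bounded shift of $x$ is invisible.

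A minor point: your bound on $w\ge x^2/\varepsilon$ needs only $\delta>1/2$. The assumption $\delta>1$ is what makes $g_{\tau^2}$ integrable.
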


\begin{proof}
This is a compact--uniform version of the standard tail--transfer property for
normal scale mixtures with regularly varying mixing density; it uses the Uniform
Convergence Theorem and Potter bounds for regularly varying functions, see
\citet[Theorems~1.5.2 and~1.5.6]{Bingham1987}. We give the details
for completeness.

Let
\[
    a_-=\inf K>0,\qquad a_+=\sup K<\infty,
\]
and write
\[
    y=|z-\mu|.
\]
We first prove the uniform tail asymptotic
\[
    \sup_{a\in K}
    \left|
    \frac{f_a(z)}
    {A|z-\mu|^{1-2\delta}}
    -1
    \right|
    \to 0
    \qquad\text{as } |z|\to\infty,
\]
where
\[
    A
    =
    C\gamma^{\delta-1}(2\pi)^{-1/2}
    2^{\delta-1/2}\Gamma\!\left(\delta-\frac12\right).
\]

Fix $T>0$ and decompose
\[
    f_a(z)=I_{a,T}(z)+J_{a,T}(z),
\]
where $I_{a,T}$ integrates over $w\in(0,T)$ and $J_{a,T}$ integrates over
$w\in[T,\infty)$. Since $a+\gamma w\in[a_-,a_++\gamma T]$ for
$a\in K$ and $w\in(0,T)$,
\[
    I_{a,T}(z)
    \le
    (2\pi a_-)^{-1/2}
    \exp\!\left\{
        -\frac{y^2}{2(a_++\gamma T)}
    \right\}
    \int_0^T g_{\tau^2}(w)\,dw .
\]
Therefore
\[
    I_{a,T}(z)=o(y^{1-2\delta})
    \qquad\text{uniformly over } a\in K.
\]

It remains to study $J_{a,T}$. Use the change of variables
\[
    w=\frac{y^2r}{\gamma},
    \qquad
    dw=\frac{y^2}{\gamma}\,dr .
\]
Then
\begin{align*}
    J_{a,T}(z)
    &=
    \frac{y}{\gamma\sqrt{2\pi}}
    \int_{\gamma T/y^2}^{\infty}
    \left(r+\frac{a}{y^2}\right)^{-1/2}
    \exp\!\left\{
        -\frac{1}{2(r+a/y^2)}
    \right\}
    g_{\tau^2}\!\left(\frac{y^2r}{\gamma}\right)
    \,dr .
\end{align*}
Define
\[
    R(u)=\frac{g_{\tau^2}(u)}{C u^{-\delta}}.
\]
By assumption, $R(u)\to1$ as $u\to\infty$. Hence
\[
    g_{\tau^2}\!\left(\frac{y^2r}{\gamma}\right)
    =
    C\gamma^\delta y^{-2\delta}r^{-\delta}
    R\!\left(\frac{y^2r}{\gamma}\right).
\]
Thus
\begin{align*}
    \frac{J_{a,T}(z)}
    {C\gamma^{\delta-1}(2\pi)^{-1/2}y^{1-2\delta}}
    &=
    \int_{\gamma T/y^2}^{\infty}
    \left(r+\frac{a}{y^2}\right)^{-1/2}
    \exp\!\left\{
        -\frac{1}{2(r+a/y^2)}
    \right\}
    r^{-\delta}
    R\!\left(\frac{y^2r}{\gamma}\right)
    \,dr .
\end{align*}

Let $\varepsilon>0$. Choose $T$ sufficiently large that
\[
    T\ge \frac{a_+}{\gamma}
    \qquad\text{and}\qquad
    |R(u)-1|\le\varepsilon
    \quad\text{for all }u\ge T.
\]
For $r\ge \gamma T/y^2$ and $a\in K$,
\[
    \frac{a}{y^2}
    \le
    \frac{a_+}{\gamma T}r
    \le r,
\]
so
\[
    r\le r+\frac{a}{y^2}\le 2r.
\]
Consequently,
\[
    \left(r+\frac{a}{y^2}\right)^{-1/2}
    \exp\!\left\{
        -\frac{1}{2(r+a/y^2)}
    \right\}
    r^{-\delta}
    \le
    r^{-\delta-1/2}\exp\!\left(-\frac{1}{4r}\right).
\]
The right-hand side is integrable on $(0,\infty)$: the exponential term controls
the origin, and $\delta>1$ controls infinity. Moreover, for each fixed $r>0$,
\[
    \left(r+\frac{a}{y^2}\right)^{-1/2}
    \exp\!\left\{
        -\frac{1}{2(r+a/y^2)}
    \right\}
    r^{-\delta}
    \to
    r^{-\delta-1/2}\exp\!\left(-\frac{1}{2r}\right)
\]
uniformly over $a\in K$. Dominated convergence therefore gives
\[
    \int_{\gamma T/y^2}^{\infty}
    \left(r+\frac{a}{y^2}\right)^{-1/2}
    \exp\!\left\{
        -\frac{1}{2(r+a/y^2)}
    \right\}
    r^{-\delta}
    \,dr
    \to
    \int_0^\infty
    r^{-\delta-1/2}\exp\!\left(-\frac{1}{2r}\right)\,dr
\]
uniformly over $a\in K$. The replacement of
$R(y^2r/\gamma)$ by $1$ has limsup bounded by
\[
    \varepsilon
    \int_0^\infty
    r^{-\delta-1/2}\exp\!\left(-\frac{1}{4r}\right)\,dr,
\]
which is finite. Since $\varepsilon>0$ is arbitrary,
\[
    \frac{J_{a,T}(z)}
    {C\gamma^{\delta-1}(2\pi)^{-1/2}y^{1-2\delta}}
    \to
    \int_0^\infty
    r^{-\delta-1/2}\exp\!\left(-\frac{1}{2r}\right)\,dr
\]
uniformly over $a\in K$. Finally, with the change of variables
$q=1/(2r)$,
\[
    \int_0^\infty
    r^{-\delta-1/2}\exp\!\left(-\frac{1}{2r}\right)\,dr
    =
    2^{\delta-1/2}\Gamma\!\left(\delta-\frac12\right).
\]
Combining this with the negligible small-$w$ contribution proves
\[
    f_a(z)
    \sim
    A|z-\mu|^{1-2\delta}
\]
uniformly over $a\in K$.

We now deduce the desired local ratio property. Let
\[
    M=\sup_{\xi\in B}|\xi|<\infty.
\]
Since $B$ is bounded, $|x+\xi-\mu|\to\infty$ uniformly over $\xi\in B$ as
$|x|\to\infty$. Hence the preceding asymptotic gives, uniformly over
$a\in K$ and $\xi\in B$,
\[
    f_a(x+\xi)
    =
    A|x+\xi-\mu|^{1-2\delta}\{1+o(1)\},
\]
and, uniformly over $a\in K$,
\[
    f_a(x)
    =
    A|x-\mu|^{1-2\delta}\{1+o(1)\}.
\]
Therefore
\[
    \frac{f_a(x+\xi)}{f_a(x)}
    =
    \frac{1+o(1)}{1+o(1)}
    \left(
        \frac{|x+\xi-\mu|}{|x-\mu|}
    \right)^{1-2\delta}
\]
uniformly over $a\in K$ and $\xi\in B$. Finally,
\[
    \sup_{\xi\in B}
    \left|
        \frac{|x+\xi-\mu|}{|x-\mu|}-1
    \right|
    \le
    \frac{M}{|x-\mu|}
    \to 0.
\]
The power term therefore converges to $1$ uniformly over $\xi\in B$, proving
\[
    \sup_{a\in K}\sup_{\xi\in B}
    \left|
    \frac{f_a(x+\xi)}{f_a(x)}-1
    \right|
    \to 0 .
\]
\end{proof}

%% file: appendix/proofs/proof_of_results.tex
\subsection{Proofs of Section \ref{sub:approach_nonconf_score}}
\asymprobrestated*
\begin{proof}
For notational convenience, write
$$
\bar r_m := \bar r_n^{(m)}, \qquad
\br_m := \br_{1:n}^{(m)}.
$$
Under BWM \eqref{eq:robust_working_model}, the likelihood depends on $\br_m$ through $\bar r_m$. We first show that 
\begin{equation*}
\theta - \bar r_m
\;\Big|\; \br_m \Rightarrow \mathcal{N}\!\left(0,\frac{\sigma^2}{n}\right)
\qquad\text{as }m\to\infty.
\end{equation*}
That is, for large $m$, the posterior distribution of $\theta$ given $\br_m$ is approximately $\mathcal N(\bar r_m, \frac{\sigma^2}{n})$. 
Closely related
large-observation posterior robustness results go back to
\citet{Dawid1973}; see also \citet{Pericchi1992,
Pericchi1995}.

Write the model in canonical form with natural parameter
\begin{equation*}
\eta = \frac{n}{\sigma^2}\theta.
\end{equation*}
Following Tweedie's formula for exponential families (see \citet[\S 2]{Efron01122011}), the posterior cumulant generating function of $\eta$ given $\br_m$ is
\begin{equation*}
\kappa_m(\xi)
= \log \mathbb{E}\!\left[e^{\eta \xi}\mid \br_m\right]
= \lambda(\bar r_m+\xi)-\lambda(\bar r_m),
\end{equation*}
where
\begin{equation*}
\lambda(z) = \log\frac{f(z)}{f_0(z)},\qquad
f_0(z)=\mathcal{N}\!\left(z;0,\frac{\sigma^2}{n}\right),\qquad
f(z)=\int_0^\infty \mathcal{N}\!\left(z;0,\frac{\sigma^2}{n}+\gamma\tau^2\right)
g_{\tau^2}(\tau^2)\,d\tau^2.
\end{equation*}

Consider the centred natural parameter
\begin{equation*}
\eta_m^\star \;=\; \eta - \frac{n}{\sigma^2}\bar r_m.
\end{equation*}
Its conditional log--mgf is
\begin{align*}
\log \mathbb{E}\!\left[e^{\eta_m^\star \xi}\mid \br_m\right]
&= \kappa_m(\xi) - \frac{n}{\sigma^2}\bar r_m \xi\\
&= \log\frac{f(\bar r_m+\xi)}{f(\bar r_m)}
   -\log\frac{f_0(\bar r_m+\xi)}{f_0(\bar r_m)}
   -\frac{n}{\sigma^2}\bar r_m \xi\\
&= \log\frac{f(\bar r_m+\xi)}{f(\bar r_m)}
   + \frac{n}{2\sigma^2}\,\xi^2,
\end{align*}
where the final equality uses the explicit Gaussian form of $f_0$.

By the regular-variation assumption on $g_{\tau^2}$, the induced marginal density $f$ is regularly varying as $|z|\to\infty$ (see \citet[Proposition~4.3]{cortinovis2024bayes}).
Therefore, for each fixed $\xi \in \bbR$,
\begin{equation*}
\frac{f(\bar r_m+\xi)}{f(\bar r_m)} \to 1
\qquad\text{as }m\to\infty,
\end{equation*}
because $|\bar r_m|\to\infty$, and so $\log\!\big(f(\bar r_m+\xi)/f(\bar r_m)\big)\to 0$.
Consequently,
\begin{equation*}
\log \mathbb{E}\!\left[e^{\eta_m^\star \xi}\mid \br_m\right]
\;\longrightarrow\;
\frac{n}{2\sigma^2}\,\xi^2
\qquad\text{as }m\to\infty.
\end{equation*}
The limit $\frac{n}{2\sigma^2}\xi^2$ is exactly the cumulant generating function of $\mathcal{N}(0,n/\sigma^2)$. Hence,
\begin{equation*}
\eta_m^\star \mid \br_m \Rightarrow \mathcal{N}\!\left(0,\frac{n}{\sigma^2}\right)
\qquad\text{as }m\to\infty.
\end{equation*}
Transforming back to $\theta = (\sigma^2/n)\eta$ gives
\begin{equation*}
\theta - \bar r_m
= \frac{\sigma^2}{n}\,\eta_m^\star
\;\Big|\; \br_m \Rightarrow \mathcal{N}\!\left(0,\frac{\sigma^2}{n}\right)
\qquad\text{as }m\to\infty.
\end{equation*}

Let $\varphi(\cdot;\mu,\nu^2)$ denote the density of $\mathcal{N}(\mu,\nu^2)$.
Under BWM \eqref{eq:robust_working_model}, for fixed $\Delta\in\bbR$,
\begin{align*}
   p(\bar r_m+\Delta\mid \br_m) &= \mathbb{E}\left[p(\bar r_m + \Delta \mid \theta) \mid \br_m\right]  \\
   &= \mathbb{E}\left[\varphi(\bar r_m + \Delta; \theta, \sigma^2) \mid \br_m \right] \\
   &= \mathbb{E}\left[\varphi(\Delta - (\theta - \bar r_m); 0, \sigma^2) \mid \br_m \right].
\end{align*}
Since the map $z \mapsto \varphi(\Delta-z;0, \sigma^2)$ is bounded and continuous, we have that
\begin{equation*}
   p(\bar r_m+\Delta \mid \br_m) \to \mathbb{E}\left[\varphi(\Delta-Z; 0, \sigma^2)\right] \quad\text{as }m\to\infty,
\end{equation*}
where the expectation is taken with respect to $Z\sim \mathcal{N}(0,\sigma^2/n)$.
This gives
$$
\rncsfun(\bar r_m + \Delta,\br_m)=-p(\bar r_m + \Delta \mid \br_m)\to
h_n(|\Delta|) \qquad\text{as }m\to\infty,
$$
where
$$
h_n(u)=-\frac{1}{\sqrt{2\pi\sigma^2(1+1/n)}}\exp\left(-\frac{u^2}{2\sigma^2(1+1/n)}\right), \quad u \geq 0.
$$
As $h_n$ is strictly monotone increasing, the limiting score function is equivalent to the DTA score function.
\end{proof}

\dtabayesassisted*
\begin{proof}
Fix $\sigma>0$.
The likelihood for the given BWM is:
\begin{equation*}
L(\theta; \br_{1:n})
\;\propto\;
\exp\!\left(-\frac{1}{2\sigma^2}\sum_{i=1}^n (r_i-\theta)^2\right)
=
\exp\!\left(-\frac{n}{2\sigma^2}(\theta-\bar r_n)^2\right),
\end{equation*}

Since the prior is constant, the posterior is proportional to the likelihood, giving
\begin{equation*}
\theta \mid \br_{1:n} \sim \mathcal N\!\left(\bar r_n,\frac{\sigma^2}{n}\right).
\end{equation*}

The posterior predictive of our BWM is therefore:
\begin{equation*}
p(r_{n+1}\mid \br_{1:n})
=
\int p(r_{n+1}\mid \theta)\,p(\theta\mid \br_{1:n})\,d\theta
=
\int \mathcal N(r_{n+1};\theta,\sigma^2)\,
      \mathcal N\!\left(\theta;\bar r_n,\frac{\sigma^2}{n}\right)\,d\theta.
\end{equation*}

This is a convolution of Gaussians, so
\begin{equation*}
r_{n+1}\mid \br_{1:n} \sim
\mathcal N\!\left(\bar r_n,\ \sigma^2+\frac{\sigma^2}{n}\right) = \mathcal N\!\left(\bar r_n,\ \sigma^2\Big(1+\frac{1}{n}\Big)\right).
\end{equation*}

The Bayes--assisted nonconformity score is therefore:
\begin{equation*}
- p(r_{n+1}\mid \br_{1:n})
=
- \frac{1}{\sqrt{2\pi\sigma^2(1+1/n)}}
\exp\!\left(
-\frac{(r_{n+1}-\bar r_n)^2}{2\sigma^2(1+1/n)}
\right).
\end{equation*}

As a function of $|r_{n+1}-\bar r_n|$, the right-hand side is a strictly monotone transformation. Thus,
the Bayes--assisted score for the given BWM is equivalent to the DTA score.

\end{proof}

\subsection{Proofs of Section \ref{sub:computing_score}}
\label{app:proofs_eb}

\FixedNNScore*
\begin{proof}

Let $\bz_{1:n+1} = \{\bz_i\}_{i=1}^{n+1}$ and $\br_{1:n+1}=\{r_i\}_{i=1}^{n+1}$. The likelihood for BWM \eqref{eq:eb_robust_working_model} is given by:
\begin{equation*}
p(\br_{1:n}\mid\theta,\sigma^2)
\propto \exp\!\left(-\frac{1}{2\sigma^2}\sum_{i = 1}^n (r_i-\theta)^2\right)
\propto_\theta \exp\!\left(-\frac{n}{2\sigma^2}(\theta-\bar r_{n})^2\right),
\end{equation*}
so conjugacy yields the posterior
\begin{equation*}
\theta\mid \br_{1:n},\sigma^2,\upsilon^2 \sim
\mathcal N\!\left(m_{n},V_{n}\right),
\qquad
V_{n}=\left(\frac{n}{\sigma^2}+\frac{1}{\upsilon^2}\right)^{-1},
\qquad
m_{n}=V_{n}\cdot\frac{n}{\sigma^2}\bar r_{n}.
\end{equation*}

Writing
\begin{equation*}
a(\sigma^2,\upsilon^2):=\frac{\upsilon^2}{\upsilon^2+\sigma^2/n},
\end{equation*}
we have $m_{n}=a(\sigma^2,\upsilon^2)\,\bar r_{n}$.

The
predictive density for $r_{n+1}$ is obtained by integrating out $\theta$:
\begin{equation*}
p(r_{n+1}\mid \br_{1:n},\sigma^2,\upsilon^2)
=\int \mathcal N(r_{n+1};\theta,\sigma^2)\,
      \mathcal N(\theta;m_{n},V_{n})\,d\theta
=\mathcal N(r_{n+1};m_{n},\sigma^2+V_{n}).
\end{equation*}

Therefore the Bayes--assisted nonconformity score is given by:
\begin{equation*}
-p(r_{n+1}\mid \br_{1:n},\sigma^2,\upsilon^2)
=
-\frac{1}{\sqrt{2\pi(\sigma^2+V_{n})}}
\exp\!\left(-\frac{(r_{n+1}-m_{n})^2}{2(\sigma^2+V_{n})}\right).
\end{equation*}

Noting that the above is a strictly monotone transformation of $|r_{n+1}-m_{n}|$, we can conclude that the Bayes--assisted nonconformity score is equivalent to:
\begin{equation*}
s(r_{n+1}, \br_{1:n})
=\big|r_{n+1} - m_{n}\big|=\big|r_{n+1}-a(\sigma^2,\upsilon^2)\,\bar r_{n}\big|.
\end{equation*}
\end{proof}

\paragraph{Empirical Bayes estimates of hyperparameters.}

Consider BWM \eqref{eq:eb_robust_working_model}.
We derive the empirical Bayes estimates \eqref{eq:eb_sigma_hat}--\eqref{eq:eb_upsilon_hat} for the hyperparameters $\sigma^2$ and $\upsilon^2$ based on the observed residuals $\br_{1:n}$.

Define the residual sum of squares about $\bar r_{n}$ by
\begin{equation*}
S_{1:n}=\sum_{i=1}^n(r_i-\bar r_{n})^2.
\end{equation*}
Maximising the Gaussian likelihood $\prod_{i=1}^n \mathcal N(r_i;\theta,\sigma^2)$
over $\theta$ gives $\hat\theta=\bar r_{n}$, and substituting back yields the
usual MLE
\begin{equation*}
\widehat\sigma^2(r_{1:n})=\frac{1}{n}S_{1:n}=\frac{1}{n}\sum_{i=1}^n (r_i - \bar r_{n})^2.
\end{equation*}

Next, under the prior $\theta\sim\mathcal N(0,\upsilon^2)$ and conditional model, the sample mean satisfies
\begin{equation*}
\bar r_{n}\mid \upsilon^2,\sigma^2 \sim \mathcal N\!\left(0,\upsilon^2+\frac{\sigma^2}{n}\right).
\end{equation*}

Using the plug--in $\sigma^2=\widehat\sigma^2(\br_{1:n})$, the (type-II) marginal
likelihood for $\upsilon^2$ is proportional to
\begin{equation*}
\bigg(\upsilon^2+\frac{\widehat\sigma^2(\br_{1:n})}{n}\bigg)^{-1/2}
\exp\!\left(-\frac{\bar r_{n}^{\,2}}{2\big(\upsilon^2+\widehat\sigma^2(\br_{1:n})/n\big)}\right),
\end{equation*}
whose maximiser over $\upsilon^2\ge 0$ is
\begin{equation*}
\widehat\upsilon^2(\br_{1:n})
=\max\!\left(\bar r_{n}^{\,2}-\frac{\widehat\sigma^2(\br_{1:n})}{n},\,0\right).
\end{equation*}

\paragraph{Remark on the plug--in empirical Bayes predictive score.}

The equivalence in \cref{prop:fixed_nn_score} relies on treating $\sigma^2$ and $\upsilon^2$ as fixed.
Indeed, its proof shows that the posterior predictive variance is $\sigma^2+V_n$, where
\[
    V_n=\left(\frac{n}{\sigma^2}+\frac{1}{\upsilon^2}\right)^{-1}.
\]
For fixed $\sigma^2$ and $\upsilon^2$, this quantity is common across the augmented scores, and hence the Bayes--assisted score based on the posterior predictive density is equivalent to the absolute--deviation score in \cref{prop:fixed_nn_score}.

This equivalence no longer holds if one instead plugs the EB estimates directly into the posterior predictive density.
To see this, define
\[
\widehat V_n(\br_{1:n})
=
\left(
\frac{n}{\widehat\sigma^2(\br_{1:n})}
+
\frac{1}{\widehat\upsilon^2(\br_{1:n})}
\right)^{-1},
\]
with the usual limiting convention when $\widehat\upsilon^2(\br_{1:n})=0$. The plug--in posterior predictive density then has mean $\widehat a(\br_{1:n})\bar r_n$ and variance
\[
\widehat\sigma^2(\br_{1:n})+\widehat V_n(\br_{1:n}).
\]
Applying a strictly increasing transformation to the negative predictive density, the corresponding plug--in empirical Bayes predictive score is equivalent to
\[
\frac{
\left(r_{n+1}-\widehat a(\br_{1:n})\bar r_n\right)^2
}{
\widehat\sigma^2(\br_{1:n})+\widehat V_n(\br_{1:n})
}
+
\log\!\left(
\widehat\sigma^2(\br_{1:n})+\widehat V_n(\br_{1:n})
\right).
\]
Equivalently, using the RoBAS--EB score \eqref{eqn:eb_robust_nonconformity_score}, this can be written as
\[
\frac{
\rncsfun(r_{n+1},\br_{1:n})^2
}{
\widehat\sigma^2(\br_{1:n})+\widehat V_n(\br_{1:n})
}
+
\log\!\left(
\widehat\sigma^2(\br_{1:n})+\widehat V_n(\br_{1:n})
\right).
\]
Thus RoBAS--EB should be viewed as plugging EB estimates into the fixed--hyperparameter Bayes--assisted score, rather than as the exact score obtained from the full EB posterior predictive density.
In the latter case, the predictive scale depends on the reference residual vector and therefore cannot, in general, be cancelled across the augmented conformal scores.

\EBasymprobrestated*

\begin{proof}
For notational convenience, write
$$
    \bar r_m := \bar r_n^{(m)}, \qquad
    \br_m := \br_{1:n}^{(m)}.
$$
Since $|\bar r_m|\to\infty$ and $\widehat\sigma^2(\br_m)/|\bar r_m|\to 0$, we also have
$$
    \frac{\widehat\sigma^2(\br_m)}{n\bar r_m^2} \to 0 \quad\text{as }m\to\infty.
$$
Hence, for $m$ sufficiently large,
$$
    \widehat{\upsilon}^2(\br_m) = \bar r_m^2-\frac{\widehat\sigma^2(\br_m)}{n},
$$
and therefore
$$
    \widehat a(\br_m) = \frac{\bar r_m^2-\widehat\sigma^2(\br_m)/n}{\bar r_m^2} = 1-\frac{\widehat\sigma^2(\br_m)}{n\bar r_m^2}.
$$
Consequently,
$$
    (1- \widehat a(\br_m))\bar r_m = \frac{\widehat\sigma^2(\br_m)}{n\bar r_m} \to 0 \quad\text{as }m\to\infty.
$$
For any fixed $\Delta\in\bbR$, the RoBAS score \eqref{eqn:eb_robust_nonconformity_score} evaluated at $r_{n+1}=\bar r_m+\Delta$ is given by
$$
\rncsfun(\bar r_m+\Delta,\br_m) = \left|\bar r_m+\Delta-\widehat a(\br_m)\bar r_m\right| = \left|\Delta+(1- \widehat a(\br_m))\bar r_m\right| \to |\Delta| \quad\text{as } m\to\infty,
$$
which is the DTA score evaluated at $r_{n+1}=\bar r_m+\Delta$.
\end{proof}

\subsection{Proofs of Section \ref{sub:approach_comp_interval}}

\robasinterval*

\begin{proof}
By \cref{lem:prediction_region_interval}, it suffices to show that the
pairwise regions
\[
    I_i=\{y\in\bbR:\ncs_i(y)\ge \ncs_{n+1}(y)\},
    \qquad i=1,\ldots,n+1,
\]
are intervals containing a common point $y_0 \in \bbR$.

We prove this in residual coordinates. Let
\[
    r = y-f(\bx_{n+1})
\]
be the candidate residual corresponding to the candidate response $y$.
Since $f(\bx_{n+1})$ is fixed, the map $y\mapsto r$ is a translation.
Thus, if the corresponding residual--space pairwise regions are intervals and contain a common residual value $r_0 \in \bbR$, then the response--space regions $I_i$ are intervals containing the corresponding translated common point $y_0 = r_0 + f(\bx_{n+1})$.

In particular, let $s : \bbR \times \bbR^n \to \bbR$ denote the RoBAS residual--based nonconformity score \eqref{eqn:eb_robust_nonconformity_score}.
Then, as in \cref{eq:conformityscoresres}, the nonconformity scores associated with a candidate response $y$ reduce to
\begin{align*}
    \ncs_i(y) &= s(r_i, \br_{1:n,-i} \cup \{y - f(\bx_{n+1})\}), \qquad i=1,\ldots,n, \\
    \ncs_{n+1}(y) &= s(y - f(\bx_{n+1}), \br_{1:n}),
\end{align*}
where $r_i = y_i - f(\bx_i)$ and $\br_{1:n,-i} = \br_{1:n} \setminus \{r_i\}$.
With a slight abuse of notation, we write the same scores as functions of the candidate residual $r$:
\begin{align*}
    \ncs_i(r) &:= s(r_i,\br_{1:n,-i}\cup\{r\}), \qquad i=1,\ldots,n, \\
    \ncs_{n+1}(r) &:= s(r,\br_{1:n}).
\end{align*}
Define the residual--space pairwise regions
\[
    J_i=\{r\in\bbR:\ncs_i(r)\ge \ncs_{n+1}(r)\},
    \qquad i=1,\ldots,n+1.
\]
Then,
\[
    I_i=\{f(\bx_{n+1})+r:r\in J_i\},
    \qquad i=1,\ldots,n+1.
\]
Hence, it is enough to show that each $J_i$ is an interval and that all the sets $J_i$ contain a common residual value $r_0\in\bbR$.

For any residual vector $\mathbf{v}_{1:n}\in\bbR^n$, the RoBAS centre
\[
    m(\mathbf{v}_{1:n}) = \widehat a(\mathbf{v}_{1:n})\bar{\mathbf{v}}_{1:n} = a(\widehat\sigma^2(\mathbf{v}_{1:n}), \widehat\upsilon^2(\mathbf{v}_{1:n})) \bar{\mathbf{v}}_{1:n}
\]
depends on $\mathbf{v}_{1:n}$ only through its empirical mean and variance, and is thus invariant to permutations of $\mathbf{v}_{1:n}$.
We therefore also write
\[
    m(A) = \widehat a(A) \bar A = a(\widehat\sigma^2(A), \widehat\upsilon^2(A)) \bar A,
\]
for an $n$--element residual multiset $A$ to mean $m(\mathbf{v}_{1:n})$ for any ordering $\mathbf{v}_{1:n}$ of the elements of $A$.

Let $r_0 = m(\br_{1:n})$.  Then, the test score is given by $\ncs_{n+1}(r)=|r - r_0|$. For $i=1,\ldots,n$, define the leave--one--out augmented residual multiset
\[
    A_i(r)=\br_{1:n,-i}\cup\{r\},
\]
and write
\[
    h_i(r)=m(A_i(r)).
\]
The corresponding calibration score is given by
\[
    \ncs_i(r)=|r_i - h_i(r)|.
\]
We first show that each map $h_i$ is a contraction in $r$.
Fix $i\in\{1,\ldots,n\}$ and write the fixed residuals in $\br_{1:n,-i}$ as
$v_1,\ldots,v_{n-1}$. Let
\[
    S = \sum_{j=1}^{n-1}v _j,
    \qquad
    Q = \sum_{j=1}^{n-1} v_j^2,
    \qquad
    u(r) = S + r,
\]
and define the multiset
\[
    B(r)=\{v_1,\ldots,v_{n-1},r\}.
\]
Then,
\[
    \bar B(r)=\frac{u(r)}{n},
    \qquad
    \widehat\sigma^2(B(r)) = \frac{Q+r^2}{n}-\frac{u(r)^2}{n^2}.
\]
The positive part in $\widehat\upsilon^2(B(r))$ is active if and only if
\[
    \bar B(r)^2 > \frac{\widehat\sigma^2(B(r))}{n}.
\]
or, equivalently, if and only if
\[
    D(r) > 0, \quad\text{where}\quad D(r) = (n+1)u(r)^2 - n(Q+r^2).
\]
When $D(r)\le 0$, the RoBAS centre is zero.
We therefore call $D(r)>0$ the active region and $D(r)\le0$ the inactive region.

Consider the active region $D(r) > 0$. There, the RoBAS centre is given by
\[
    m(B(r))
    =
    \bar B(r)-\frac{\widehat\sigma^2(B(r))}{n\bar B(r)}
    =
    \frac{D(r)}{n^2u(r)},
\]
which is well--defined since $u(r)=0$ implies $D(r)=-n(Q+r^2)\le 0$.
Differentiating with respect to $r$ gives
\[
    \frac{d}{dr}m(B(r))
    =
    \frac{u(r)^2+n(Q+S^2)}{n^2u(r)^2}.
\]
Set
\[
    t=\frac{S}{u(r)},
    \qquad
    w=\frac{Q}{u(r)^2}.
\]
Then,
\begin{equation}
    \frac{d}{dr}m(B(r))
    =
    \frac{1}{n^2}+\frac{w+t^2}{n}.
    \label{eq:derivative_m_B}
\end{equation}
By the Cauchy--Schwarz inequality,
\[
    Q\ge \frac{S^2}{n-1},
\]
and hence
\begin{equation}
    w\ge \frac{t^2}{n-1}.
    \label{eq:ineq_1}
\end{equation}
On the other hand, the active region condition $D(r)>0$ can be rewritten as
\[
    D(r) = u(r)^2 + n (S^2 + 2 S (u(r)- S) - Q) = u(r)^2 + n u(r)^2 (-t^2 + 2 t - w) > 0,
\]
which implies
\begin{equation}
    w < \frac{1}{n} + 2 t - t^2.
    \label{eq:ineq_2}
\end{equation}
Combining inequalities \eqref{eq:ineq_1}--\eqref{eq:ineq_2} yields
\[
    \frac{t^2}{n-1}
    <
    \frac1n+2t-t^2,
\]
which in turn implies that $t$ is smaller than the larger root of the corresponding quadratic, namely
\begin{equation}
    t < t_+ := \frac{n-1}{n}
    \left(
        1+\sqrt{\frac{n}{n-1}}
    \right).
    \label{eq:ineq_3}
\end{equation}
Applying bounds \eqref{eq:ineq_2}--\eqref{eq:ineq_3} to the derivative \eqref{eq:derivative_m_B} gives
\[
    \frac{d}{dr}m(B(r))
    \overset{\eqref{eq:ineq_2}}{<}
    \frac{1}{n^2}
    +
    \frac{1/n+2t}{n}
    \overset{\eqref{eq:ineq_3}}{<}
    \frac{2}{n^2}
    +
    \frac{2t_+}{n}
    =
    \frac{2\left(n+\sqrt{n(n-1)}\right)}{n^2}
    =:L_n.
\]
and, for $n\ge 4$, we have $L_n<1$.

The function $D(r)$ is a quadratic polynomial in $r$, and so the active and inactive regions form a finite partition of $\bbR$.
On the inactive region $D(r)\le 0$, $m(B(r))=0$ and thus $\frac{d}{dr}m(B(r))=0$ for $D(r) < 0$.
Moreover, $m(B(r))$ is continuous across the boundaries $D(r)=0$.
Applying the mean-value theorem piecewise gives
\[
    |m(B(r_2))-m(B(r_1))|
    \le
    L_n |r_2-r_1|,
    \qquad r_1,r_2\in\bbR.
\]
Consequently, each leave--one--out RoBAS centre
\[
    h_i(r)=m(A_i(r))
\]
is $L_n$-Lipschitz, with $L_n<1$.

Now define
\[
    F_i(r)
    =
    \ncs_i(r)-\ncs_{n+1}(r)
    =
    |r_i-h_i(r)| - |r - r_0|.
\]
Since $h_i$ is $L_n$-Lipschitz and the absolute value map is one--Lipschitz,
the map
\[
    r\mapsto |r_i-h_i(r)|
\]
is also $L_n$-Lipschitz.
Let $r'<r''$ with $r',r''\ge r_0$, then
\begin{align*}
    F_i(r'')-F_i(r')
    &=
    (|r_i-h_i(r'')|-|r_i-h_i(r')|)
    -
    (|r''-r_0|-|r'-r_0|)
    \\
    &=
    (|r_i-h_i(r'')|-|r_i-h_i(r')|)
    -(r''-r')
    \\
    &\le
    L_n(r''-r')-(r''-r')
    \\
    &=
    -(1-L_n)(r''-r')
    <0.
\end{align*}
Thus, $F_i$ is strictly decreasing on $[r_0,\infty)$.
Similarly, if $r'<r''$ with $r',r''\le r_0$, then
\begin{align*}
    F_i(r'')-F_i(r')
    &=
    (|r_i-h_i(r'')|-|r_i-h_i(r')|)
    -
    \{|r''-r_0|-|r'-r_0|\}
    \\
    &=
    \{|r_i-h_i(r'')|-|r_i-h_i(r')|\}
    +(r''-r')
    \\
    &\ge
    -L_n(r''-r')+(r''-r')
    \\
    &=
    (1-L_n)(r''-r')
    >0.
\end{align*}
Therefore $F_i$ is strictly increasing on $(-\infty,r_0]$.
At $r=r_0$, the test score is zero,
\[
    \ncs_{n+1}(r_0)=|r_0 - r_0| = 0,
\]
and hence
\[
    F_i(r_0)=\ncs_i(r_0)-\ncs_{n+1}(r_0)=\ncs_i(r_0)\ge 0.
\]
Since $F_i$ is increasing on $(-\infty,r_0]$ and decreasing on $[r_0,\infty)$,
for each $i=1,\ldots,n$, the upper level set
\[
    J_i
    =
    \{r\in\bbR:\ncs_i(r)\ge \ncs_{n+1}(r)\}
    =
    \{r\in\bbR:F_i(r)\ge 0\}
\]
is an interval containing $r_0$.

For $i=n+1$,
\[
    J_{n+1}
    =
    \{r\in\bbR:\ncs_{n+1}(r)\ge \ncs_{n+1}(r)\}
    =
    \bbR,
\]
which is also an interval containing $r_0$.

Thus, all residual--space pairwise regions $J_1,\ldots,J_{n+1}$ are intervals containing the common residual value $r_0$, completing the proof.
\end{proof}

\subsection{Additional Results}
\label{app:additional_results_theory}

\begin{restatable}[Asymptotic robustness of heavy--tailed BWM with estimated variance]{theorem}{asymprobvariancerestated}
\label{thm:asymptotic_robustness_estimated_variance}
Fix $n \geq 2$. Let $(\br_{1:n}^{(m)})_{m \geq 1}$, with
$\br_{1:n}^{(m)} \in \bbR^n$, be a sequence of residual vectors such that
\[
    |\bar r_{n}^{(m)}|\to \infty
    \qquad\text{as }m\to\infty .
\]
Define
\[
    \widehat{\varsigma}_m^2
    =
    \frac1n\sum_{i=1}^n
    \left(r_i^{(m)}-\bar r_n^{(m)}\right)^2,
\]
and suppose that
\[
    \widehat{\varsigma}_m^2\to \varsigma^2\in(0,\infty)
    \qquad\text{as }m\to\infty .
\]
For fixed $\Delta\in\bbR$, set
\[
    r_{n+1}^{(m)}=\bar r_n^{(m)}+\Delta,
\]
and define the augmented plug-in variance
\[
    \widehat\sigma_{m,\Delta}^2
    =
    \frac{1}{n} \sum_{i=1}^{n+1}
    \left(r_i^{(m)}-\bar r_{n+1}^{(m)}\right)^2,
    \qquad
    \bar r_{n+1}^{(m)}
    =
    \frac1{n+1}\sum_{i=1}^{n+1}r_i^{(m)} .
\]
Under BWM \eqref{eq:robust_working_model}, with
$\widehat\sigma_{m,\Delta}^2$ plugged in for $\sigma^2$, assume
\[
    \theta\mid\tau^2\sim \mathcal N(0,\gamma\tau^2),
    \qquad
    g_{\tau^2}(w)\sim Cw^{-\delta},
    \quad w\to\infty,
\]
for some fixed $\gamma>0$, $C>0$, and $\delta>1$. Then, for every fixed $\Delta\in\bbR$, the score function
\eqref{eqn:bayes_assisted_score_for_residuals} satisfies
\[
    \rncsfun(\bar r^{(m)}_n+\Delta,\br^{(m)}_{1:n})
    =
    -p_{\widehat\sigma_{m,\Delta}^2}
    (\bar r^{(m)}_n+\Delta\mid\br^{(m)}_{1:n})
    \to
    h_{n,\varsigma}(|\Delta|)
\]
as $m\to\infty$, where $h_{n,\varsigma}:[0,\infty)\to\bbR$ is the strictly
increasing function
\[
    h_{n,\varsigma}(u)
    =
    -
    \frac{1}
    {\sqrt{2\pi(1+1/n)\left(\varsigma^2+\frac{u^2}{n+1}\right)}}
    \exp\left(
        -\frac{u^2}
        {2(1+1/n)\left(\varsigma^2+\frac{u^2}{n+1}\right)}
    \right).
\]
\end{restatable}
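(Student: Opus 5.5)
We follow the proof of \cref{thm:asymptotic_robustness}, but now treat the variance as a parameter that changes with $m$ and check that all limits are uniform over a compact set of variances.

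\textbf{Step 1: the plug-in variance.} Write $\bar r_m:=\bar r_n^{(m)}$. Set $d_m=\bar r_{n+1}^{(m)}-\bar r_m=\Delta/(n+1)$. A direct computation gives
\[
\sum_{i=1}^{n+1}(r_i-\bar r_{n+1})^2=n\widehat\varsigma_m^2+\frac{n}{n+1}\Delta^2 .
\]
Hence
\[
\widehat\sigma_{m,\Delta}^2=\widehat\varsigma_m^2+\frac{\Delta^2}{n+1}\to \sigma_\ast^2:=\varsigma^2+\frac{\Delta^2}{n+1}\in(0,\infty).
\]
So for large $m$ the plug-in variance lies in a compact set $K\subset(0,\infty)$. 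This is exactly why the limit $h_{n,\varsigma}$ carries $\varsigma^2+u^2/(n+1)$ in place of $\sigma^2$.

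\textbf{Step 2: posterior concentration with a moving variance.} For fixed variance $a$, the posterior of $\theta$ given $\br_m$ depends on the data only through $\bar r_m\sim\mathcal N(\theta,a/n)$. This holds because the factor involving $\sum_i(r_i-\bar r_m)^2$ cancels in the posterior. We then repeat the Tweedie/cumulant argument with $\sigma^2$ replaced by $a_m:=\widehat\sigma^2_{m,\Delta}$. The log-mgf of $\eta_m^\star=(n/a_m)(\theta-\bar r_m)$ equals
\[
\log\frac{f_{a_m/n}(\bar r_m+\xi)}{f_{a_m/n}(\bar r_m)}+\frac{n}{2a_m}\xi^2,
\]
where $f_{a}$ is the normal scale mixture of \cref{lem:compact_uniform_tail_equivalence} with $\mu=0$. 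Because $a_m/n\in K/n$, which is compact, \cref{lem:compact_uniform_tail_equivalence} makes the first term vanish for each fixed $\xi$. This is the key point at which uniformity in $a$ is required: the pointwise tail-transfer used earlier is no longer enough, since the mixture index moves with $m$. The second term tends to $n\xi^2/(2\sigma_\ast^2)$. Convergence of the mgf in a neighbourhood of $0$ then gives
\[
\theta-\bar r_m\mid\br_m\Rightarrow\mathcal N(0,\sigma_\ast^2/n).
\]
Finiteness of the mgf holds because the mixture tails are polynomial, so the posterior has Gaussian-type tails in $\theta$. The only delicate point is applying the lemma with $\xi$ in a bounded set while $a$ varies, and that is precisely what the lemma states.

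\textbf{Step 3: the predictive density.} We have
\[
p(\bar r_m+\Delta\mid\br_m)=\mathbb E[\varphi(\Delta-(\theta-\bar r_m);0,a_m)\mid\br_m].
\]
The functions $z\mapsto\varphi(\Delta-z;0,a)$ are uniformly bounded and equicontinuous for $a\in K$, and they converge uniformly to $\varphi(\Delta-z;0,\sigma_\ast^2)$. Combining this with the weak convergence from Step 2, the expectation tends to
\[
\mathbb E\,\varphi(\Delta-Z;0,\sigma_\ast^2),\qquad Z\sim\mathcal N(0,\sigma_\ast^2/n),
\]
which equals $\varphi(\Delta;0,(1+1/n)\sigma_\ast^2)$. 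Substituting $\sigma_\ast^2=\varsigma^2+\Delta^2/(n+1)$ gives $-h_{n,\varsigma}(|\Delta|)$.

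\textbf{Step 4: monotonicity of $h_{n,\varsigma}$.} Write $v=\varsigma^2+u^2/(n+1)$. Then
\[
-h_{n,\varsigma}(u)\propto v^{-1/2}\exp\{-u^2/(2(1+1/n)v)\}.
\]
As $u$ increases, $v$ increases and $u^2/v$ increases, so $-h_{n,\varsigma}$ is strictly decreasing. Therefore $h_{n,\varsigma}$ is strictly increasing, as claimed.
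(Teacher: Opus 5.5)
Your proposal is correct and follows essentially the same route as the paper: the same decomposition $\widehat\sigma_{m,\Delta}^2=\widehat\varsigma_m^2+\Delta^2/(n+1)$, then the Tweedie/cumulant argument for the centred natural parameter with the compact-uniform tail lemma applied to $a_m/n$, then the uniform-plus-weak convergence step for the predictive density, and finally the monotonicity check. The one step you pass over is the rescaling from $\eta_m^\star$ to $\theta-\bar r_m=(a_m/n)\eta_m^\star$ with $a_m\to\sigma_\ast^2$. It is a trivial Slutsky argument, and the paper writes it out explicitly.
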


\begin{proof}
Fix $\Delta\in\bbR$. For notational convenience, write
\[
    \bar r_m := \bar r_n^{(m)},
    \qquad
    \br_m := \br_{1:n}^{(m)}.
\]
We first decompose the augmented plug--in variance. Since
\[
    \bar r_{n+1}^{(m)}
    =
    \frac{1}{n+1}\left(\sum_{i=1}^n r_i^{(m)}+\bar r_m+\Delta\right)
    =
    \bar r_m+\frac{\Delta}{n+1},
\]
and since $\sum_{i=1}^n(r_i^{(m)}-\bar r_m)=0$,
\begin{align*}
    \widehat\sigma_{m,\Delta}^2
    &=
    \frac{1}{n}\sum_{i=1}^{n}
    \left(r_i^{(m)}-\bar r_m-\frac{\Delta}{n+1}\right)^2
    +
    \frac{1}{n}
    \left(\Delta-\frac{\Delta}{n+1}\right)^2 \\
    &=
    \frac1n\sum_{i=1}^n
    \left(r_i^{(m)}-\bar r_m\right)^2
    +
    \frac1n\cdot\frac{n\Delta^2}{(n+1)^2}
    +
    \frac1n\cdot\frac{n^2\Delta^2}{(n+1)^2} \\
    &=
    \widehat{\varsigma}_m^2+\frac{\Delta^2}{n+1}.
\end{align*}
Hence
\[
    \widehat\sigma_{m,\Delta}^2
    \to
    \varsigma_\Delta^2
    :=
    \varsigma^2+\frac{\Delta^2}{n+1}
    \in(0,\infty).
\]

Now consider BWM \eqref{eq:robust_working_model} with
$\widehat\sigma_{m,\Delta}^2$ plugged in for $\sigma^2$. Conditional on this
plug-in value, the likelihood of $\br_m$ depends on $\theta$ only through
$\bar r_m$; the remaining within--sample factor is independent of $\theta$.
The conditional model for $\bar r_m$ is therefore
\[
    \bar r_m\mid\theta
    \sim
    \mathcal N\!\left(\theta,\frac{\widehat\sigma_{m,\Delta}^2}{n}\right).
\]
Write this one-dimensional Gaussian exponential family with natural parameter
\[
    \eta_m
    =
    \frac{n}{\widehat\sigma_{m,\Delta}^2}\theta .
\]
For $\xi\in\bbR$, Tweedie's formula for exponential families
\citep[\S 2]{Efron01122011} gives
\[
    \kappa_m(\xi)
    :=
    \log \mathbb E\!\left[e^{\eta_m\xi}\mid\br_m\right]
    =
    \lambda_m(\bar r_m+\xi)-\lambda_m(\bar r_m),
\]
where
\[
    \lambda_m(z)=\log\frac{f_m(z)}{f_{0,m}(z)},
\]
\[
    f_{0,m}(z)
    =
    \varphi\!\left(z;0,\frac{\widehat\sigma_{m,\Delta}^2}{n}\right),
\]
and
\[
    f_m(z)
    =
    \int_0^\infty
    \varphi\!\left(
        z;0,\frac{\widehat\sigma_{m,\Delta}^2}{n}+\gamma w
    \right)
    g_{\tau^2}(w)\,dw .
\]
Define the centred natural parameter
\[
    \eta_m^\star
    =
    \eta_m-\frac{n}{\widehat\sigma_{m,\Delta}^2}\bar r_m .
\]
For every fixed $\xi\in\bbR$,
\begin{align*}
    \log \mathbb E\!\left[e^{\eta_m^\star\xi}\mid\br_m\right]
    &=
    \kappa_m(\xi)
    -
    \frac{n}{\widehat\sigma_{m,\Delta}^2}\bar r_m\xi \\
    &=
    \log\frac{f_m(\bar r_m+\xi)}{f_m(\bar r_m)}
    -
    \log\frac{f_{0,m}(\bar r_m+\xi)}{f_{0,m}(\bar r_m)}
    -
    \frac{n}{\widehat\sigma_{m,\Delta}^2}\bar r_m\xi \\
    &=
    \log\frac{f_m(\bar r_m+\xi)}{f_m(\bar r_m)}
    +
    \frac{n}{2\widehat\sigma_{m,\Delta}^2}\xi^2 .
\end{align*}
The final equality follows from the explicit Gaussian form of $f_{0,m}$.

We next show that the first term on the right--hand side vanishes. Since
$\widehat\sigma_{m,\Delta}^2\to\varsigma_\Delta^2\in(0,\infty)$, there exists
a compact set $K_\Delta\subset(0,\infty)$ such that
\[
    \frac{\widehat\sigma_{m,\Delta}^2}{n}\in K_\Delta
\]
for all sufficiently large $m$. Applying Lemma~\ref{lem:compact_uniform_tail_equivalence} with this compact set and with
any bounded set containing the fixed value $\xi$ gives
\[
    \frac{f_m(\bar r_m+\xi)}{f_m(\bar r_m)}
    \to 1,
\]
because $|\bar r_m|\to\infty$. Therefore,
\[
    \log \mathbb E\!\left[e^{\eta_m^\star\xi}\mid\br_m\right]
    \to
    \frac{n}{2\varsigma_\Delta^2}\xi^2
    \qquad\text{for every }\xi\in\bbR.
\]
The limiting cumulant generating function is that of
$\mathcal N(0,n/\varsigma_\Delta^2)$. By the convergence theorem for
moment--generating functions,
\[
    \eta_m^\star\mid\br_m
    \Rightarrow
    \mathcal N\!\left(0,\frac{n}{\varsigma_\Delta^2}\right).
\]
Since
\[
    \theta-\bar r_m
    =
    \frac{\widehat\sigma_{m,\Delta}^2}{n}\eta_m^\star
\]
and $\widehat\sigma_{m,\Delta}^2\to\varsigma_\Delta^2$, Slutsky's theorem gives
\[
    \theta-\bar r_m
    \;\Big|\;\br_m
    \Rightarrow
    \mathcal N\!\left(0,\frac{\varsigma_\Delta^2}{n}\right).
\]

Let
\[
    Z_\Delta\sim
    \mathcal N\!\left(0,\frac{\varsigma_\Delta^2}{n}\right).
\]
The posterior predictive density at $\bar r_m+\Delta$ is
\begin{align*}
    p_{\widehat\sigma_{m,\Delta}^2}
    (\bar r_m+\Delta\mid\br_m)
    &=
    \mathbb E\!\left[
        \varphi(\bar r_m+\Delta;\theta,\widehat\sigma_{m,\Delta}^2)
        \mid\br_m
    \right] \\
    &=
    \mathbb E\!\left[
        \varphi(\Delta-(\theta-\bar r_m);0,\widehat\sigma_{m,\Delta}^2)
        \mid\br_m
    \right].
\end{align*}
Since $\widehat\sigma_{m,\Delta}^2\to\varsigma_\Delta^2\in(0,\infty)$,
\[
    \sup_{z\in\bbR}
    \left|
        \varphi(\Delta-z;0,\widehat\sigma_{m,\Delta}^2)
        -
        \varphi(\Delta-z;0,\varsigma_\Delta^2)
    \right|
    \to 0.
\]
Moreover, the map
\[
    z\mapsto \varphi(\Delta-z;0,\varsigma_\Delta^2)
\]
is bounded and continuous, and
\[
    \theta-\bar r_m\mid\br_m
    \Rightarrow Z_\Delta.
\]
Hence
\[
    p_{\widehat\sigma_{m,\Delta}^2}
    (\bar r_m+\Delta\mid\br_m)
    \to
    \mathbb E\!\left[
        \varphi(\Delta-Z_\Delta;0,\varsigma_\Delta^2)
    \right].
\]
The right--hand side is a Gaussian convolution:
\[
    \mathbb E\!\left[
        \varphi(\Delta-Z_\Delta;0,\varsigma_\Delta^2)
    \right]
    =
    \varphi\!\left(
        \Delta;0,\varsigma_\Delta^2\left(1+\frac1n\right)
    \right).
\]
Using
\[
    \varsigma_\Delta^2
    =
    \varsigma^2+\frac{\Delta^2}{n+1},
\]
we obtain
\[
    \rncsfun(\bar r_m+\Delta,\br_m)
    =
    -p_{\widehat\sigma_{m,\Delta}^2}
    (\bar r_m+\Delta\mid\br_m)
    \to
    h_{n,\varsigma}(|\Delta|),
\]
where
\[
    h_{n,\varsigma}(u)
    =
    -
    \frac{1}
    {\sqrt{2\pi(1+1/n)\left(\varsigma^2+\frac{u^2}{n+1}\right)}}
    \exp\left(
        -\frac{u^2}
        {2(1+1/n)\left(\varsigma^2+\frac{u^2}{n+1}\right)}
    \right).
\]

It remains to prove that $h_{n,\varsigma}$ is strictly increasing on
$[0,\infty)$. Let
\[
    c_n=1+\frac1n,
    \qquad
    A(u)=\varsigma^2+\frac{u^2}{n+1},
\]
and write
\[
    h_{n,\varsigma}(u)=-q(u),
\]
where
\[
    q(u)
    =
    \frac{1}{\sqrt{2\pi c_n A(u)}}
    \exp\left(
        -\frac{u^2}{2c_nA(u)}
    \right).
\]
For $u>0$,
\[
    \frac{d}{du}\log q(u)
    =
    -\frac{u}{(n+1)A(u)}
    -
    \frac{u\varsigma^2}{c_nA(u)^2}
    <0.
\]
Thus $q$ is strictly decreasing on $(0,\infty)$, and therefore
$h_{n,\varsigma}=-q$ is strictly increasing on $(0,\infty)$. Since
$h_{n,\varsigma}$ is continuous at $0$, it is strictly increasing on
$[0,\infty)$.
\end{proof}

%% file: appendix/additional_details.tex
\section{Additional Details}
\label{app:add_detauls}

\subsection{Algorithm Blocks}
\label{app:algo_block_hs_eb}

Algorithms \ref{alg:hs_full_interval} and \ref{alg:hs_eb_interval} provide the procedure for finding the conformal prediction intervals for RoBAS--Full and RoBAS--EB, respectively.

\begin{algorithm}[h]
\caption{RoBAS--Full conformal prediction interval}
\label{alg:hs_full_interval}
\begin{algorithmic}[1]
\REQUIRE Calibration data $\bz_{1:n} = \{\bz_i\}_{i=1}^n$, predictor $f$, test covariate $\bx_{n+1}$, level $\alpha \in [0,1)$.
\ENSURE Interval $[l,u]$
\vspace{0.1cm}
\STATE \textbf{function} $\ncsfun^{\RoBAS\text{--Full}}(\bz_{n+1}, \bz_{1:n})$
    \vspace{0.1cm}
    \STATE \quad $r_i \gets y_i - f(\bx_i)$ for $i=1,\dots,n+1$
    \vspace{0.1cm}
    \STATE \quad $\bar r_n \gets \frac{1}{n}\sum_{i=1}^n r_i; \quad s_n^2 \gets \frac{1}{n}\sum_{i=1}^n (r_i-\bar r_n)^2$
    \vspace{0.2cm}
    \STATE \quad $\bar r_{n+1} \gets \frac{1}{n+1}\sum_{i=1}^{n+1} r_i; \quad \hat\sigma^2 \gets \frac{1}{n}\sum_{i=1}^{n+1} (r_i - \bar r_{n+1})^2$
    \vspace{0.1cm}
    \STATE \quad \textbf{return} $\exp\!\left(-\dfrac{n s_n^2}{2\hat\sigma^2}\right) \cdot {}_1F_1\!\left(1;\,\tfrac{3}{2};\,-\dfrac{n\bar r_n^2}{2\hat\sigma^2}\right)$
\STATE \textbf{end function}
\STATE $[l,u] \gets$ \textbf{Algorithm}~\ref{alg:interval}$(\bz_{1:n},\bx_{n+1}, \ncs^{\RoBAS\text{--Full}},f, \alpha)$
\STATE \textbf{return} $[l,u]$
\end{algorithmic}
\end{algorithm}

\begin{algorithm}[h]
\caption{RoBAS--EB conformal prediction interval}
\label{alg:hs_eb_interval}
\begin{algorithmic}[1]
\REQUIRE Calibration data $\bz_{1:n} = \{\bz_i\}_{i=1}^n$, predictor $f$, test covariate $\bx_{n+1}$, level $\alpha \in [0,1)$.
\ENSURE Interval $[l,u]$
\vspace{0.1cm}
\STATE \textbf{function} $\ncsfun^{\RoBAS\text{--EB}}(\bz_{n+1}, \bz_{1:n})$
    \vspace{0.1cm}
    \STATE \quad $r_i \gets y_i - f(\bx_i)$ for $i=1,\dots,n+1$
    \vspace{0.1cm}
    \STATE \quad $\bar r_n \gets \frac{1}{n}\sum_{i=1}^n r_i; \quad \hat\sigma^2 \gets \frac{1}{n}\sum_{i=1}^n (r_i-\bar r_n)^2$
    \vspace{0.2cm}
    \STATE \quad $\hat\upsilon^2 \gets \max\{\bar r_n^{2}-\hat\sigma^2/n,\,0\};\quad a \gets \hat\upsilon^2/(\hat\upsilon^2+\hat\sigma^2/n)$
    \vspace{0.1cm}
    \STATE \quad \textbf{return} $|r_{n+1}-a\bar r_n|$
\STATE \textbf{end function}

\STATE $[l,u] \gets$ \textbf{Algorithm}~\ref{alg:interval}$(\bz_{1:n},\bx_{n+1}, \ncs^{\RoBAS\text{--EB}},f, \alpha)$
\STATE \textbf{return} $[l,u]$
\end{algorithmic}
\end{algorithm}

\subsection{Additional Details on Variants of Conformal Prediction}
\label{app:cp_variants}

Here, we provide further details on the different variants of conformal prediction and clarify how our approach relates to them.
We use the same setup and notation as \S\ref{sec:background}.

\paragraph{Full conformal prediction.}

\emph{Full conformal prediction} (full CP, \citet{Vovk2005}) determines whether a candidate $y \in \mathcal{Y}$ belongs to $\mathcal{C}_\alpha(\bx_{n+1};\bz_{1:n})$ by first computing the set of nonconformity scores $\{\ncs_i(y)\}_{i=1}^{n+1}$:
\begin{align*}
\ncs_i(y)&=\ncsfun(\bz_i, \bz_{1:n,-i}\cup\{(\bx_{n+1},y)\}), \ \ i=1,\ldots,n \notag \\
\ncs_{n+1}(y)&=\ncsfun((\bx_{n+1}, y), \bz_{1:n}),
\end{align*}
\looseness=-1
where $\bz_{1:n,-i}=\bz_{1:n}\backslash \{\bz_i\}$. Secondly, a hypothesis test is used to test whether $\ncs_{n+1}(y)$ is compatible with $\{\ncs_i(y)\}_{i=1}^n$ by comparing its rank with the rank of the remaining scores. The \emph{conformal p-value} for this test is given by
\begin{equation} 
\label{eqn:app_conformal_p_value}
\rho(y) = \frac{1}{n+1} \sum_{i=1}^{n+1} \mathbb{I}\{\ncs_i(y) \ge \ncs_{n+1}(y)\},
\end{equation} 
\looseness=-1
and $y$ is accepted if $\rho(y) > \alpha$. The full prediction set is therefore given by:
\begin{equation}
\label{eqn:app_cp_interval}
C_\alpha(\bx_{n+1};\bz_{1:n}) = \{ y \in \mathcal{Y} \mid \rho(y) > \alpha \}. 
\end{equation}

As the random variables $(\bZ_i)_{i=1}^{n+1}$ form an exchangeable sequence, the associated random nonconformity scores $(S_i)_{i=1}^{n+1}$ are also exchangeable, providing the interval in \eqref{eqn:app_cp_interval} with the desired guarantee in \eqref{eqn:freq_guarantee}.

\looseness=-1
In practice, the nonconformity scores are often based on the residuals $r_i=y_i-f(\bx_i)$ of a predictive model $f:\calX\to\calY$, where, in full CP, the predictive model is trained in a leave--one--out fashion for each nonconformity score.
\begin{example}
The Distance--To--Origin (DTO) nonconformity score is given by:
$$
\ncs^{\DTO}(\bz_{n+1},\bz_{1:n})=|r_{n+1}|=|y_{n+1}-f_{\bz_{1:n}}(\bx_{n+1})|,
$$    
where $f_{\bz_{1:n}}$ denotes that $f$ has been trained on $\bz_{1:n}$. 
\end{example}

\begin{remark}
Note that alternative formulations of full CP exist, although this is the original formulation as in \citet{Vovk2005}. One popular formulation is based on the following definition of the nonconformity scores:
\begin{align*}
\ncs_i(y)&=\ncsfun(\bz_i, \bz_{1:n+1}^y), \ \ i=1,\ldots,n \notag \\
\ncs_{n+1}(y)&=\ncsfun((\bx_{n+1}, y), \bz_{1:n+1}^y),
\end{align*}
where $\bz_{1:n+1}^y = \bz_{1:n}\cup\{(\bx_{n+1},y)\}$. 
\end{remark}

\paragraph{Split conformal prediction.} As full CP typically involves retraining a model $n+1$ times for each candidate $y$, a more computationally efficient alternative called \emph{split conformal prediction} (split CP) is often used. Split CP first partitions the dataset $\bz_{1:n}$ into two subsets of size $m$ and $k$, with $m + k = n$, called the \emph{training} and \emph{calibration} set:
\begin{align*}
\bz_{1:n}=\bz_{1:m}^{\text{train}} \cup \bz_{1:k}^{\text {cal}},\ \ \ \text{with} \ \ \bz_{1:m}^{\text{train}} \cap \bz_{1:k}^{\text {cal}}=\emptyset,
\end{align*}
where $\bz_{1:m}^{\text {train}} = \{\bz_i^{\text {train}}\}_{i=1}^m$ and $\bz_{1:k}^{\text {cal}} = \{\bz_i^{\text {cal}}\}_{i=1}^k$.

\looseness=-1
$\bz_{1:m}^{\text {train}}$ is used exclusively to train a fixed predictive model $f_{\bz_{1:m}^{\text {train}}}$, while $\bz_{1:k}^{\text {cal}}$ is used exclusively to compute nonconformity scores. The nonconformity scores are given by:
\begin{align*}
\ncs_i(y)&=\ncsfun(\bz_i^{\text{cal}}, \bz_{1:m}^{\text {train}}), \ \ i=1,\ldots,k \notag \\
\ncs_{k+1}(y)&=\ncsfun((\mathbf{x}_{k+1}, y), \bz_{1:m}^{\text {train}}),
\end{align*}
The remainder of the procedure is the same as full CP.

As the training and calibration sets are disjoint, and $(\bZ)_{i=1}^{k+1}$ is exchangeable, the computed scores $(S_i)_{i=1}^{k+1}$ are also exchangeable (conditioned on the trained model). This exchangeability preserves the validity of the coverage guarantee in \eqref{eqn:freq_guarantee}. 

\begin{example}
The split version of the DTO nonconformity score is given by:
$$
\ncs^{\DTO}(\bz_{n+1}, \bz_{1:m}^{\text {train}})=|r_{n+1}|=|y_{n+1}-f_{\bz_{1:m}^{\text {train}}}(\bx_{n+1})|,
$$  
\end{example}

\paragraph{Full Bayes--assisted conformal prediction.}
Full Bayes--assisted conformal prediction follows the same procedure as full CP, with the difference being in the way the nonconformity scores are defined. More specifically, a Bayesian working model (BWM) is specified for the conditional data--generating process and the nonconformity score is taken to be the negative posterior predictive density of this model:
\begin{align*} 
\ncsfun\left(\bz_{n+1}, \bz_{1:n}\right) &= -p\left( y_{n+1} | \bx_{n+1}, \bz_{1:n}\right) \notag \\ 
&= -\int p(y_{n+1} | \bx_{n+1}, \theta) p\left(\theta | \bz_{1:n} \right) \ d\theta,
\end{align*}
where $\theta$ denotes the parameters of the BWM. This gives nonconformity scores:
\begin{align*}
\ncs_i(y)&=-p(y_i|\bx_i,  \bz_{1:n,-i}\cup\{(\bx_{n+1},y)\}), \ \ i=1,\ldots,n \notag \\
\ncs_{n+1}(y)&=-p(y | \mathbf{x}_{n+1}, \bz_{1:n}).
\end{align*}
Exchangeability is again preserved like in full CP, thus giving us the desired frequentist guarantee in \eqref{eqn:freq_guarantee}. An example is given below.

\begin{example}
Consider a Bayesian linear regression BWM with Gaussian noise:
\begin{equation*}
y_i = \bx_i^\top \beta + \varepsilon, \qquad i = 1, \ldots, n,
\end{equation*}
where $\varepsilon \sim \mathcal{N}(0,\sigma^2)$, $\beta \in \mathbb{R}^d$ denotes the regression coefficients and $\sigma^2>0$ the noise variance. We place a prior distribution on the parameters $\theta=(\beta,\sigma^2)$, for example a Gaussian prior on $\beta$ and an inverse-gamma prior on $\sigma^2$.

The Bayes--assisted nonconformity score for this BWM is given by:
\begin{equation*}
\ncsfun\!\left(\bz_{n+1},\bz_{1:n}\right)
= -p(y_{n+1} \mid \bx_{n+1}, \bz_{1:n}),    
\end{equation*}
where
\begin{equation*}
p(y_{n+1}\mid \bx_{n+1}, \bz_{1:n})
= \int p(y_{n+1} \mid \bx_{n+1},\beta,\sigma^2)\,
p(\beta,\sigma^2\mid \bz_{1:n})\,d\beta\,d\sigma^2.    
\end{equation*}

In general, the posterior predictive distribution does not admit a closed--form expression. Consequently, the integral above is approximated using Monte Carlo methods, such as MCMC, by drawing samples $\{\theta^{(s)}\}_{s=1}^S$ from the posterior $p(\theta \mid \bz_{1:n})$ and estimating
\begin{equation*}
p(y_{n+1}\mid \bx_{n+1},\bz_{1:n})
\approx \frac{1}{S}\sum_{s=1}^S
p(y_{n+1}\mid \bx_{n+1},\theta^{(s)}).    
\end{equation*}
\end{example}

\paragraph{Split Bayes--assisted conformal prediction:}
Despite split CP being a popular and efficient alternative to standard full CP, to the best of our knowledge, the only work that has considered the split variant of Bayes--assisted CP is \citet{deliu2025interplay}. There, the authors describe a range of nonconformity scores suitable for this framework. 
One strategy is to define the nonconformity score function as the negative posterior predictive density, with the posterior \emph{conditioned on the training set}.

Using the same data split defined earlier, the nonconformity score function is:
\begin{align*}
\ncsfun\left(\bz_{n+1}, \bz_{1:m}^{\text {train}}\right) &= -p\left( y_{n+1} \mid \bx_{n+1}, \bz_{1:m}^{\text {train}}\right) \notag \\ 
&= -\int p(y_{n+1} \mid \bx_{n+1}, \theta) p\left(\theta \mid \bz_{1:m}^{\text {train}} \right) \ d\theta.
\end{align*}
This gives the following nonconformity scores:
\begin{align*}
\ncs_i(y)&=-p(y^{\text{cal}}_i \mid \bx_i^{\text{cal}},  \bz_{1:m}^{\text {train}}), \ \ i=1,\ldots, k \notag \\
\ncs_{k+1}(y)&=-p(y \mid \mathbf{x}_{k+1}, \bz_{1:m}^{\text {train}}),
\end{align*}
where exchangeability is again preserved like in the standard split CP case, ensuring the validity of the frequentist guarantee in \eqref{eqn:freq_guarantee}. The key difference here is that the posterior is only fit on the training set, while the calibration set is used to find the nonconformity scores.

\paragraph{Where does our approach fit in?}
Our approach represents a middle ground between standard and Bayes--assisted conformal prediction. While we adopt the Bayes--assisted strategy of using a BWM to define nonconformity scores, we apply this model \emph{solely} to the residuals of some predictor. This decoupling is crucial: whereas prior Bayes--assisted methods are restricted to strictly Bayesian models, our approach imposes no such constraint, allowing $f: \mathcal{X} \to \mathcal{Y}$ to be \emph{any} predictive model. 
Moreover, our approach can also be considered as a middle ground between split and full CP. This is because we treat our predictor $f$ as fixed, like in the split CP setting, while carrying out full CP on the calibration set conditioned  on $f$.


\subsection{Additional Details on BWM 
\eqref{eq:robust_working_model}}
\label{app:additional_details_on_original_bwm}

Here, we provide the derivation and motivation for the RoBAS--Full nonconformity score given by \eqref{eq:score_horseshoe}.

Let $\br_{1:n} = \{r_i\}_{i=1}^n$ denote the residuals of data $\bz_{1:n}$ for some fixed predictor $f$. Consider the following BWM from \S\ref{sub:approach_nonconf_score}:
\begin{align}
R \mid \theta &\sim \mathcal N(\theta,\sigma^2),
\label{eq:bwm-lik}\\
\theta \mid \tau^2 &\sim \mathcal N(0,\gamma \tau^2), \\
\tau^2 &\sim g_{\tau^2}(\tau^2),
\label{eq:bwm-prior-tau}
\end{align}
where $\tau^2>0$, $\sigma^2$ and $\gamma$ are fixed hyperparameters and $g_{\tau^2}$ is a heavy--tailed prior on $\tau^2$. Specifically, we assume $g_{\tau^2}$ is regularly varying at infinity, i.e. $g_{\tau^2}(\tau^2) \sim C(\tau^2)^{-\delta}$ for some $C > 0, \delta > 1$ as $\tau^2 \to \infty$. 

\paragraph{Choice of $g_{\tau^2}$.}
A convenient choice for $g_{\tau^2}$ that yields a simple expression for the marginal likelihood is a beta prime density for $\tau^2$:
\begin{equation}
g_{\tau^2}(t)
=
\frac{\Gamma(a+b)}{\Gamma(a)\Gamma(b)}\,
t^{\,b-1}(1+t)^{-(a+b)},
\qquad t>0,\quad a>0,\ b>0.
\label{eq:betaprime}
\end{equation}

This prior is \emph{regularly varying} at infinity:
\begin{equation*}
 g_{\tau^2}(t)\ \asymp\ t^{-(a+1)} \quad \text{as } t\to\infty,   
\end{equation*}
so it matches our heavy--tail requirement.

\paragraph{Choice of $\gamma$.}
We take $\gamma = \frac{\sigma^2}{n}$. This choice is natural for two reasons:
(i) it matches the scale of the sampling variance of the sample mean under \eqref{eq:bwm-lik},
since $\bar r_n\mid \theta \sim \mathcal N(\theta,\sigma^2/n)$;
(ii) it produces an exact cancellation of normalising constants after integrating out $\theta$,
reducing the marginal likelihood to a single one--dimensional integral with a known form. See also \citet{Piironen2017} for further discussion on this choice.

\paragraph{Nonconformity score.} Below, we show that the marginal likelihood of this BWM with the above choices admits a simple and computationally tractable expression. Using Lemma \ref{lem:marginal_likelihood_equivalence}, we take this as our nonconformity score, which gives a tractable method for computing our conformal $p$--values without MCMC sampling. Moreover, we describe our choice of $a$ and $b$ for the beta prime prior on $\tau^2$, which leads to a horseshoe prior on $\theta$ \citep{10.1093/biomet/asq017}.

Define the empirical mean and empirical variance:
\begin{equation*}
\bar r_n = \frac{1}{n}\sum_{i=1}^n r_i,
\qquad
s_n^2 = \frac{1}{n}\sum_{i=1}^n (r_i-\bar r_n)^2.    
\end{equation*}

We first note that the model likelihood can be written as:
\begin{align*}
p(\br_{1:n}\mid \theta) &= (2\pi\sigma^2)^{-n/2}\exp\!\left(-\frac{1}{2\sigma^2}\sum_{i=1}^n (r_i-\theta)^2\right)    \\
&= (2\pi\sigma^2)^{-n/2}
\exp\!\left(-\frac{n s_n^2}{2\sigma^2}\right)
\exp\!\left(-\frac{n(\theta-\bar r_n)^2}{2\sigma^2}\right) \\
&= (2\pi\sigma^2)^{-n/2}
\exp\!\left(-\frac{n s_n^2}{2\sigma^2}\right)
(2\pi\sigma^2/n)^{1/2}\,
\varphi\!\left(\bar r_n;\theta,\frac{\sigma^2}{n}\right),
\end{align*}
where $\varphi(\cdot;\mu,v)$ denotes the $\mathcal N(\mu,v)$ density.  Now, integrating out $\theta$ given $\tau^2$ gives:
\begin{align*}
p(\br_{1:n}\mid \tau^2)
&=
\int p(\br_{1:n}\mid \theta)\,p(\theta\mid \tau^2)\,d\theta \\
&=
(2\pi\sigma^2)^{-n/2}
\exp\!\left(-\frac{n s_n^2}{2\sigma^2}\right)
(2\pi\sigma^2/n)^{1/2}
\int \varphi\!\left(\bar r_n;\theta,\frac{\sigma^2}{n}\right)\,
\varphi(\theta;0,\gamma\tau^2)\,d\theta.
\end{align*}

The integral is the convolution of two Gaussians:
\begin{equation*}
\int \varphi\!\left(\bar r_n;\theta,\frac{\sigma^2}{n}\right)\,
\varphi(\theta;0,\gamma\tau^2)\,d\theta
= \varphi\!\left(\bar r_n;0,\frac{\sigma^2}{n}+\gamma\tau^2\right).    
\end{equation*}

Thus we have:
\begin{align*}
p(\br_{1:n}\mid \tau^2)
&=
(2\pi\sigma^2)^{-n/2}
\exp\!\left(-\frac{n s_n^2}{2\sigma^2}\right)
(2\pi\sigma^2/n)^{1/2}\,
\varphi\!\left(\bar r_n;0,\frac{\sigma^2}{n}+\gamma\tau^2\right).
\label{eq:marg-given-tau}
\end{align*}

This is further simplified by plugging in $\gamma=\sigma^2/n$:
\begin{equation*}
p(\br_{1:n}\mid \tau^2)
=
(2\pi\sigma^2)^{-n/2}
\exp\!\left(-\frac{n s_n^2}{2\sigma^2}\right)
(1+\tau^2)^{-1/2}
\exp\!\left(-\frac{n\bar r_n^2}{2\sigma^2(1+\tau^2)}\right).
\end{equation*}

We can obtain an expression for the marginal likelihood by integrating $\tau^2$ out as follows:
\begin{align*}
p(\br_{1:n})
&=
\int_0^\infty p(\br_{1:n}\mid \tau^2)\,f_{\tau^2}(\tau^2)\,d\tau^2 \\
&=
(2\pi\sigma^2)^{-n/2}\exp\!\left(-\frac{n s_n^2}{2\sigma^2}\right)\,
\frac{\Gamma(a+b)}{\Gamma(a)\Gamma(b)}
\int_0^\infty
t^{\,b-1}(1+t)^{-(a+b+1/2)}
\exp\!\left(-\frac{n\bar r_n^2}{2\sigma^2(1+t)}\right)\,dt,
\label{eq:tau-integral}
\end{align*}

This can be further simplified by performing a change of variables $u = 1/(1+t)$ and recognising the integral as a representation of Kummer's confluent hypergeometric function, ${}_1F_1$ \citep{slater1960confluent}:

\begin{equation*}
p(\br_{1:n})
=
(2\pi\sigma^2)^{-n/2}
\exp\!\left(-\frac{n s_n^2}{2\sigma^2}\right)\,
\frac{\Gamma(a+1/2)\Gamma(a+b)}{\Gamma(a)\Gamma(a+b+1/2)}\,
{}_1F_1\!\left(a+\frac12;\ a+b+\frac12;\ -\frac{n\bar r_n^2}{2\sigma^2}\right).
\label{eq:marginal-closed-form}
\end{equation*}

Using Lemma \ref{lem:marginal_likelihood_equivalence}, we can therefore use the above as our nonconformity score:
\begin{align*}
\rncsfun(r_{n+1}, \br_{1:n}) = p(\br_{1:n})
&=
(2\pi\sigma^2)^{-n/2}
\exp\!\left(-\frac{n s_{n}^2}{2\sigma^2}\right)\,
\frac{\Gamma(a+1/2)\Gamma(a+b)}{\Gamma(a)\Gamma(a+b+1/2)}\,
{}_1F_1\!\left(a+\frac12;\ a+b+\frac12;\ -\frac{n\bar r_{n}^2}{2\sigma^2}\right),
\label{eq:loo-marginal}
\end{align*}

Finally, we can drop the constants to obtain the general form of the \textbf{RoBAS--Full} nonconformity score:
\begin{equation}
\label{eqn:marginal_likelihood_robust_full_bayes_ncsfun}
\rncsfun(r_{n+1}, \br_{1:n}) = p(\br_{1:n}) = 
\exp\!\left(-\frac{n s_{n}^2}{2\sigma^2}\right)\,
{}_1F_1\!\left(a+\frac12;\ a+b+\frac12;\ -\frac{n\bar r_{n}^2}{2\sigma^2}\right).
\end{equation}

\paragraph{Choices for $a, b$.}

A natural choice for $a, b$, as we describe below, is $a=b=1/2$.

Setting $a=b=1/2$ in \eqref{eq:betaprime} gives:
\begin{equation*}
g_{\tau^2}(t)
=
\frac{\Gamma(1)}{\Gamma(1/2)\Gamma(1/2)}\,t^{-1/2}(1+t)^{-1}
=
\frac{1}{\pi}\,t^{-1/2}(1+t)^{-1},
\qquad t>0.
\end{equation*}

This corresponds exactly to a half--Cauchy prior on the scale $\tau$, which induces a horseshoe prior on $\theta$ \citet{10.1093/biomet/asq017}. This is a natural default in our context where the $r_i$ are the residuals of some predictive model. This is because it provides:
\begin{itemize}
\item \emph{strong shrinkage near zero}: the density on $\tau$ has substantial mass near $0$,
encouraging $\theta$ to be close to $0$ when the data support it;
\item \emph{very heavy tails}: large values of $\tau$ are not overly penalized, so large signals in $\theta$
are not over--shrunk. This provides us with the desirable behaviour noted in Theorem \ref{thm:asymptotic_robustness}.
\end{itemize}

\subsection{Additional Details on BWM 
\eqref{eqn:bersson_hoff_model_main_body}}
\label{app:additional_details_on_nng}

Here, we describe the nonconformity score function corresponding to BWM \eqref{eqn:bersson_hoff_model_main_body} as well as the expression for its prediction interval. 

\paragraph{Nonconformity score.}
Firstly, recall that BWM \eqref{eqn:bersson_hoff_model_main_body} is given by:
\begin{align} 
\label{eqn:bersson_hoff_model_app}
R \mid \theta & \sim \mathcal{N}\left(\theta, \sigma^2\right), \notag \\
\notag
\theta & \sim \mathcal{N}\left(0, \tau^2 \sigma^2\right), \\ 
1 / \sigma^2 & \sim \operatorname{Gamma}(a / 2, b / 2),
\end{align}
where $\tau^2, a, b$ are fixed hyperparameters. This has the following residual--based, Bayes--assisted nonconformity score:
\begin{align*} 
\ncsfun\left(\bz_{n+1}, \bz_{1:n}\right) &= \rncsfun\left(r_{n+1}, \br_{1:n}\right) \\[0.5em]&= -p\left( r_{n+1} | \br_{1:n}\right) \notag \\[0.5em] 
&=-\frac{\Gamma\left(\frac{a_\sigma+1}{2}\right)}{\sqrt{a_\sigma \pi \Gamma\left(\frac{a_\sigma}{2}\right)}}\left(\frac{1}{\sqrt{\sigma_t^2}}\left(1+\frac{1}{a_\sigma} \frac{\left(r_{n+1}-\mu_\theta\right)^2}{\sigma_t^2}\right)^{-\left(a_\sigma+1\right) / 2}\right), 
\end{align*}

where $\sigma_t^2=b_\sigma\left(1+\tau_\theta^2\right) / a_\sigma$ and
\begin{align*}
&a_\sigma=a+n, \quad b_\sigma=b+\sum_{i=1}^n r_i^2-\frac{\mu_\theta^2}{\tau_\theta^2}, \quad \mu_\theta=\left(\sum_{i=1}^n r_i\right) \tau_\theta^2, \quad \tau_\theta^2=\left(\frac{1}{\tau^2}+n\right)^{-1} .
\end{align*}

\paragraph{Prediction interval.}
\citet{Bersson2024} showed that the prediction set under  \eqref{eqn:bersson_hoff_model_app}
is an interval and can be computed \emph{exactly} via simple order statistics. Define, for each $i \in \{1,\ldots,n\}$,
\begin{equation*}
g(r_i) = \frac{ 2\left(\sum_{k=1}^n r_k\right)\left(\frac{1}{\tau^2}+n+1\right)^{-1} - r_i
}{1 - 2\left(\frac{1}{\tau^2}+n+1\right)^{-1}}.
\end{equation*}
Let
\begin{equation*}
\mathbf{v} = \bigl(r_1,\ldots,r_n,\, g(r_1),\ldots,g(r_n)\bigr)^\top \in \mathbb{R}^{2n}, \qquad k = \bigl\lfloor \alpha (n+1)\bigr\rfloor,
\end{equation*}
and write $v_{(1)} \le \cdots \le v_{(2n)}$ for the order statistics of $\mathbf{v}$. Then the resulting prediction interval, in the space of the residuals, is given by
$\bigl[\, v_{(k)},\; v_{(2n-k+1)} \,\bigr].$ This is easily transformed back into output space by adding the point prediction used to define the residuals. i.e.,
\begin{equation*}
\mathcal{C}_\alpha(\bx_{n+1}; \bz_{1:n}) = 
\bigl[{f}(\bx_{n+1}) + v_{(k)},\; {f}(\bx_{n+1}) + v_{(2n-k+1)}\bigr].
\end{equation*}

\paragraph{Connection to our approach.} 
\citet{Bersson2024} also use the working model \eqref{eqn:bersson_hoff_model_app} for Bayes--assisted conformal prediction. They do this by modelling the conditional distribution of the response given covariates, i.e.\ $Y|\bX$. In contrast, we use \eqref{eqn:bersson_hoff_model_app} as a working model for the residuals induced by a fixed predictor $f$, and define our Bayes--assisted nonconformity score through the resulting posterior predictive density on these residuals.

\subsection{Computational Complexity}
\label{app:comp_complexity_comp_interval}
Here, we describe the computational complexity of Algorithm \ref{alg:interval} and compare it with the complexity of standard grid--search.

\paragraph{Computational complexity of Algorithm \ref{alg:interval}.}
The computational cost of Algorithm~\ref{alg:interval} is dominated by evaluations
of the conformal $p$-value $\rho(y)$. A single evaluation requires computing the test
score and comparing it to $n$ calibration scores, which costs $T_s(n)=\mathcal{O}(n)$.
The one-dimensional optimization and root--finding procedures used to locate
$y^\star$, $l$, and $u$ require $K(\varepsilon)=\mathcal{O}(\log(1/\varepsilon))$
function evaluations\footnote{Standard 1D bracketing methods (e.g., bisection, Brent) shrink the
bracket containing the solution geometrically, so the number of function
evaluations needed to reach accuracy $\varepsilon$ is
$\mathcal{O}(\log(1/\varepsilon))$ \citep{bracketSearch1,press1992numerical}.} to achieve endpoint accuracy~$\varepsilon$, independently
of $n$. Hence the overall time complexity per test point is
$\mathcal{O}(T_s(n)\log(1/\varepsilon)) = \mathcal{O}(n \log(1/\varepsilon))$. 

\paragraph{Computational complexity of standard grid--search.}
In contrast, a grid-based method that evaluates $\rho(y)$ on
$G$ grid points has complexity $\mathcal{O}(G n)$, and achieving accuracy
$\varepsilon$ typically requires $G=\mathcal{O}(1/\varepsilon)$, yielding
$\mathcal{O}(n / \varepsilon)$ time. Thus our grid--free procedure improves the
dependence on the target precision from $1/\varepsilon$ to $\log(1/\varepsilon)$, while eliminating discretisation error.

%% file: appendix/additional_results.tex
\section{Additional Results}
\label{app:additional_results}

\subsection{Additional Datasets}
\label{app:additional_datasets}

\begin{table*}[h]
    \centering    
    \caption{Interval widths for different nonconformity scores at different calibration sizes, $n_\text{cal}$, for the \texttt{VentricularVolume} dataset. Results show the $\text{mean} \pm \text{standard err.}$ over 300 trials on the in--distribution and out--of--distribution subsets of the dataset. 
    }
    \label{tab:widths_vv}
    \vspace{0.3cm}    
    \setlength{\tabcolsep}{4pt} 
    \resizebox{0.95\textwidth}{!}{%
    \begin{tabular}{cc|c|c|c|c|c|c|c|c}
        \toprule
        \multicolumn{2}{c|}{\multirow{2}{*}{\textbf{Scores}}} & \multicolumn{4}{c|}{\textbf{IN}} & \multicolumn{4}{c}{\textbf{OUT}} \\
        
        \multicolumn{2}{c|}{} & 
        $n_\text{cal}=5$ & $n_\text{cal}=10$ & $n_\text{cal}=25$ & $n_\text{cal}=50$ & 
        $n_\text{cal}=5$ & $n_\text{cal}=10$ & $n_\text{cal}=25$ & $n_\text{cal}=50$ \\
        \midrule
        
        \multirow{5}{*}{\rotatebox{90}{}} 
          & \textbf{DTO} & 2.554 {\scriptsize $\pm$ 0.105} & 3.640 {\scriptsize $\pm$ 0.136} & 3.202 {\scriptsize $\pm$ 0.081} & 2.533 {\scriptsize $\pm$ 0.039} & 4.664 {\scriptsize $\pm$ 0.168} & 5.894 {\scriptsize $\pm$ 0.184} & 5.571 {\scriptsize $\pm$ 0.099} & 4.688 {\scriptsize $\pm$ 0.051} \\
          
          & \textbf{DTA} & 2.716 {\scriptsize $\pm$ 0.113} & 3.774 {\scriptsize $\pm$ 0.141} & 3.251 {\scriptsize $\pm$ 0.079} & 2.533 {\scriptsize $\pm$ 0.039} & 4.381 {\scriptsize $\pm$ 0.155} & 5.209 {\scriptsize $\pm$ 0.168} & 4.774 {\scriptsize $\pm$ 0.088} & 3.912 {\scriptsize $\pm$ 0.047} \\

          & \textbf{NNG} & 2.557 {\scriptsize $\pm$ 0.108} & 3.671 {\scriptsize $\pm$ 0.139} & 3.211 {\scriptsize $\pm$ 0.080} & 2.528 {\scriptsize $\pm$ 0.039} & 4.379 {\scriptsize $\pm$ 0.162} & 5.478 {\scriptsize $\pm$ 0.177} & 5.062 {\scriptsize $\pm$ 0.093} & 4.147 {\scriptsize $\pm$ 0.049} \\

          & \textbf{LOCAL} & 2.397 {\scriptsize $\pm$ 0.094} & 3.318 {\scriptsize $\pm$ 0.119} & 2.924 {\scriptsize $\pm$ 0.072} & 2.321 {\scriptsize $\pm$ 0.033} & 4.743 {\scriptsize $\pm$ 0.167} & 6.013 {\scriptsize $\pm$ 0.180} & 5.726 {\scriptsize $\pm$ 0.107} & 4.751 {\scriptsize $\pm$ 0.049} \\          

          & \textbf{CQR} & 2.410 {\scriptsize $\pm$ 0.113} & 3.497 {\scriptsize $\pm$ 0.141} & 3.021 {\scriptsize $\pm$ 0.085} & 2.278 {\scriptsize $\pm$ 0.044} & 5.253 {\scriptsize $\pm$ 0.172} & 6.521 {\scriptsize $\pm$ 0.182} & 6.213 {\scriptsize $\pm$ 0.100} & 5.296 {\scriptsize $\pm$ 0.050} \\

          \hline
          \hline
          \rowcolor{lightgray} \cellcolor{white} & \cellcolor{white}  \textbf{RoBAS--Full} & 2.553 {\scriptsize $\pm$ 0.107} & 3.666 {\scriptsize $\pm$ 0.138} & 3.217 {\scriptsize $\pm$ 0.080} & 2.531 {\scriptsize $\pm$ 0.039} & 4.459 {\scriptsize $\pm$ 0.163} & 5.521 {\scriptsize $\pm$ 0.179} & 4.944 {\scriptsize $\pm$ 0.093} & 3.950 {\scriptsize $\pm$ 0.048} \\

          \rowcolor{lightgray} \cellcolor{white} & \cellcolor{white}  \textbf{RoBAS--EB} & 2.622 {\scriptsize $\pm$ 0.111} & 3.715 {\scriptsize $\pm$ 0.140} & 3.231 {\scriptsize $\pm$ 0.080} & 2.538 {\scriptsize $\pm$ 0.039} & 4.362 {\scriptsize $\pm$ 0.154} & 5.368 {\scriptsize $\pm$ 0.172} & 4.842 {\scriptsize $\pm$ 0.090} & 3.922 {\scriptsize $\pm$ 0.047} \\
        \bottomrule
    \end{tabular}
    }
\end{table*}

Tables \ref{tab:widths_vv} and \ref{tab:cov_vv} show the interval widths and coverage on the \texttt{VentricularVolume} dataset with the setup described in Appendix \ref{app:image_dataset_setup}. We find that overall our approach is competitive with the best performing methods on the in--distribution subsets.
On the out--of--distribution subsets we find that our approach remains robust and attains the smallest widths alongside \textbf{DTA.}

\subsection{Different Calibration Sizes}
\label{app:diff_cal_sizes}

Here, we ablate with larger calibration sizes. Specifically, we use the full calibration set described in Appendix \ref{app:exp_setup}. We use the same setup as \S\ref{sec:experiments}.
\paragraph{Tabular datasets.} Figure \ref{fig:cov_shift_standard_cal} shows the interval widths for the tabular datasets. Like in \S\ref{sec:experiments}, we observe that the widths of all standard and Bayes--assisted approaches expand with increasing covariate shift. The notable exception is \textbf{RoBAS}, which remains robust and performs similarly to \textbf{DTA}, achieving the smallest widths.
On the other hand, at lower levels of covariate shift, where $f$ is a better fit for the calibration set, \textbf{RoBAS} performs competitively or matches the methods with the smallest widths.

\paragraph{Image datasets.} Table \ref{tab:vv_utkfaces_widths_ncastandard} shows the interval widths for the image datasets. On both datasets, we perform competitively on the in--distribution subset, while outperforming all other approaches on the out--of--distribution subset alongside \textbf{DTA}.

\begin{figure*}[h]
    \centering
    \includegraphics[width=0.65\textwidth]{figures/uci/legend.pdf}
    \vskip 2pt
    
    \setlength{\tabcolsep}{2pt} 
    \begin{tabular}{ccc}
        \footnotesize\texttt{Airfoil} & \footnotesize\texttt{Concrete} & \footnotesize\texttt{Facebook\_1} \\
        \subfigure[$n_\mathrm{cal}=601$]{\includegraphics[width=0.24\textwidth]{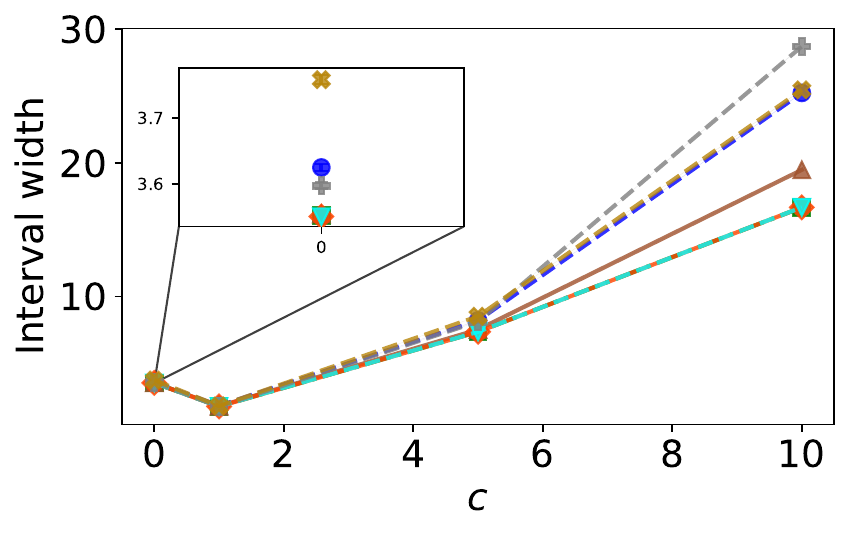}} &
        \subfigure[$n_\mathrm{cal}=412$]{\includegraphics[width=0.24\textwidth]{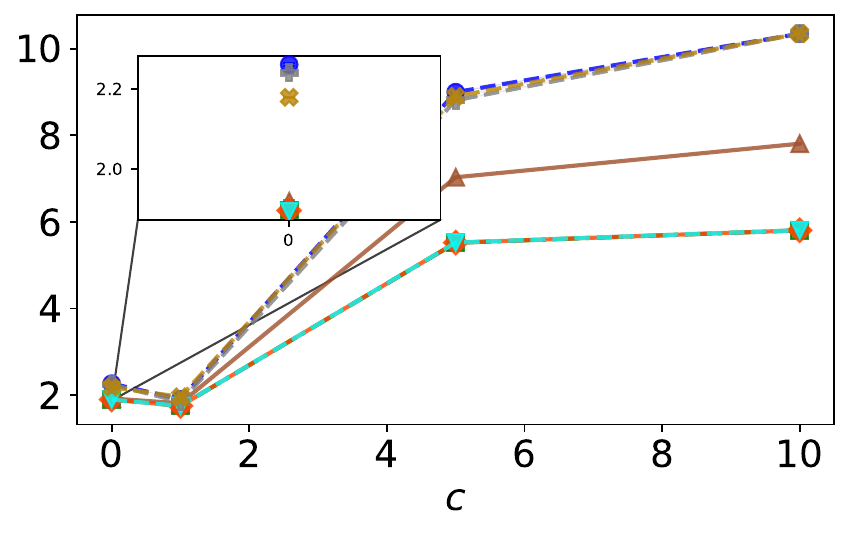}} &
        \subfigure[$n_\mathrm{cal}=5000$]{\includegraphics[width=0.24\textwidth]{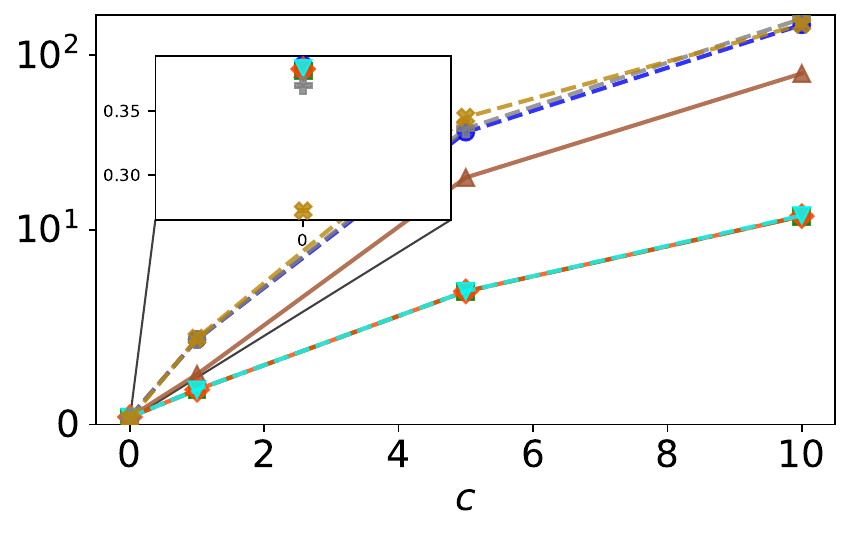}}
    \end{tabular}
    
    \vspace{-0.2cm}
    \caption{
        Interval width results for different nonconformity scores for different datasets with different levels of covariate shift. Results for standard calibration sizes are shown. Each of the plots also zooms in on the results for $c=0$ to better illustrate the differences under lower levels of covariate shift. The \texttt{Facebook\_1} results are displayed on a symlog scale to more clearly highlight the differences between the methods. Results show $\text{mean} \pm \text{standard err.}$ over 300 trials.}
    \label{fig:cov_shift_standard_cal}
\end{figure*}

\begin{table*}[h]
    \centering    
    \caption{Interval widths for different nonconformity scores at standard calibration sizes for the \texttt{UTKFaces} and \texttt{VentricularVolume} datasets. Results show the $\text{mean} \pm \text{standard err.}$ over 300 trials on the in--distribution and out--of--distribution sets of the datasets. 
    }
    \label{tab:vv_utkfaces_widths_ncastandard}
    \vspace{0.2cm}
    \setlength{\tabcolsep}{4pt} 
    \resizebox{0.6\textwidth}{!}{%
    \footnotesize
    \begin{tabular}{cc|c|c|c|c}
        \toprule
        \multicolumn{2}{c|}{\multirow{2}{*}{\textbf{Scores}}} & \multicolumn{2}{c|}{\texttt{UTKFaces}} & \multicolumn{2}{c}{\texttt{VentricularVolume}} \\
        \multicolumn{2}{c|}{} & 
        \makecell{\textbf{IN} \\ $n_\mathrm{cal}=2380$} & \makecell{\textbf{OUT} \\ $n_\mathrm{cal}=3387$} & 
        \makecell{\textbf{IN} \\ $n_\mathrm{cal}=1031$} & \makecell{\textbf{OUT} \\ $n_\mathrm{cal}=1021$} \\
        \midrule
        
        \multirow{8}{*}{\rotatebox{90}{}} 
          & \textbf{DTO} 
        & 3.021 {\scriptsize $\pm$ 0.002}    & 7.691 {\scriptsize $\pm$ 0.000}    & 2.345 {\scriptsize $\pm$ 0.003} & 4.482 {\scriptsize $\pm$ 0.003} \\

          & \textbf{DTA} 
          & 3.029 {\scriptsize $\pm$ 0.002}    & 2.392 {\scriptsize $\pm$ 0.001}    & 2.337 {\scriptsize $\pm$ 0.003} & 3.573 {\scriptsize $\pm$ 0.003}  \\

          & \textbf{NNG} 
          & 3.025 {\scriptsize $\pm$ 0.002}    & 4.598 {\scriptsize $\pm$ 0.000} & 2.343 {\scriptsize $\pm$ 0.003} & 3.856 {\scriptsize $\pm$ 0.004} \\

          & \textbf{LOCAL} & 3.005 {\scriptsize $\pm$ 0.002}    & 7.710 {\scriptsize $\pm$ 0.001} & 2.180 {\scriptsize $\pm$ 0.003} & 4.544 {\scriptsize $\pm$ 0.004} \\      

         & \textbf{CQR} 
          & 3.049 {\scriptsize $\pm$ 0.001}    & 7.105 {\scriptsize $\pm$ 0.001} & 2.075 {\scriptsize $\pm$ 0.004} & 5.094 {\scriptsize $\pm$ 0.004}  \\

          \hline
          \hline
          \rowcolor{lightgray} \cellcolor{white} & \cellcolor{white}  \textbf{RoBAS--Full} 
          & 3.024 {\scriptsize $\pm$ 0.002}    & 2.392 {\scriptsize $\pm$ 0.001} 
          & 2.344 {\scriptsize $\pm$ 0.003} & 3.572 {\scriptsize $\pm$ 0.003}  \\

          \rowcolor{lightgray} \cellcolor{white} & \cellcolor{white}  \textbf{RoBAS--EB} 
          & 3.022 {\scriptsize $\pm$ 0.002}    & 2.392 {\scriptsize $\pm$ 0.001} & 2.345 {\scriptsize $\pm$ 0.003} & 3.573 {\scriptsize $\pm$ 0.003} \\          
          
        \bottomrule        
    \end{tabular}
    }
\end{table*}

\subsection{Comparisons with CB and CBMA}
\label{app:comp_cb_cbma}

Here, we compare \textbf{RoBAS--Full} and \textbf{RoBAS--EB} against \textbf{CB} and \textbf{CBMA} on the tabular datasets introduced in \S\ref{sec:experiments}, following the experimental setup described in \S\ref{app:exp_setup}. Figures \ref{fig:cov_shift_comp_cb_cbma} and \ref{fig:coverage_cov_shift_cb_cbma} show the interval widths and empirical coverage, respectively, for the various nonconformity scores.

As shown in Figure \ref{fig:cov_shift_comp_cb_cbma}, both \textbf{RoBAS--EB} and \textbf{RoBAS--Full} consistently outperform \textbf{CB} and \textbf{CBMA} under distribution shift. In the absence of shift, our methods remain highly competitive or superior, with the single exception of the \texttt{Airfoil} dataset, where both \textbf{CB} and \textbf{CBMA} demonstrate strong performance. Furthermore, Figure \ref{fig:coverage_cov_shift_cb_cbma} reveals that \textbf{CB} and \textbf{CBMA} frequently fail to achieve the  target coverage rate -- tending to either undercover or overcover -- particularly as the level of distribution shift increases. This instability is likely a consequence of grid hyperparameter tuning.

Finally, we note that \textbf{CB} and \textbf{CBMA} exhibit significantly worse scalability than our approach, as they require averaging over typically high--dimensional posterior samples (see Table \ref{tab:comp_cost_tab_dataset}).

\begin{figure*}[!h]
    \centering
    \includegraphics[width=0.45\textwidth]{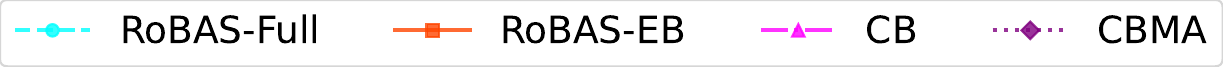}
    \vskip 3pt

    \setlength{\tabcolsep}{2pt} 
    \begin{tabular}{cccc}
        & \multicolumn{2}{c}{\footnotesize\texttt{Airfoil}} & \\
        \includegraphics[width=0.24\textwidth]{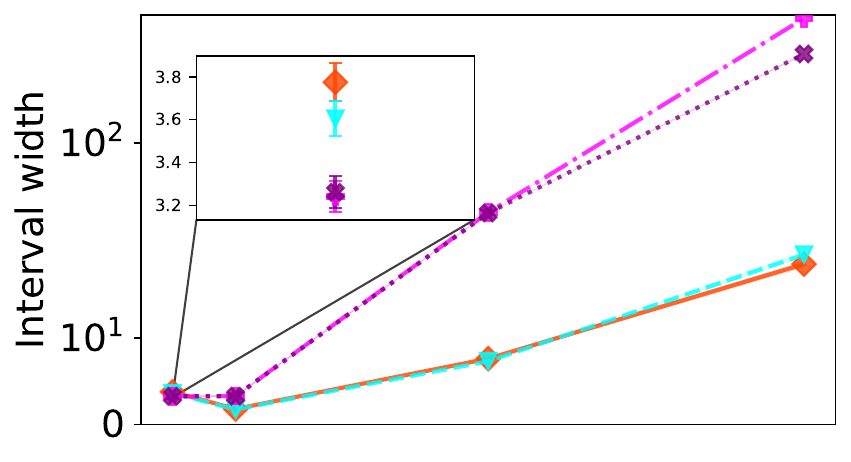} &
        \includegraphics[width=0.24\textwidth]{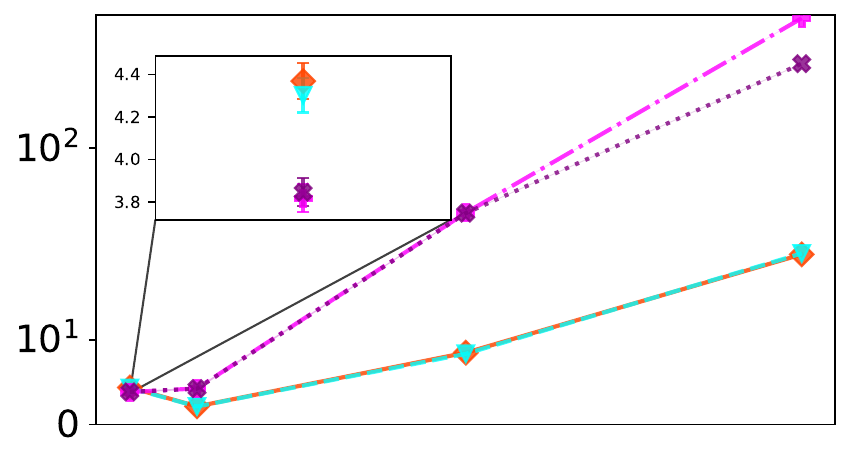} &
        \includegraphics[width=0.24\textwidth]{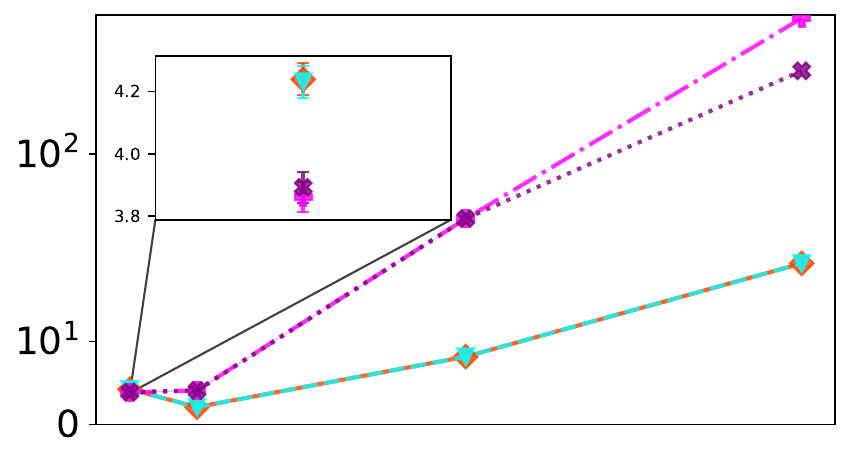} &
        \includegraphics[width=0.24\textwidth]{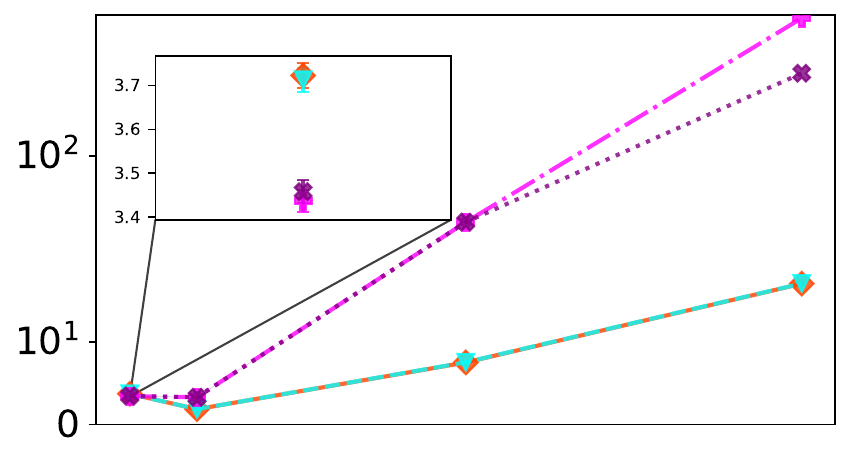} \\
    \end{tabular}

     \setlength{\tabcolsep}{2pt}
    \begin{tabular}{cccc}
        & \multicolumn{2}{c}{\footnotesize\texttt{Concrete}} & \\
        \includegraphics[width=0.24\textwidth]{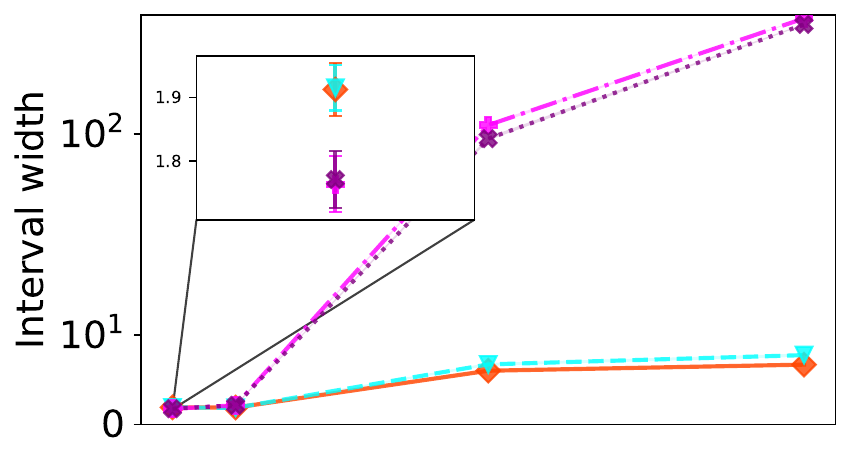} &
        \includegraphics[width=0.24\textwidth]{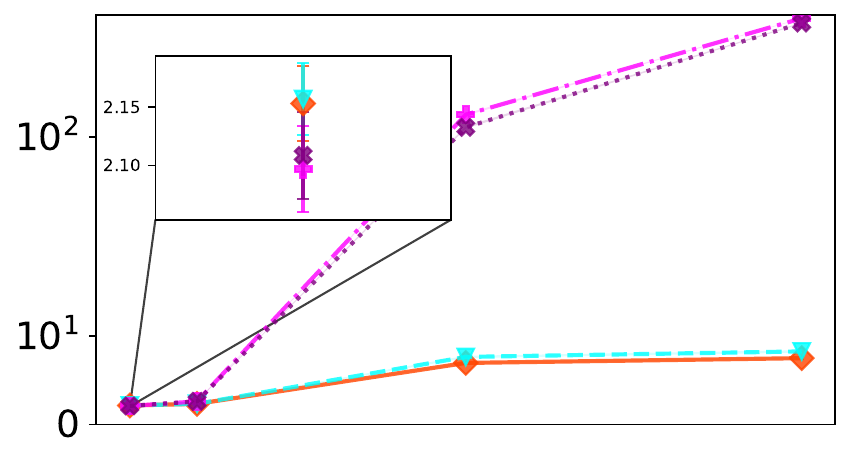} &
        \includegraphics[width=0.24\textwidth]{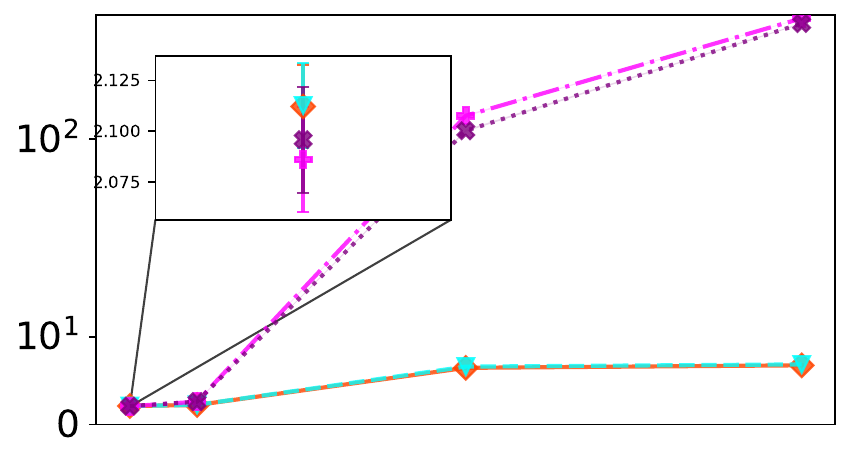} &
        \includegraphics[width=0.24\textwidth]{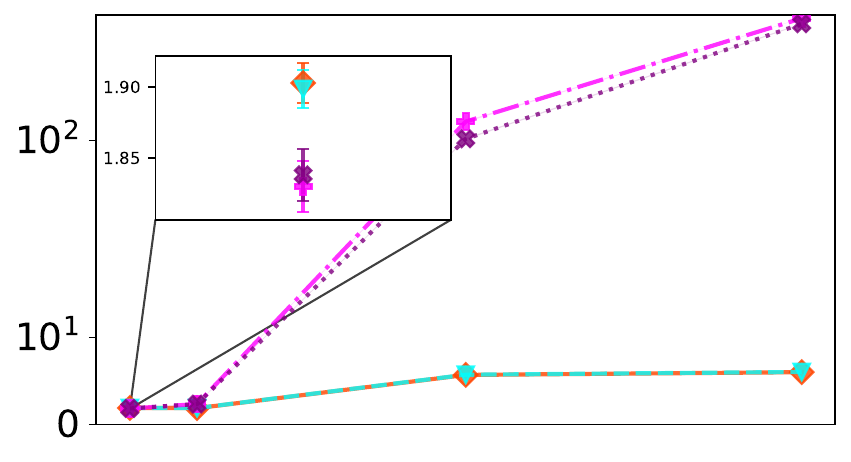} \\
    \end{tabular}

     \setlength{\tabcolsep}{2pt}
    \begin{tabular}{cccc}
        & \multicolumn{2}{c}{\footnotesize\texttt{Facebook\_1}} & \\
        \subfigure[$n_\text{cal}=5$]{\includegraphics[width=0.24\textwidth]{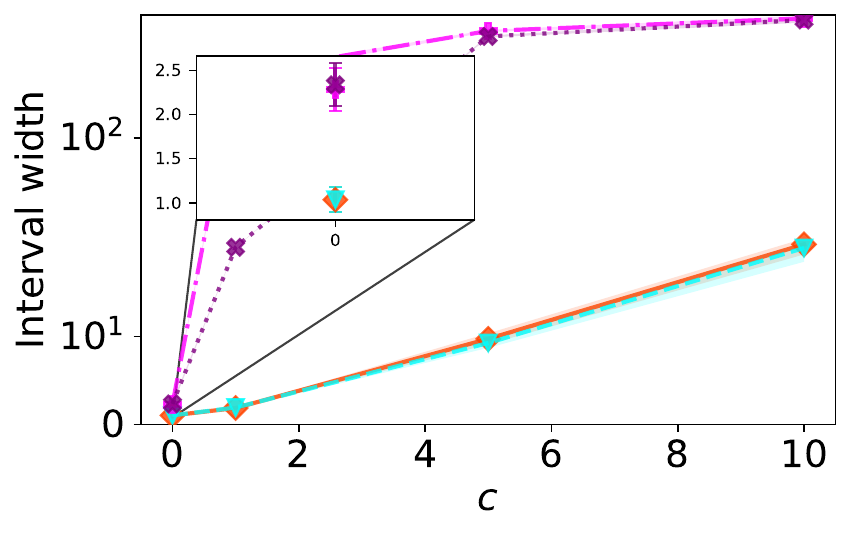}} &
        \subfigure[$n_\text{cal}=10$]{\includegraphics[width=0.24\textwidth]{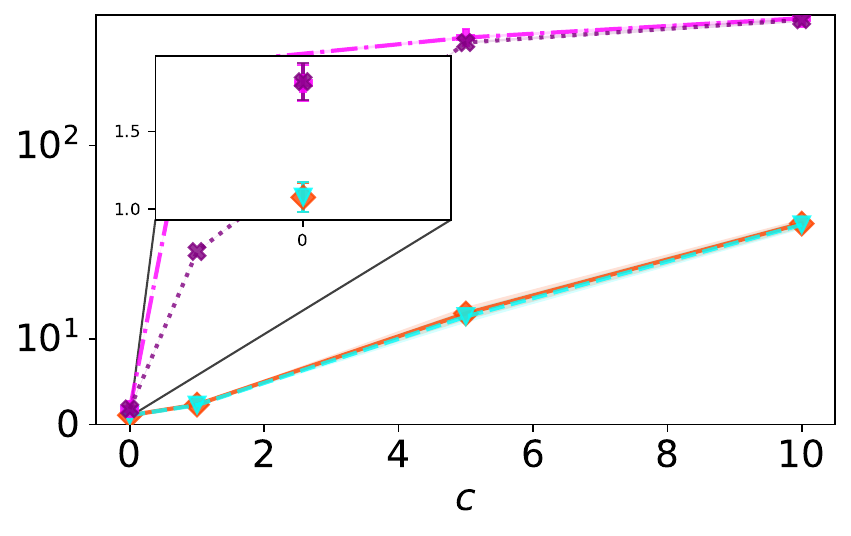}} &
        \subfigure[$n_\text{cal}=25$]{\includegraphics[width=0.24\textwidth]{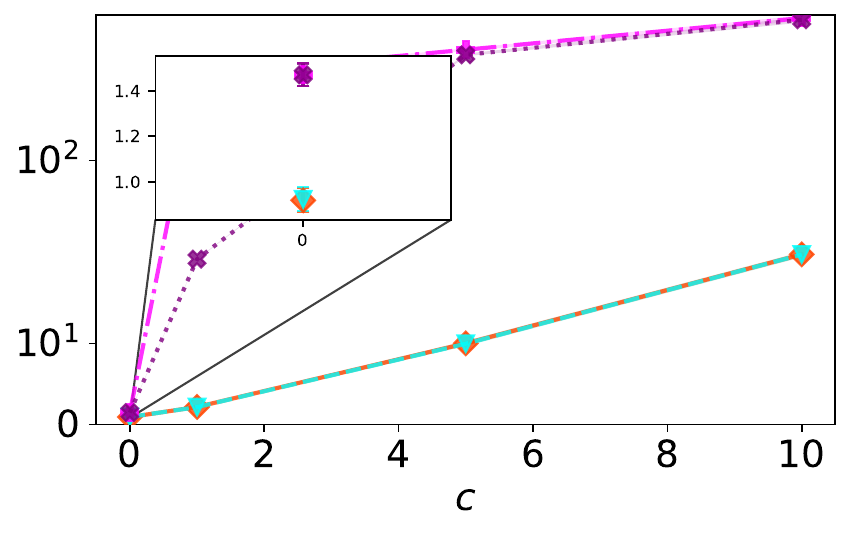}} &
        \subfigure[$n_\text{cal}=50$]{\includegraphics[width=0.24\textwidth]{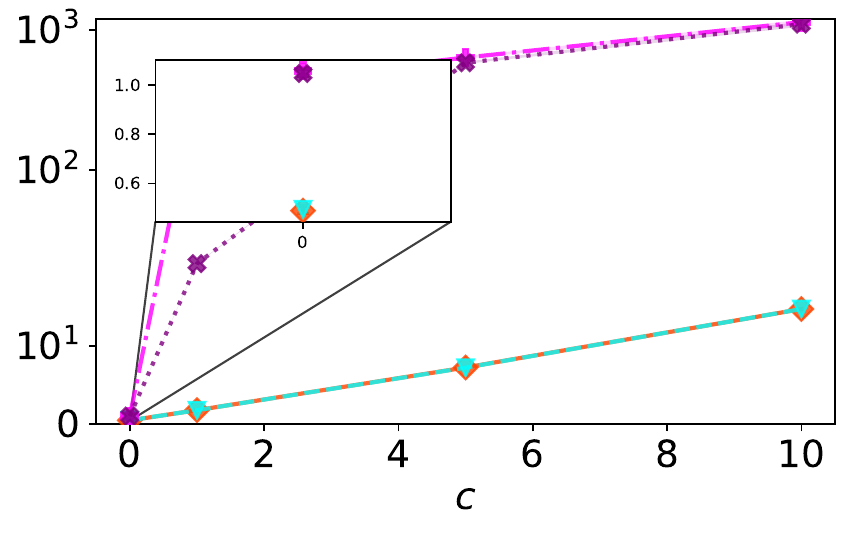}} \\
    \end{tabular}

    \vspace{-0.2cm}
    \caption{
        Interval width results for different nonconformity scores for different datasets with different levels of covariate shift. Results for different calibration sizes, $n_\text{cal}$ are shown. Each of the plots also zooms in on the results for $c= 0$ to better illustrate the differences under lower levels of covariate shift. The results are displayed on a symlog scale to more clearly highlight the differences between the methods. Results show $\text{mean} \pm \text{standard err.}$ over 300 trials.}
    \label{fig:cov_shift_comp_cb_cbma}
\end{figure*}

\begin{figure*}[!h]
    \centering
    \includegraphics[width=0.55\textwidth]{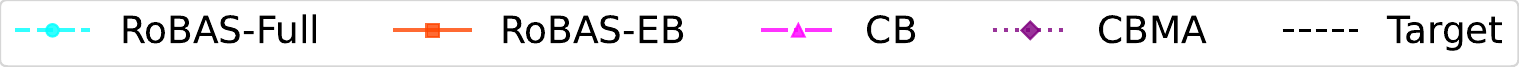}
    \vskip 3pt

    \setlength{\tabcolsep}{2pt} 
    \begin{tabular}{cccc}
        & \multicolumn{2}{c}{\footnotesize\texttt{Airfoil}} & \\
        \includegraphics[width=0.24\textwidth]{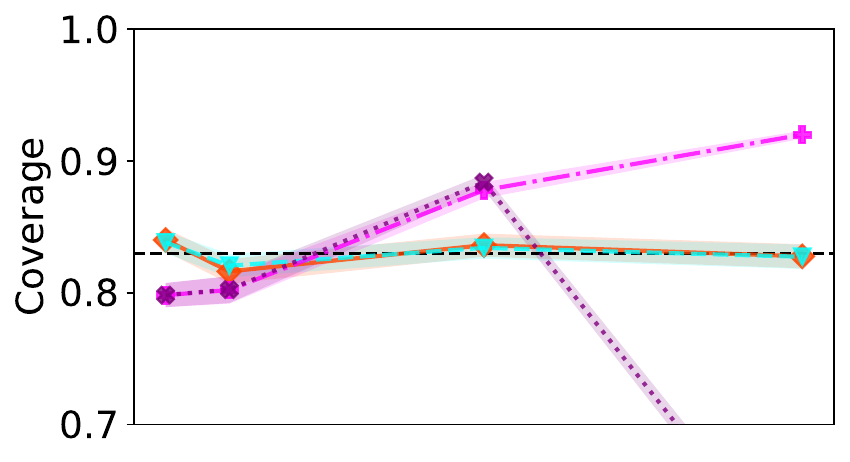} &
        \includegraphics[width=0.24\textwidth]{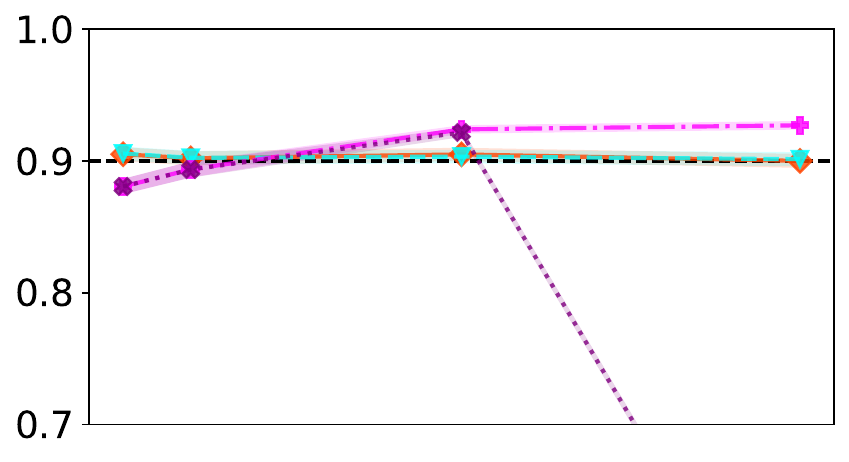} &
        \includegraphics[width=0.24\textwidth]{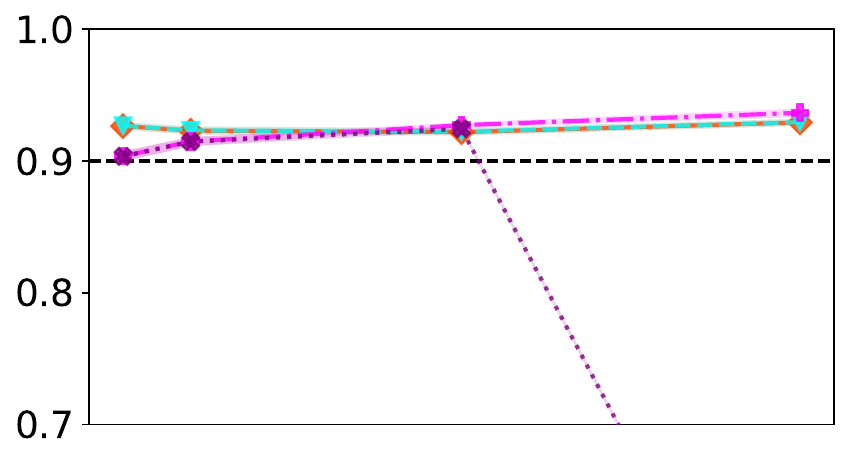} &
        \includegraphics[width=0.24\textwidth]{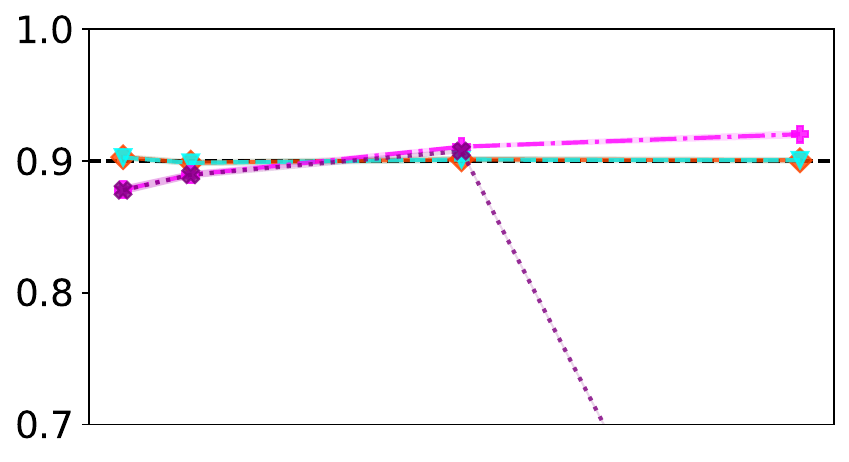} \\
    \end{tabular}

     \setlength{\tabcolsep}{2pt}
    \begin{tabular}{cccc}
        & \multicolumn{2}{c}{\footnotesize\texttt{Concrete}} & \\
        \includegraphics[width=0.24\textwidth]{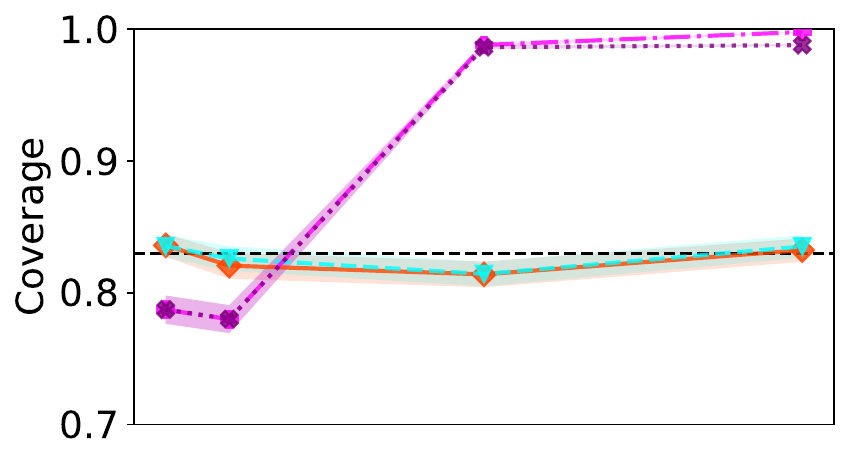} &
        \includegraphics[width=0.24\textwidth]{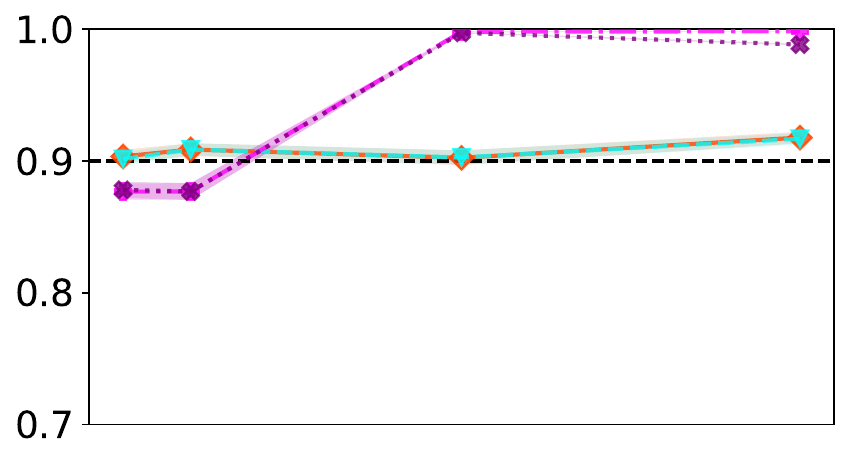} &
        \includegraphics[width=0.24\textwidth]{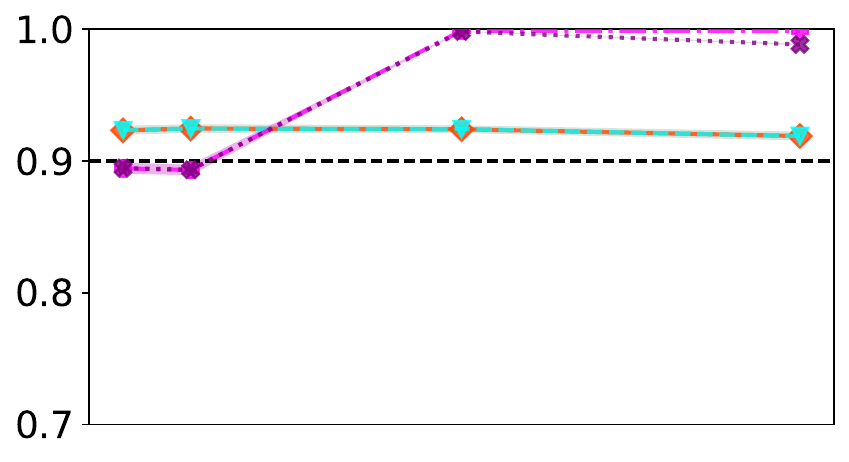} &
        \includegraphics[width=0.24\textwidth]{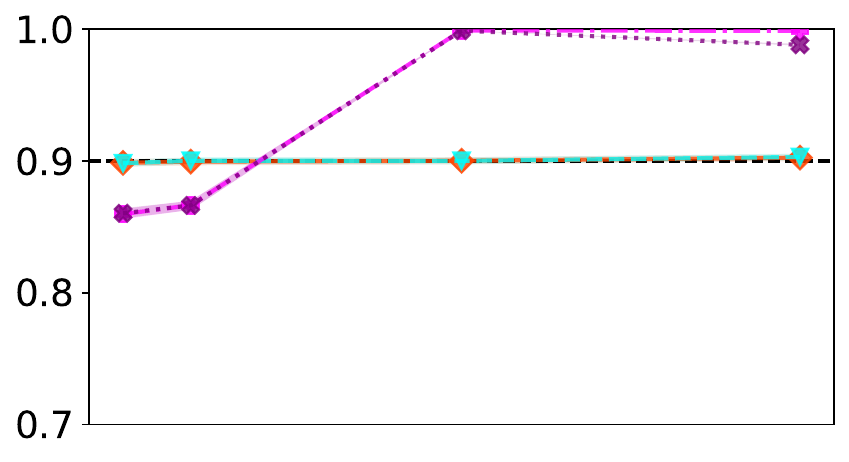} \\
    \end{tabular}

     \setlength{\tabcolsep}{2pt}
    \begin{tabular}{cccc}
        & \multicolumn{2}{c}{\footnotesize\texttt{Facebook\_1}} & \\
        \subfigure[$n_\text{cal}=5$]{\includegraphics[width=0.24\textwidth]{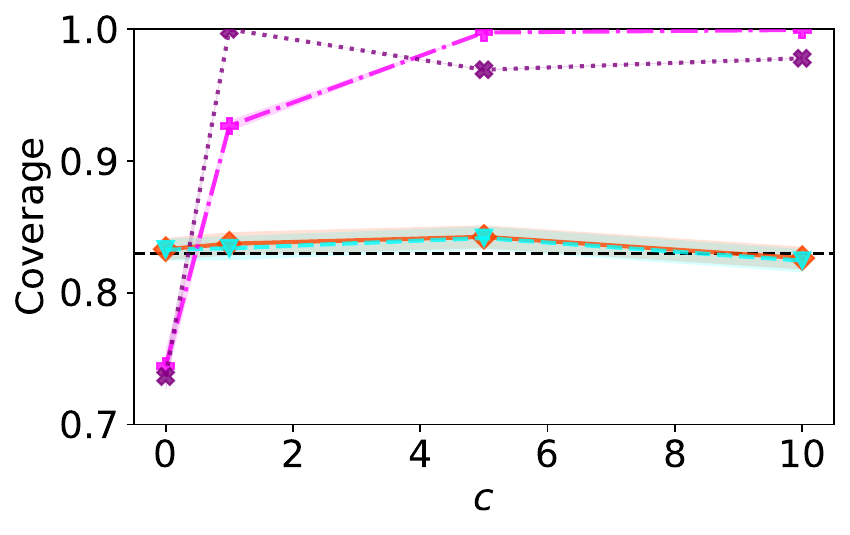}} &
        \subfigure[$n_\text{cal}=10$]{\includegraphics[width=0.24\textwidth]{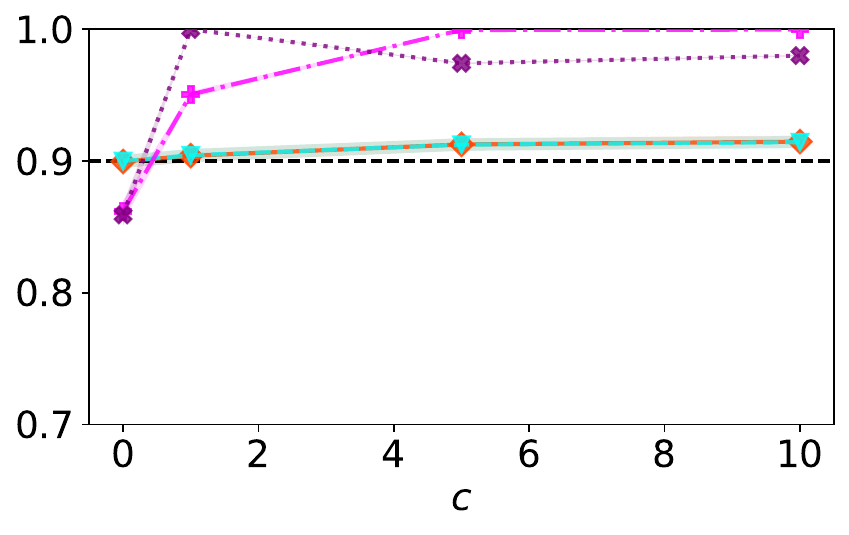}} &
        \subfigure[$n_\text{cal}=25$]{\includegraphics[width=0.24\textwidth]{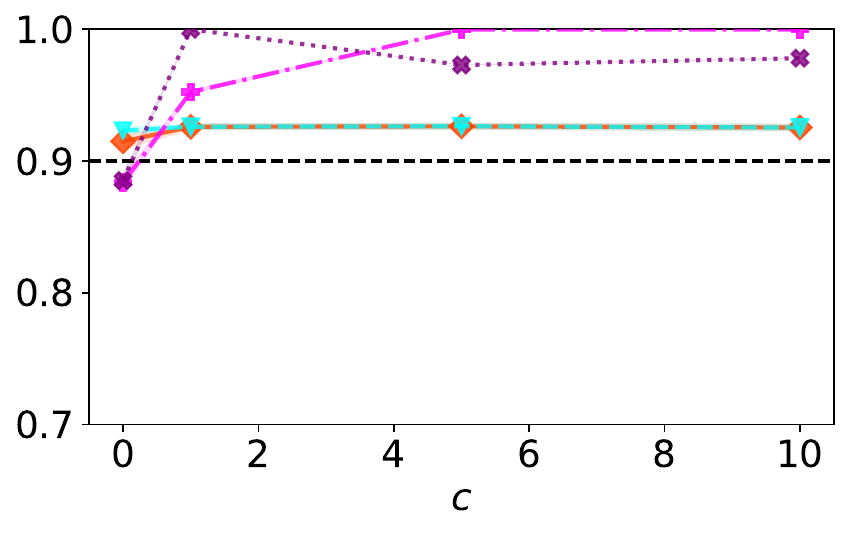}} &
        \subfigure[$n_\text{cal}=50$]{\includegraphics[width=0.24\textwidth]{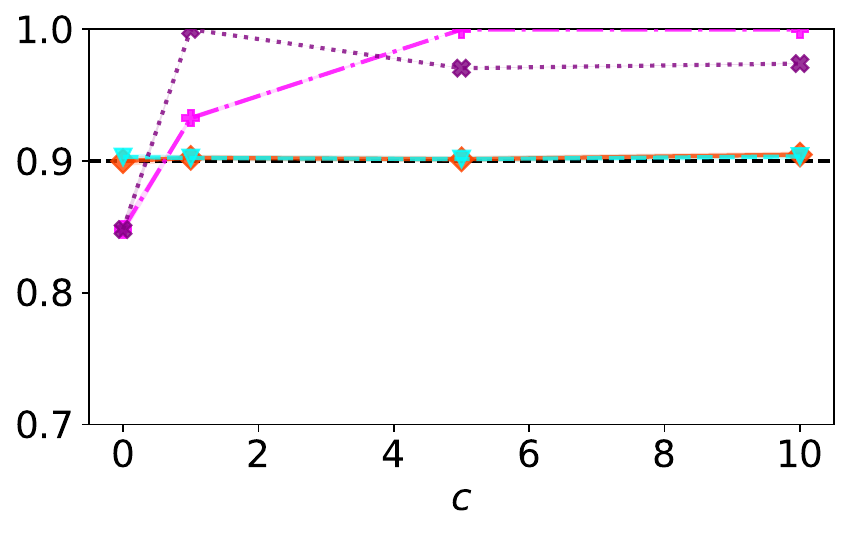}} \\
    \end{tabular}

    \vspace{-0.2cm}
    \caption{
        Coverage results for different nonconformity scores for different datasets with different levels of covariate shift. Results for different calibration sizes, $n_\text{cal}$ are shown. Results show $\text{mean} \pm \text{standard err.}$ over 300 trials.}
    \label{fig:coverage_cov_shift_cb_cbma}
\end{figure*}

\subsection{Computational Cost}
\label{app:comp_st}

Table \ref{tab:comp_cost_tab_dataset} reports the computational cost for different nonconformity scores and different model choices on both our tabular and image datasets. We report performance as the mean throughput (trials per second) over 300 trials, where each trial corresponds to the computation of a single prediction interval.

We observe that \textbf{NNG}, \textbf{CQR}, and \textbf{LOCAL} incur the lowest computational costs across all datasets, primarily due to their closed--form expressions for their prediction intervals. In contrast,  \textbf{CB} and \textbf{CBMA} have the highest computational cost.


\begin{table*}[h]
    \centering    
    \caption{Computational cost of prediction interval computation for different nonconformity scores, measured in trials per second (over 300 trials). Each trial corresponds to the computation of a single prediction interval. Experiments use a calibration set size of $n_{\text{cal}}=50$. For image datasets, results correspond to the in-distribution subsets; for tabular datasets, we report results for $c=0$. ``--'' indicates that the experiments were not ran for this particular dataset and score.}
    \label{tab:comp_cost_tab_dataset}
    \vspace{0.3cm}    
    
    \small     
    \setlength{\tabcolsep}{5pt} 
    
    \begin{tabular}{c@{\hspace{5pt}}c|c|c|c|c|c}
        \toprule
        \multicolumn{2}{c|}{\textbf{Scores}} & \texttt{Facebook\_1} & \texttt{Airfoil} & \texttt{Concrete} & \texttt{VentricularVolume} & \texttt{UTKFaces} \\
        \midrule        
          & \textbf{DTO} & 4.32  & 5.37  & 5.62  & 15.92  & 16.85  \\
          
          & \textbf{DTA} & 3.98  & 5.29  & 5.25  & 12.63  & 13.08  \\

          & \textbf{NNG} & 7.39  & 11.31  & 12.79  & 3635.30  & 2607.25  \\
                    
          & \textbf{LOCAL} & 4.11 & 42.84  & 45.74  & 22.41  & 16.74  \\

          & \textbf{CQR} & 4.40  & 41.70  & 46.65  & 22.50  & 18.70 \\
          
          & \textbf{CB} & 7.16 & 1.96 & 2.62 & -- & -- \\

          & \textbf{CBMA} & 7.25 & 1.96 & 2.62 & -- & -- \\

           \hline
           \hline
          \rowcolor{lightgray} \cellcolor{white} & \cellcolor{white}  \textbf{RoBAS--Full} & 3.13  & 4.92  & 4.87  & 10.69  & 12.03 \\
          \rowcolor{lightgray} \cellcolor{white} & \cellcolor{white}  \textbf{RoBAS--EB} & 4.02  & 5.20  & 5.35  & 12.69  & 13.81 \\
          
        \bottomrule        
    \end{tabular}
\end{table*}

\subsection{Coverage}
\label{app:covreage}

\paragraph{Synthetic dataset.} Figure \ref{fig:cov_synthetic} shows the coverage results for Figure \ref{fig:synthetic}.

\paragraph{Tabular datasets.} Figures \ref{fig:coverage_cov_shift} and \ref{fig:coverage_cov_shift_standard_cal} show the coverage results for Figures \ref{fig:cov_shift} and \ref{fig:cov_shift_standard_cal} respectively.

\paragraph{Image datasets.} Tables \ref{tab:cov_utk}, \ref{tab:cov_vv}, and \ref{tab:vv_utkfaces_cov_ncalstandard} show the coverage results for Tables \ref{tab:widths_utk}, \ref{tab:widths_vv} and \ref{tab:vv_utkfaces_widths_ncastandard}  respectively.

\begin{figure*}[h]
    \centering
    \includegraphics[width=0.6\textwidth]{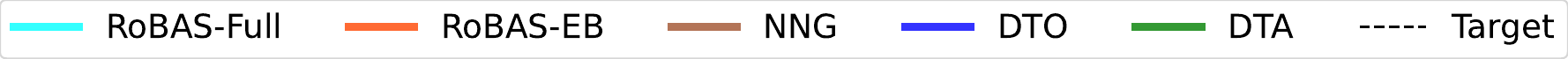}
    \vskip -5pt
    
    \setlength{\tabcolsep}{2pt} 
    \begin{tabular}{cccc}
        \subfigure[$n_\text{cal}=5$]{\includegraphics[width=0.24\textwidth]{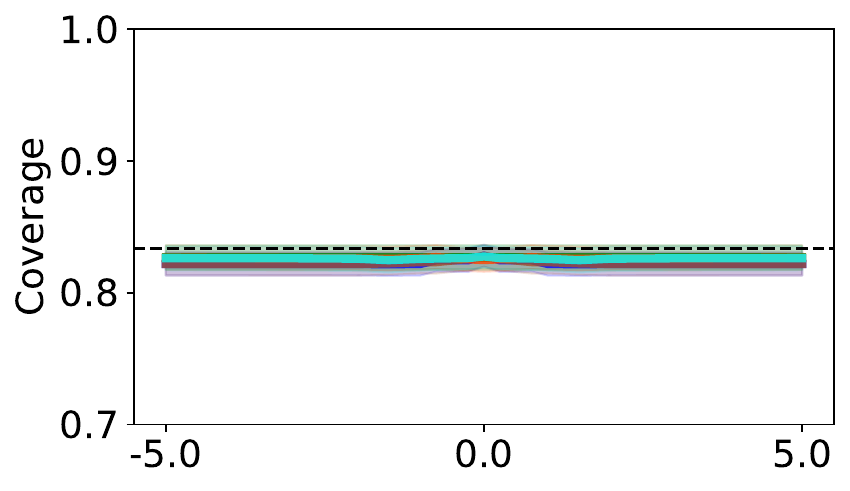}} &
        \subfigure[$n_\text{cal}=10$]{\includegraphics[width=0.24\textwidth]{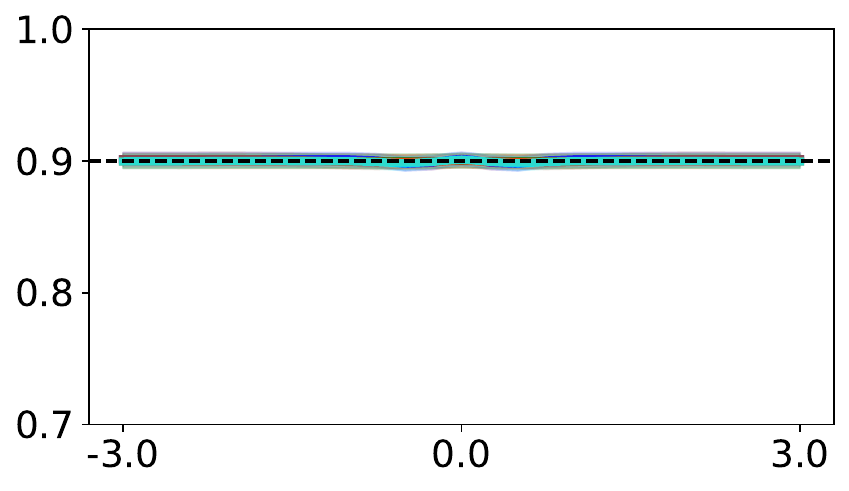}} &
        \subfigure[$n_\text{cal}=25$]{\includegraphics[width=0.24\textwidth]{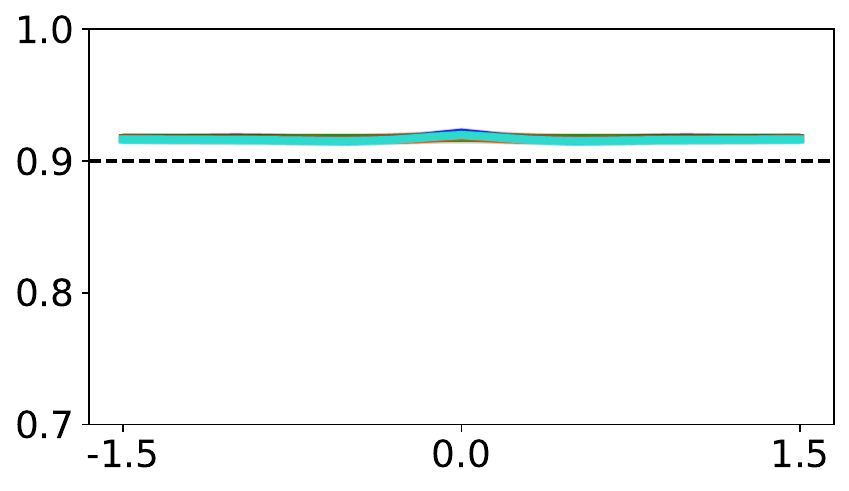}} &
        \subfigure[$n_\text{cal}=50$]{\includegraphics[width=0.24\textwidth]{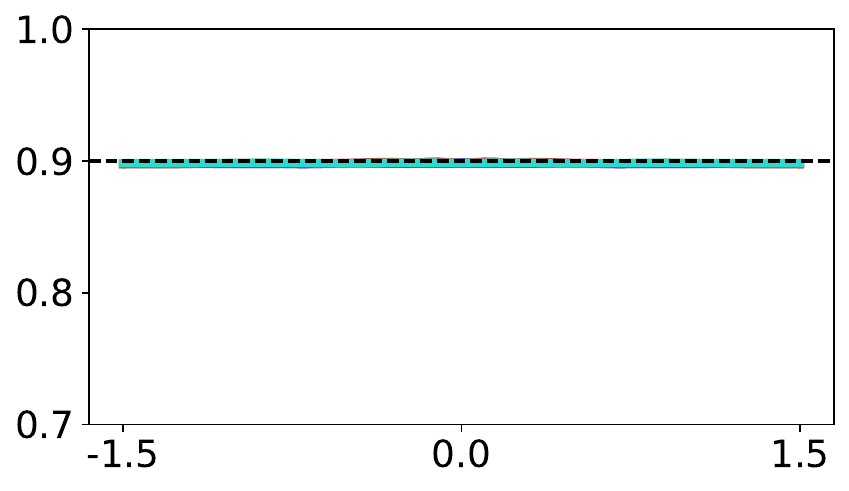}} \\
    \end{tabular}
    
    \vspace{0.3cm} 

    \setlength{\tabcolsep}{2pt} 
    \begin{tabular}{cccc}
        \subfigure[$\sigma^2=0.1$]{\includegraphics[width=0.24\textwidth]{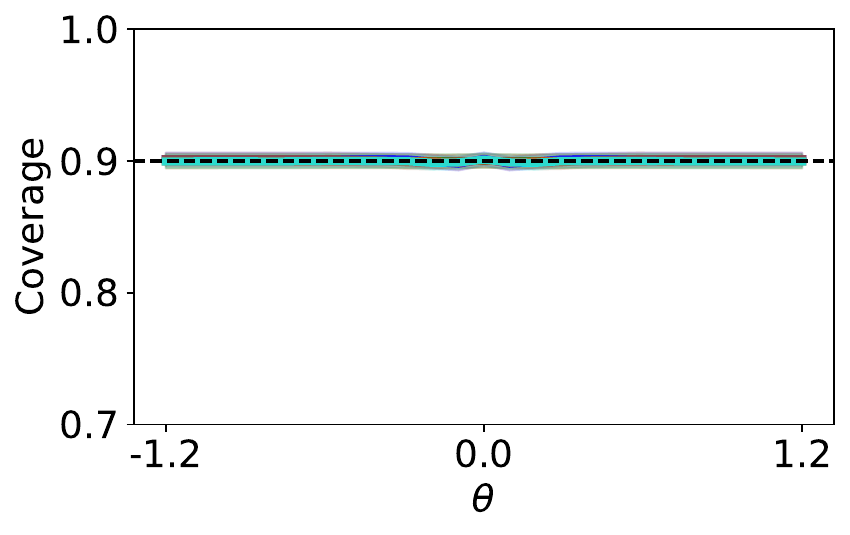}} &
        \subfigure[$\sigma^2=0.5$]{\includegraphics[width=0.24\textwidth]{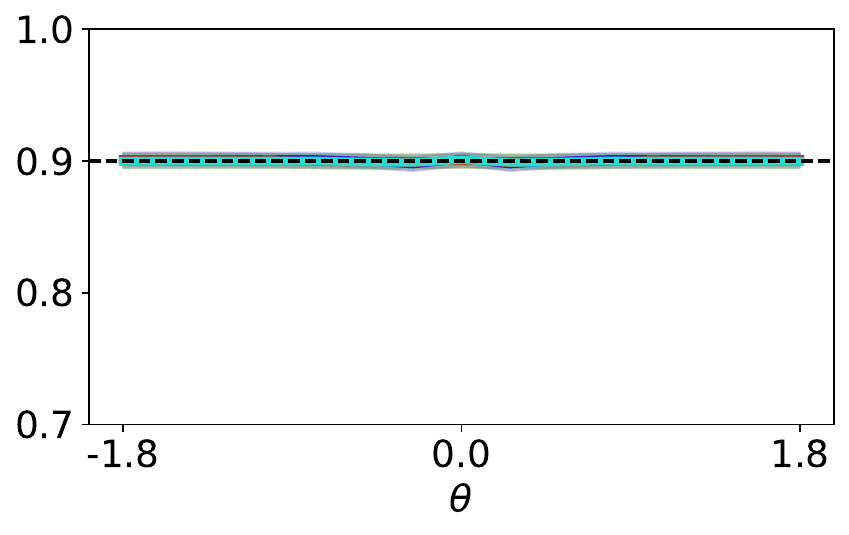}} &
        \subfigure[$\sigma^2=2$]{\includegraphics[width=0.24\textwidth]{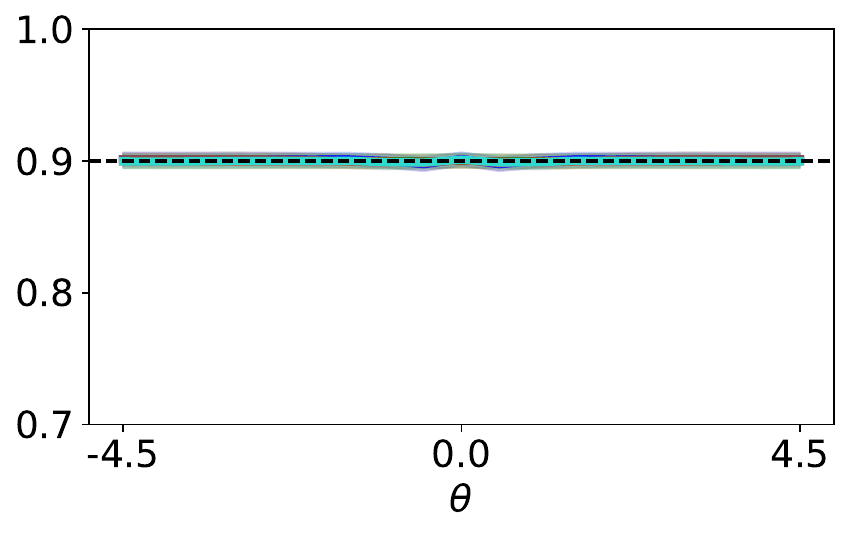}} &
        \subfigure[$\sigma^2=5$]{\includegraphics[width=0.24\textwidth]{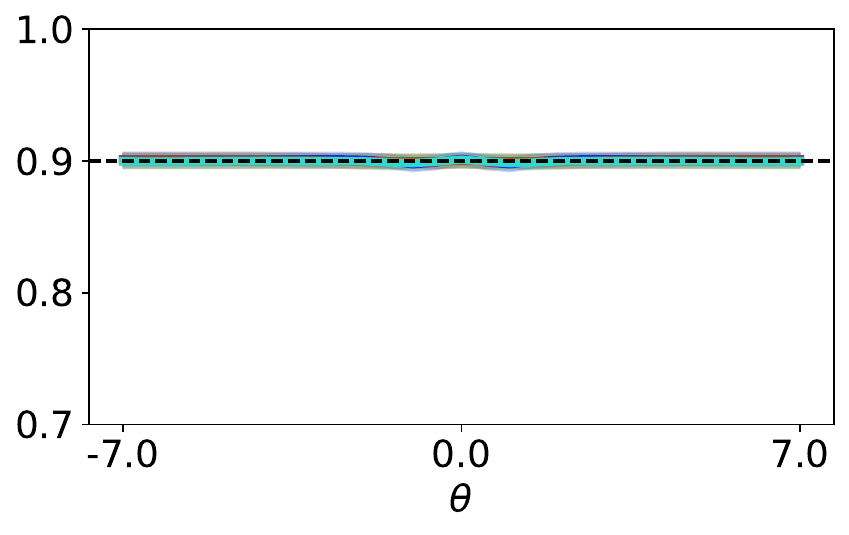}} \\
    \end{tabular}
    
    \vspace{-0.2cm}
    
    \caption{
        Coverage results for different nonconformity scores for data distributed as $\mathcal{N}(\theta, \sigma^2)$. \textbf{Top Row:} Results across different calibration sizes, $n_\text{cal}$, with $\sigma^2=1$. \textbf{Bottom Row:} Results across different noise levels, $\sigma^2$, with a fixed calibration size $n_\text{cal}=10$. To make the differences between the different methods clear, the y--axis has been cut. Results show the $\text{mean} \pm \text{standard err.}$ over 300 trials.}
    \label{fig:cov_synthetic}
\end{figure*}

\begin{figure*}[!t]
    \centering
    \includegraphics[width=0.8\textwidth]{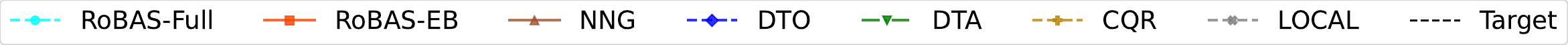}
    \vskip 3pt

    \setlength{\tabcolsep}{2pt} 
    \begin{tabular}{cccc}
        & \multicolumn{2}{c}{\footnotesize\texttt{Airfoil}} & \\
        \includegraphics[width=0.24\textwidth]{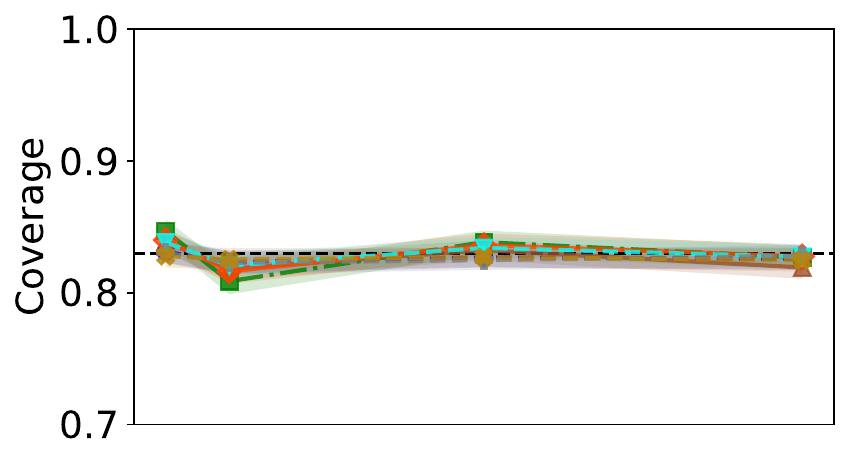} &
        \includegraphics[width=0.24\textwidth]{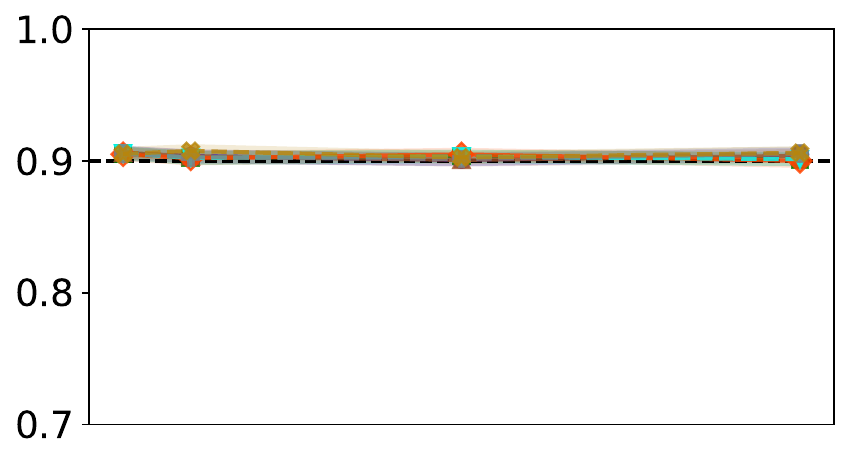} &
        \includegraphics[width=0.24\textwidth]{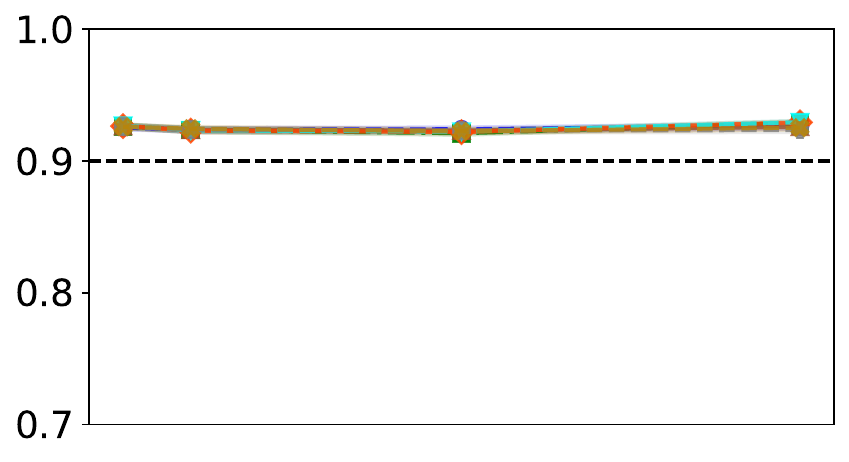} &
        \includegraphics[width=0.24\textwidth]{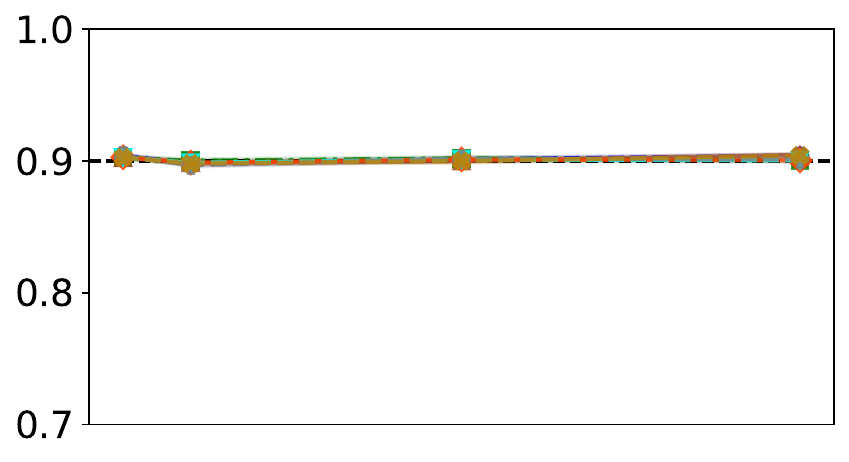} \\
    \end{tabular}

     \setlength{\tabcolsep}{2pt}
    \begin{tabular}{cccc}
        & \multicolumn{2}{c}{\footnotesize\texttt{Concrete}} & \\
        \includegraphics[width=0.24\textwidth]{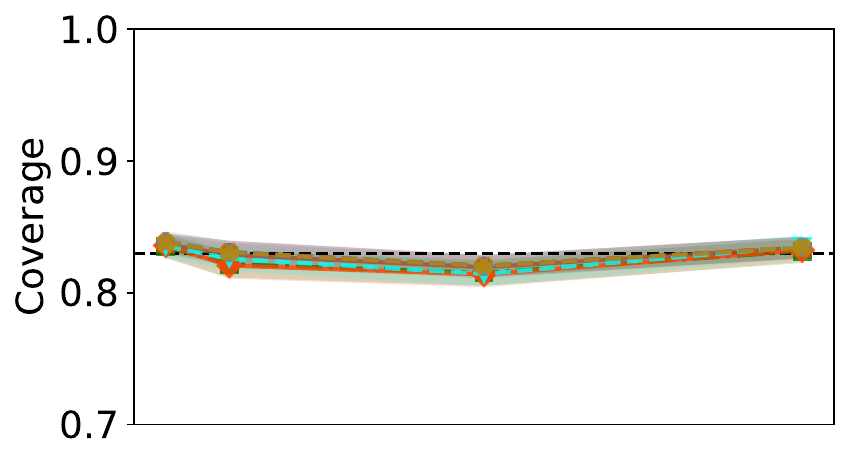} &
        \includegraphics[width=0.24\textwidth]{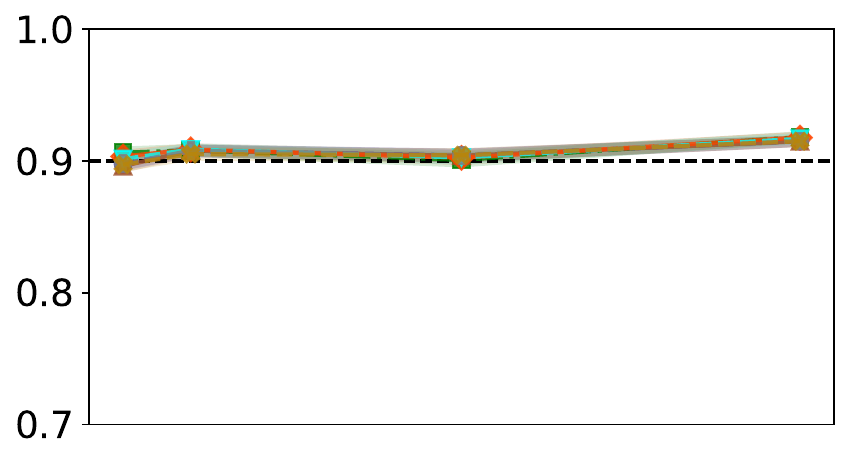} &
        \includegraphics[width=0.24\textwidth]{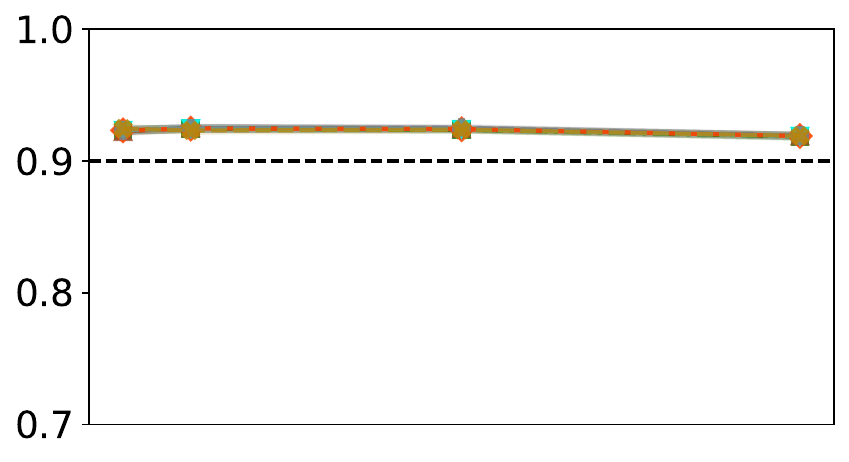} &
        \includegraphics[width=0.24\textwidth]{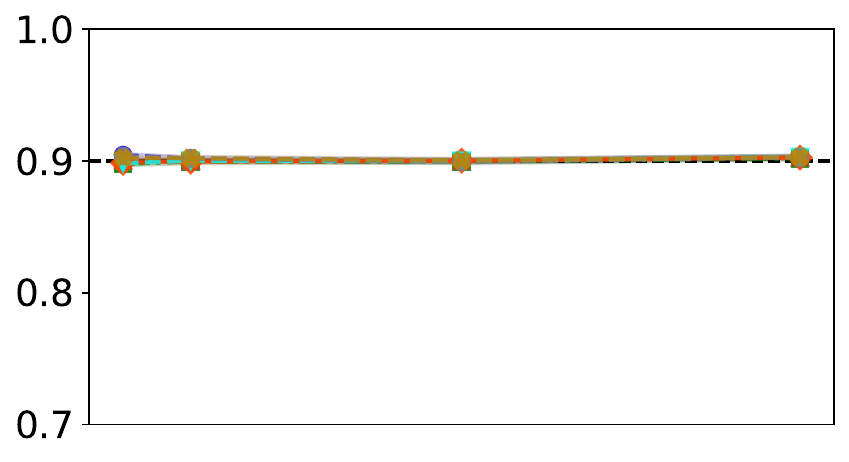} \\
    \end{tabular}

     \setlength{\tabcolsep}{2pt}
    \begin{tabular}{cccc}
        & \multicolumn{2}{c}{\footnotesize\texttt{Facebook\_1}} & \\
        \subfigure[$n_\text{cal}=5$]{\includegraphics[width=0.24\textwidth]{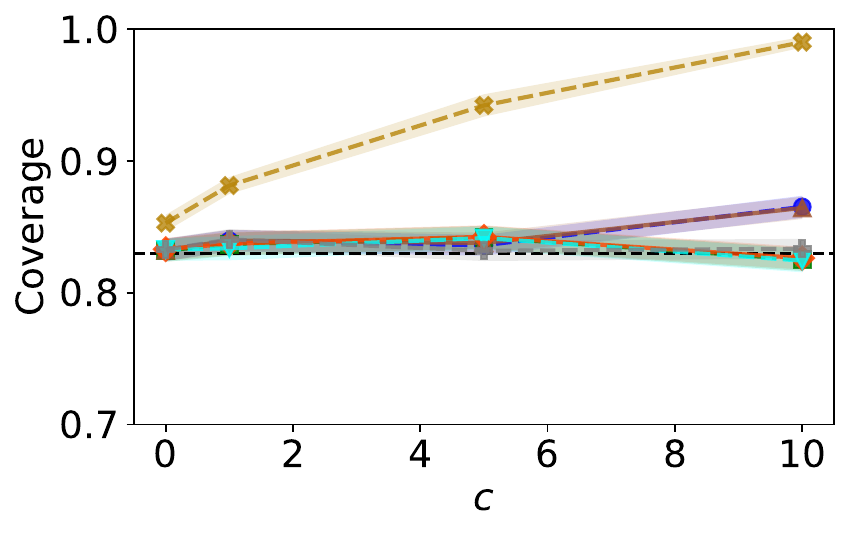}} &
        \subfigure[$n_\text{cal}=10$]{\includegraphics[width=0.24\textwidth]{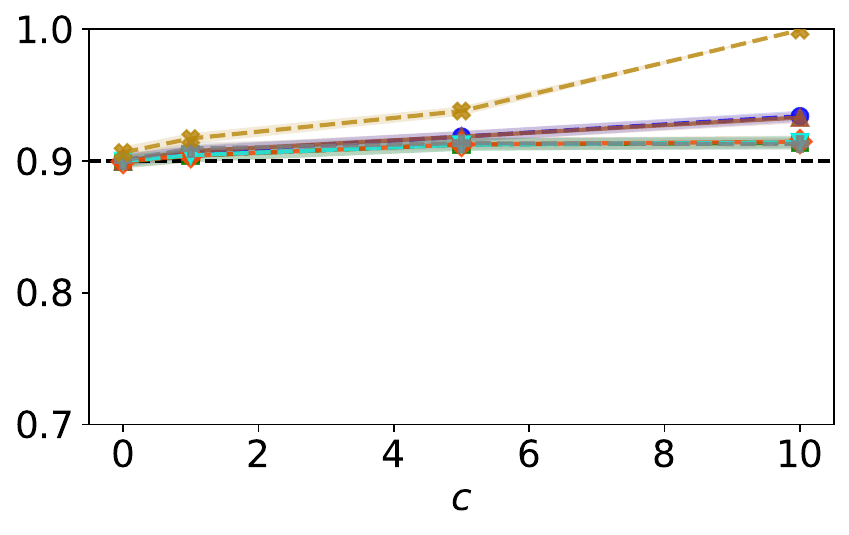}} &
        \subfigure[$n_\text{cal}=25$]{\includegraphics[width=0.24\textwidth]{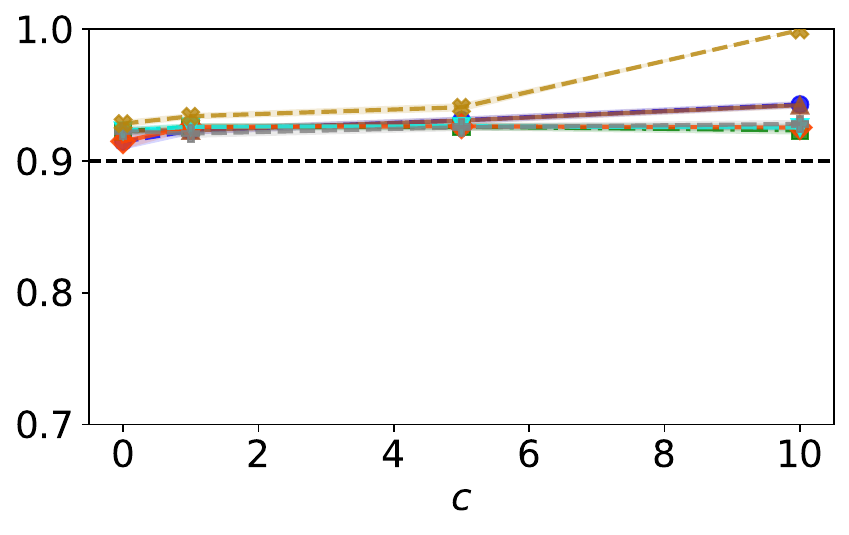}} &
        \subfigure[$n_\text{cal}=50$]{\includegraphics[width=0.24\textwidth]{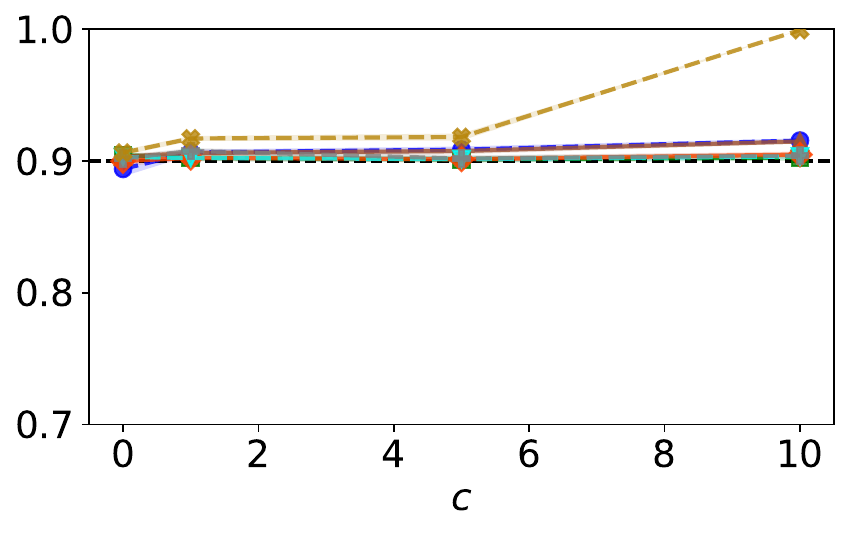}} \\
    \end{tabular}

    \vspace{-0.2cm}
    \caption{
        Coverage results for different nonconformity scores for different datasets with different levels of covariate shift. Results for different calibration sizes, $n_\text{cal}$ are shown. Results show $\text{mean} \pm \text{standard err.}$ over 300 trials.}
    \label{fig:coverage_cov_shift}
\end{figure*}

\begin{figure*}[h]
    \centering
    \includegraphics[width=0.75\textwidth]{figures/uci/legend_cov.pdf}
    \vskip 2pt
    
    \setlength{\tabcolsep}{2pt} 
    \begin{tabular}{ccc}
        \footnotesize\texttt{Airfoil} & \footnotesize\texttt{Concrete} & \footnotesize\texttt{Facebook\_1} \\
        \subfigure[$n_\mathrm{cal}=601$]{\includegraphics[width=0.24\textwidth]{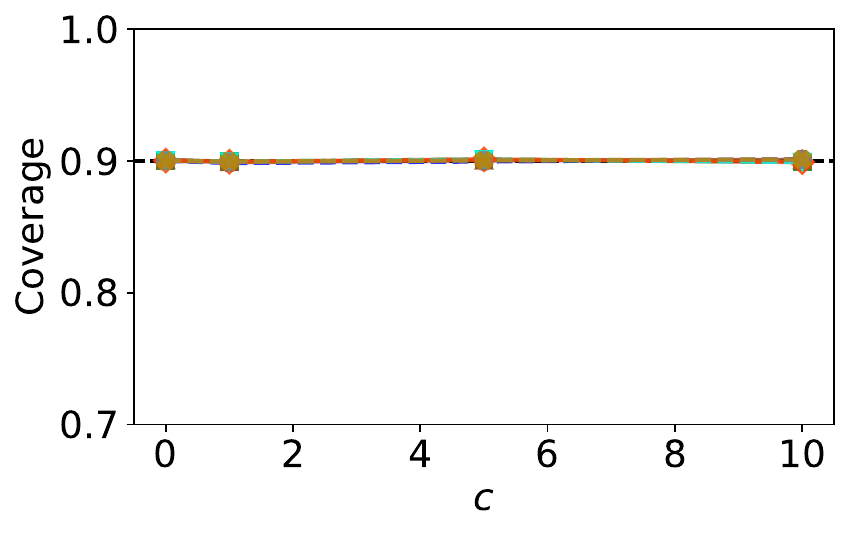}} &
        \subfigure[$n_\mathrm{cal}=412$]{\includegraphics[width=0.24\textwidth]{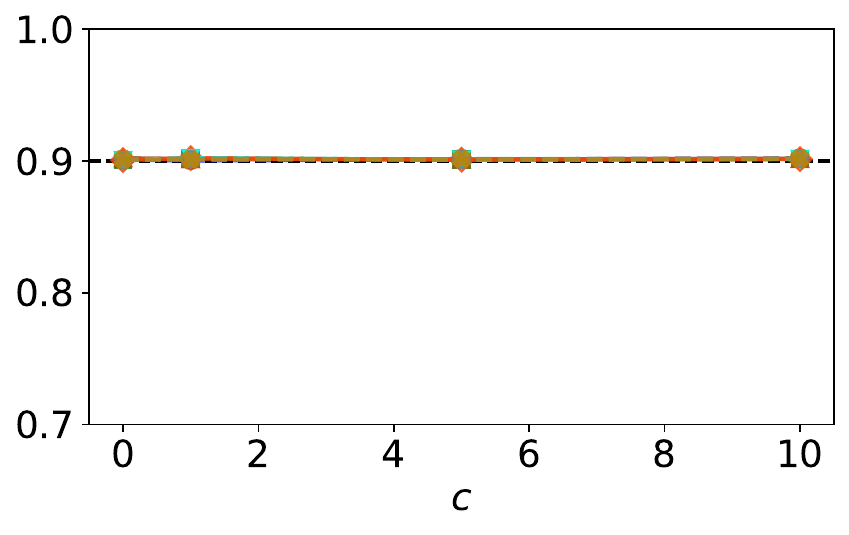}} &
        \subfigure[$n_\mathrm{cal}=5000$]{\includegraphics[width=0.24\textwidth]{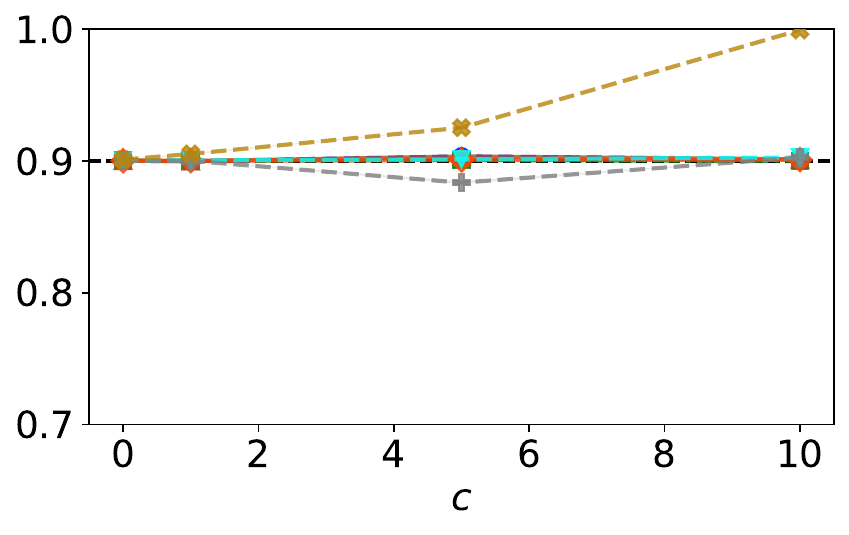}}
    \end{tabular}
    
    \vspace{-0.2cm}
    \caption{
        Coverage results for different nonconformity scores for different datasets with different levels of covariate shift. Results for standard calibration sizes are shown. Results show $\text{mean} \pm \text{standard err.}$ over 300 trials.}
    \label{fig:coverage_cov_shift_standard_cal}
\end{figure*}

\begin{table*}[h]
    \centering    
    \caption{Coverage for different nonconformity scores at different calibration sizes, $n_\text{cal}$, for the \texttt{UTKFaces} dataset. Results show the $\text{mean} \pm \text{standard err.}$ over 300 trials on the in--distribution and out--of--distribution subsets of the dataset. $\kappa$ denotes the target coverage level.
    }
    \label{tab:cov_utk}
    \vspace{0.3cm}    
    \setlength{\tabcolsep}{4pt} 
    \resizebox{0.95\textwidth}{!}{%
    \begin{tabular}{cc|c|c|c|c|c|c|c|c}
        \toprule
        \multicolumn{2}{c|}{\multirow{2}{*}{\textbf{Scores}}} & \multicolumn{4}{c|}{\textbf{IN}} & \multicolumn{4}{c}{\textbf{OUT}} \\
        
        \multicolumn{2}{c|}{} & 
        \shortstack{$n_\text{cal}=5$ \\ $\kappa=0.833$} & \shortstack{$n_\text{cal}=10$ \\ $\kappa=0.900$} & \shortstack{$n_\text{cal}=25$ \\ $\kappa=0.900$} & \shortstack{$n_\text{cal}=50$ \\ $\kappa=0.900$} & 
        \shortstack{$n_\text{cal}=5$ \\ $\kappa=0.833$} & \shortstack{$n_\text{cal}=10$ \\ $\kappa=0.900$} & \shortstack{$n_\text{cal}=25$ \\ $\kappa=0.900$} & \shortstack{$n_\text{cal}=50$ \\ $\kappa=0.900$} \\
        \midrule
        
        \multirow{5}{*}{\rotatebox{90}{}} 
          & \textbf{DTO} & 0.839 {\scriptsize $\pm$ 0.008} & 0.912 {\scriptsize $\pm$ 0.004} & 0.920 {\scriptsize $\pm$ 0.003} & 0.905 {\scriptsize $\pm$ 0.002} & 0.821 {\scriptsize $\pm$ 0.008} & 0.908 {\scriptsize $\pm$ 0.005} & 0.925 {\scriptsize $\pm$ 0.003} & 0.900 {\scriptsize $\pm$ 0.002} \\
          
          & \textbf{DTA} & 0.841 {\scriptsize $\pm$ 0.008} & 0.907 {\scriptsize $\pm$ 0.005} & 0.920 {\scriptsize $\pm$ 0.003} & 0.903 {\scriptsize $\pm$ 0.002} & 0.845 {\scriptsize $\pm$ 0.009} & 0.912 {\scriptsize $\pm$ 0.005} & 0.923 {\scriptsize $\pm$ 0.003} & 0.902 {\scriptsize $\pm$ 0.002} \\

          & \textbf{NNG} & 0.840 {\scriptsize $\pm$ 0.008} & 0.910 {\scriptsize $\pm$ 0.005} & 0.919 {\scriptsize $\pm$ 0.003} & 0.904 {\scriptsize $\pm$ 0.002} & 0.821 {\scriptsize $\pm$ 0.008} & 0.907 {\scriptsize $\pm$ 0.005} & 0.925 {\scriptsize $\pm$ 0.003} & 0.900 {\scriptsize $\pm$ 0.002} \\
            
          & \textbf{LOCAL} & 0.839 {\scriptsize $\pm$ 0.008} & 0.913 {\scriptsize $\pm$ 0.004} & 0.920 {\scriptsize $\pm$ 0.003} & 0.904 {\scriptsize $\pm$ 0.002} & 0.820 {\scriptsize $\pm$ 0.008} & 0.909 {\scriptsize $\pm$ 0.005} & 0.925 {\scriptsize $\pm$ 0.003} & 0.901 {\scriptsize $\pm$ 0.002} \\

          & \textbf{CQR} & 0.836 {\scriptsize $\pm$ 0.008} & 0.916 {\scriptsize $\pm$ 0.004} & 0.922 {\scriptsize $\pm$ 0.003} & 0.905 {\scriptsize $\pm$ 0.002} & 0.854 {\scriptsize $\pm$ 0.008} & 0.936 {\scriptsize $\pm$ 0.004} & 0.940 {\scriptsize $\pm$ 0.002} & 0.922 {\scriptsize $\pm$ 0.001} \\

          \hline
          \hline
          \rowcolor{lightgray} \cellcolor{white} & \cellcolor{white}  \textbf{RoBAS--Full} & 0.842 {\scriptsize $\pm$ 0.008} & 0.909 {\scriptsize $\pm$ 0.005} & 0.919 {\scriptsize $\pm$ 0.003} & 0.904 {\scriptsize $\pm$ 0.002} & 0.847 {\scriptsize $\pm$ 0.009} & 0.912 {\scriptsize $\pm$ 0.005} & 0.923 {\scriptsize $\pm$ 0.003} & 0.902 {\scriptsize $\pm$ 0.002} \\

        \rowcolor{lightgray} \cellcolor{white} & \cellcolor{white} \textbf{RoBAS--EB} & 0.840 {\scriptsize $\pm$ 0.008} & 0.908 {\scriptsize $\pm$ 0.005} & 0.919 {\scriptsize $\pm$ 0.003} & 0.904 {\scriptsize $\pm$ 0.002} & 0.846 {\scriptsize $\pm$ 0.009} & 0.912 {\scriptsize $\pm$ 0.005} & 0.923 {\scriptsize $\pm$ 0.003} & 0.902 {\scriptsize $\pm$ 0.002} \\
        \bottomrule        
    \end{tabular}
    }
\end{table*}

\begin{table*}[h]
    \centering    
    \caption{Coverage for different nonconformity scores at different calibration sizes, $n_\text{cal}$, for the \texttt{VentricularVolume} dataset. Results show the $\text{mean} \pm \text{standard err.}$ over 300 trials on the in--distribution and out--of--distribution subsets of the dataset. $\kappa$ denotes the target coverage level.
    }
    \label{tab:cov_vv}
    \vspace{0.3cm}    
    \setlength{\tabcolsep}{4pt} 
    \resizebox{0.95\textwidth}{!}{%
    \begin{tabular}{cc|c|c|c|c|c|c|c|c}
        \toprule
        \multicolumn{2}{c|}{\multirow{2}{*}{\textbf{Scores}}} & \multicolumn{4}{c|}{\textbf{IN}} & \multicolumn{4}{c}{\textbf{OUT}} \\
        
        \multicolumn{2}{c|}{} & 
        \shortstack{$n_\text{cal}=5$ \\ $\kappa=0.833$} & \shortstack{$n_\text{cal}=10$ \\ $\kappa=0.900$} & \shortstack{$n_\text{cal}=25$ \\ $\kappa=0.900$} & \shortstack{$n_\text{cal}=50$ \\ $\kappa=0.900$} & 
        \shortstack{$n_\text{cal}=5$ \\ $\kappa=0.833$} & \shortstack{$n_\text{cal}=10$ \\ $\kappa=0.900$} & \shortstack{$n_\text{cal}=25$ \\ $\kappa=0.900$} & \shortstack{$n_\text{cal}=50$ \\ $\kappa=0.900$} \\
        \midrule
        
        \multirow{5}{*}{\rotatebox{90}{}} 
          & \textbf{DTO} & 0.834 {\scriptsize $\pm$ 0.008} & 0.907 {\scriptsize $\pm$ 0.005} & 0.920 {\scriptsize $\pm$ 0.003} & 0.901 {\scriptsize $\pm$ 0.003} & 0.837 {\scriptsize $\pm$ 0.008} & 0.904 {\scriptsize $\pm$ 0.005} & 0.923 {\scriptsize $\pm$ 0.003} & 0.901 {\scriptsize $\pm$ 0.003} \\
          
          & \textbf{DTA} & 0.831 {\scriptsize $\pm$ 0.009} & 0.909 {\scriptsize $\pm$ 0.005} & 0.922 {\scriptsize $\pm$ 0.003} & 0.900 {\scriptsize $\pm$ 0.003} & 0.845 {\scriptsize $\pm$ 0.008} & 0.907 {\scriptsize $\pm$ 0.005} & 0.924 {\scriptsize $\pm$ 0.003} & 0.902 {\scriptsize $\pm$ 0.003} \\

          & \textbf{NNG} & 0.836 {\scriptsize $\pm$ 0.008} & 0.908 {\scriptsize $\pm$ 0.005} & 0.921 {\scriptsize $\pm$ 0.003} & 0.901 {\scriptsize $\pm$ 0.003} & 0.839 {\scriptsize $\pm$ 0.008} & 0.904 {\scriptsize $\pm$ 0.005} & 0.924 {\scriptsize $\pm$ 0.003} & 0.901 {\scriptsize $\pm$ 0.003} \\

          & \textbf{LOCAL} & 0.835 {\scriptsize $\pm$ 0.008} & 0.906 {\scriptsize $\pm$ 0.005} & 0.918 {\scriptsize $\pm$ 0.003} & 0.900 {\scriptsize $\pm$ 0.003} & 0.839 {\scriptsize $\pm$ 0.008} & 0.907 {\scriptsize $\pm$ 0.005} & 0.924 {\scriptsize $\pm$ 0.003} & 0.902 {\scriptsize $\pm$ 0.003} \\

          & \textbf{CQR} & 0.833 {\scriptsize $\pm$ 0.007} & 0.908 {\scriptsize $\pm$ 0.004} & 0.917 {\scriptsize $\pm$ 0.003} & 0.896 {\scriptsize $\pm$ 0.003} & 0.829 {\scriptsize $\pm$ 0.009} & 0.906 {\scriptsize $\pm$ 0.005} & 0.923 {\scriptsize $\pm$ 0.003} & 0.902 {\scriptsize $\pm$ 0.003} \\

        \hline
        \hline
          \rowcolor{lightgray} \cellcolor{white} & \cellcolor{white}  \textbf{RoBAS--Full} & 0.836 {\scriptsize $\pm$ 0.008} & 0.909 {\scriptsize $\pm$ 0.005} & 0.921 {\scriptsize $\pm$ 0.003} & 0.901 {\scriptsize $\pm$ 0.003} & 0.842 {\scriptsize $\pm$ 0.008} & 0.905 {\scriptsize $\pm$ 0.005} & 0.924 {\scriptsize $\pm$ 0.003} & 0.902 {\scriptsize $\pm$ 0.003} \\

          \rowcolor{lightgray} \cellcolor{white} & \cellcolor{white}  \textbf{RoBAS--EB} & 0.833 {\scriptsize $\pm$ 0.008} & 0.909 {\scriptsize $\pm$ 0.005} & 0.922 {\scriptsize $\pm$ 0.003} & 0.901 {\scriptsize $\pm$ 0.003} & 0.844 {\scriptsize $\pm$ 0.008} & 0.906 {\scriptsize $\pm$ 0.005} & 0.924 {\scriptsize $\pm$ 0.003} & 0.902 {\scriptsize $\pm$ 0.003} \\
        \bottomrule       
    \end{tabular}
    }
\end{table*}

\begin{table*}[h]
    \centering    
    \caption{Coverage for different nonconformity scores at standard calibration sizes for the \texttt{UTKFaces} and \texttt{VentricularVolume} datasets. Results show the $\text{mean} \pm \text{standard err.}$ over 300 trials on the in--distribution and out--of--distribution sets of the datasets. The target coverage level here is $0.900$.
    }
    \label{tab:vv_utkfaces_cov_ncalstandard}
    \vspace{0.2cm}
    \setlength{\tabcolsep}{4pt} 
    \resizebox{0.6\textwidth}{!}{%
    \footnotesize
    \begin{tabular}{cc|c|c|c|c}
        \toprule
        \multicolumn{2}{c|}{\multirow{2}{*}{\textbf{Scores}}} & \multicolumn{2}{c|}{\texttt{UTKFaces}} & \multicolumn{2}{c}{\texttt{VentricularVolume}} \\
        \multicolumn{2}{c|}{} & 
        \makecell{\textbf{IN} \\ $n_\mathrm{cal}=2380$} & \makecell{\textbf{OUT} \\ $n_\mathrm{cal}=3387$} & 
        \makecell{\textbf{IN} \\ $n_\mathrm{cal}=1031$} & \makecell{\textbf{OUT} \\ $n_\mathrm{cal}=1021$} \\
        \midrule
        
        \multirow{8}{*}{\rotatebox{90}{}} 
          & \textbf{DTO} 
          & 0.900 {\scriptsize $\pm$ 0.001}    & 0.901 {\scriptsize $\pm$ 0.001}
          & 0.900 {\scriptsize $\pm$ 0.001} & 0.899 {\scriptsize $\pm$ 0.001} \\

          & \textbf{DTA} 
          & 0.900 {\scriptsize $\pm$ 0.001}    & 0.900 {\scriptsize $\pm$ 0.001} 
          & 0.900 {\scriptsize $\pm$ 0.001} & 0.902 {\scriptsize $\pm$ 0.001} \\

          & \textbf{NNG} 
          & 0.900 {\scriptsize $\pm$ 0.001}    & 0.901 {\scriptsize $\pm$ 0.001}
          & 0.900 {\scriptsize $\pm$ 0.001} & 0.899 {\scriptsize $\pm$ 0.001}  \\
          
          & \textbf{LOCAL} 
          & 0.900 {\scriptsize $\pm$ 0.001}    & 0.901 {\scriptsize $\pm$ 0.001} 
          & 0.900 {\scriptsize $\pm$ 0.001} & 0.901 {\scriptsize $\pm$ 0.001} \\

          & \textbf{CQR} 
          & 0.902 {\scriptsize $\pm$ 0.001}    & 0.906 {\scriptsize $\pm$ 0.001}
          & 0.899 {\scriptsize $\pm$ 0.001} & 0.900 {\scriptsize $\pm$ 0.001} \\

          \hline
          \hline
          \rowcolor{lightgray} \cellcolor{white} & \cellcolor{white}  \textbf{RoBAS--Full} 
          & 0.900 {\scriptsize $\pm$ 0.001}    & 0.900 {\scriptsize $\pm$ 0.001}
          & 0.900 {\scriptsize $\pm$ 0.001} & 0.902 {\scriptsize $\pm$ 0.001} \\

          \rowcolor{lightgray} \cellcolor{white} & \cellcolor{white}  \textbf{RoBAS--EB} 
          & 0.900 {\scriptsize $\pm$ 0.001}    & 0.900 {\scriptsize $\pm$ 0.001}   
          & 0.900 {\scriptsize $\pm$ 0.001} & 0.902 {\scriptsize $\pm$ 0.001} \\          
          
        \bottomrule        
    \end{tabular}
    }
\end{table*}